\newtheorem{definition}{Definition}
\newtheorem{proposition}{Proposition}
\newtheorem{observation}{Observation}
\newtheorem{theorem}{Theorem}
\newtheorem{lemma}{Lemma}
\newtheorem{problem}{Problem}
\theoremstyle{definition}
\newtheorem{example}{Example}
\newcommand{\Omit}[1]{}   
\newcommand{\fj}[1]{}   
\newcolumntype{L}[1]{>{\raggedright\let\newline\\\arraybackslash\hspace{0pt}}m{#1}}
\newcolumntype{C}[1]{>{\centering\let\newline\\\arraybackslash\hspace{0pt}}m{#1}}
\newcolumntype{R}[1]{>{\raggedleft\let\newline\\\arraybackslash\hspace{0pt}}m{#1}}
\newcommand{\fjl}[1]{}   
\newcommand{\wone}{{\mathrm{W[1]}}}
\newcommand{\fpt}{{\mathrm{FPT}}}
\newcommand{\harmonic}{{\mathrm{H}}}
\newcommand{\calA}{{{\mathcal{A}}}}
\newcommand{\argmax}{{{\mathrm{argmax}}}}
\newcommand{\calR}{{{\mathcal{R}}}}
\newcommand{\integers}{{{\mathbb{Z}}}}
\newcommand{\naturals}{{{\mathbb{N}}}}
\newcommand{\reals}{{{\mathbb{N}}}}
\newcommand{\rationals}{{{\mathbb{Q}}}}
\newtheorem*{rep@theorem}{\rep@title}
\newcommand{\newreptheorem}[2]{%
\newenvironment{rep#1}[1]{%
 \def\rep@title{\autoref{##1}}%
 \begin{rep@theorem}}%
 {\end{rep@theorem}}}
\title{Multi-Attribute Proportional Representation\footnote{The preliminary version of this paper was presented at the \emph{30th Conference on Artificial Intelligence (AAAI-2016)}.}}
\date{}
\author{
J\'{e}r\^{o}me Lang\\
       {Universit\'e Paris-Dauphine}\\
       {Paris, France}\\
\and
Piotr Skowron\\
	University of Warsaw\\
	Warsaw, Poland\\
}
\begin{document}

\maketitle

\begin{abstract}
We consider the following problem in which a given number of items has to be chosen from a predefined set. Each item is described by a vector of attributes and for each attribute there is a desired distribution that the selected set should have.
We look for a set that fits as much as possible the desired distributions on all attributes.
An example of application is the choice of members for a representative committee, where candidates are described by attributes such as gender, age and profession, and where we look for a committee that for each attribute offers a certain representation, i.e., a single committee that contains a certain number of young and old people, certain number of men and women, certain number of people with different professions, etc. Another example of application is the selection of a common set of items to be used by a group of users, where items are labelled by attribute values. 
With a single attribute the problem collapses to the apportionment problem for party-list proportional representation systems (in such a case the value of the single attribute would be a political affiliation of a candidate).
We study the properties of the associated subset selection rules, as well as their computational complexity.
\end{abstract}

\section{Introduction}\label{intro}

Consider the following example.
A research department has to choose $k$ members for a recruiting committee. A selected committee should be gender-balanced, ideally containing 50\% of male and 50\% of female. Additionally, a committee should represent different research areas in certain proportions: ideally it should contain 55\% of researchers specialising in area $1$, 25\% of experts in area $2$, and 20\% in area $3$. Another requirement is that the committee should contain 30\% of junior and 70\% of senior researchers, and finally, the repartition between local and external members should be kept in proportions 30\% to 70 \%. The pool of candidates from which the department can select members of such a committee is the following:

\begin{center}
\begin{tabular}{c|cccc}
Name & Gender & Group & Age & Affiliation\\ \hline
Ann & $F$ & $1$ & $J$ & $L$\\
Bob & $M$ & $1$ & $J$ & $E$\\
Charlie & $M$ & $1$ & $S$ & $L$ \\
Donna & $F$ & $2$ & $S$ & $E$\\
Ernest & $M$ & $1$ & $S$ & $L$\\
George & $M$ & $1$ & $S$ & $E$\\
Helena & $F$ & $2$ & $S$ & $E$\\
John & $M$ & $2$ & $J$ & $E$\\
Kevin & $M$ & $3$ & $J$ & $E$\\
Laura & $F$ & $3$ & $J$ & $L$\\
\end{tabular}
\end{center}

In the given example, if the department wants to select $k = 3$ members, then it is easy to see that there exists no committee that would satisfy all the criteria perfectly. Nevertheless, some committees are better than others: intuitively we feel that in the selected committee the ratio of the numbers of members representing different genders should be either equal to 2:1 or to 1:2, the ratio of the numbers of members representing areas $1$, $2$ and $3$, should be equal to 2:1:0. Further, the selected committee should contain one junior and two senior members, and exactly one member of the selected committee should have local affiliation. Such relaxed criteria can be achieved by selecting Ann, Donna, and George. Now, let us consider the above example for the case when $k = 4$. In such a case, the ideal ratios between the numbers of members for each of the four attributes should be equal to 1:1, 2:1:1, 1:3, and 1:3, respectively. Observe, however, that there exists no committee satisfying such relaxed criteria. According to different criteria, in this case the best committee can be for instance \{Ann, Charlie, Donna, George\}, with two externals instead of three, or \{Charles, Donna,  George, Kevin\}, with males being over-represented. 

In this paper we formalise the intuition given in the above example and we define what it means for a committee to be optimal, with respect to multi-attribute proportional representation. In our approach we leverage classical tools from political and social sciences, in particular we adapt the concept of \emph{proportional apportionment} from the political science literature~\cite{Balinski01} to the case of multiple attributes. The central question of the apportionment problem is how to distribute parliament seats between political parties, given the numbers of votes cast for each party. Indeed, we can consider our multi-attribute problem, with the single attribute being a political affiliation of a candidate, and the desired distributions being the proportions of votes cast for different parties. In such a case we can see that selecting a committee in our multi-attribute proportional representation system boils down to selecting a parliament according to some apportionment criterion.

To emphasise the analogy between our model and the apportionment methods, we should provide some discussion on where the desired proportions for attributes come from.
Typically, but not always, they come from {\em votes}. For instance, each voter might give her preferred value for each attribute, and the ideal proportions coincide with the observed frequencies. For instance, out of 20 voters, 10 would have voted for a male and 10 for a female, 13 for a young person and 7 for a senior one, etc.\footnote{How to aggregate in a consistent way ideal proportions specified by different voters is a nontrivial problem addressed in \cite{ConitzerFBL16}.}  
 It is worth mentioning that the voters might cast approval ballots, that is for each attribute they might define a set of approved values rather than pointing out the single most preferred one. On the other hand, sometimes, instead of votes, there are ``global'' preferences on the composition of the committee, expressed directly by the group, imposed by law, or by other constraints that should be respected as much as possible independently of voters' preferences.

There is a variety of apportionment methods considered in the literature
(we refer the reader to the survey of Balinski and Young~\cite{Balinski01}). They are evaluated by means of properties; among those that are deemed important and have been extensively studied in the literature, we find 
 \emph{non-reversal}, \emph{respect of quota}, \emph{population monotonicity}, and \emph{house monotonicity} (see~\cite{balinski1979criteria}).
We define the analogs of these properties for the multi-attribute domain. These properties give us some insights into the nature of multi-attribute committee selection mechanisms; in particular, their analysis allows us to view certain selection methods as generalisations of the appropriate apportionment rules. Specifically, following this approach, in this paper we define multi-attribute variants of the Hamilton rule and of the d'Hondt rule of apportionment, hereinafter referred to as the \emph{multi-attribute Hamilton rule} and the \emph{multi-attribute d'Hondt rule}.

The multi-attribute case, however, is also substantially different from the single-attribute one. In particular, multi-attribute proportional representation systems exhibit computational problems that do not appear in the single-attribute setting. Indeed, in the second part of our paper we show that finding an optimal committee is often {\sf NP}-hard. However, we show that this challenge can be addressed by designing efficient approximation and fixed-parameter tractable algorithms. In particular, the core technical contribution of this paper lies in the analysis of approximation guarantees provided by the local-search algorithm for the problem of finding an optimal committee, with respect to a certain measure of multi-attribute proportional representation.

We believe that the model formalised in this paper has broad applications. As an example, consider a political system where the voters do not vote for the candidates directly, but rather for their opinions on various issues. For instance, quoting Lang and Xia~\cite{LangXia15}, in 2012, voters in California had to decide in simultaneous multiple referenda whether to adopt each of the given eleven propositions~\footnote{http://en.wikipedia.org/wiki/California\_elections,\_November\_ 2012}; a similar vote also took place in Florida. Given that the voters vote on propositions, our algorithms can be used to find a set of candidates that, in some sense, best represents opinions of voters about propositions. The number of propositions can be even larger: for instance, political parties have usually quite elaborate programs in which they refer to tens or hundreds of issues.

Further, our algorithms can be useful for selecting diversified groups of people. For instance, assume that our goal is to prepare an advertisement campaign. In such a case it is often desirable to depute this task to a team where men, women, people with different age and different education level are well represented. Similarly, when we select a jury we would like it to be representative according to different criteria, such as ethnicity, gender, age, religious beliefs, education level and wealth. Admitting PhD students is another example, where we would like to have a diversity with respect to ethnicity, gender, nationality, but also with respect to skills, education background, or disciplines of interest.

As another example, consider a library offering a set of movies to buy. In ImDB\footnote{http://www.imdb.com/} movies can be described by many attributes, such as genre, country, language, year, actors, directors, awards, etc. Users often look for movies by their attributes. Our algorithms can help such library to find a representative collection of movies that fits the collective will as much as possible. Finding a representative collective set of attribute-value items can be also used as a tool for implementing {\em group recommendations}~\cite{ARCDY09}, where the goal is to recommend a set of items for a group of agents, based on their (possibly conflicting) preferences: in some recent approaches to  group recommendation (see \cite{GarciaPSO12} for a survey and comparison of four approaches), each item is seen as a set of features, users' preferences over features are elicited, and the aim of the system is to suggest a few representative items, such as set of movies or a set of tourist activities that comply as much as possible with the users' preferences over features. 

This paper is organised as follows. In \autoref{sec:apportionment} we recall some useful concepts and definitions relating to methods of apportionment.
We present our model in \autoref{model} and in \autoref{sec:multiAttributeRules} we introduce two different optimisation criteria and define
multi-attribute committee selection rules optimising these criteria. 
In \autoref{computation} we show that, although computation of optimal committees is generally {\sf NP}-hard, 
there exist good approximation and fixed-parameter tractable algorithms for finding them. 
We position our work with respect to related areas in \autoref{related}.
In \autoref{discussion} we give a detailed discussion on the model and some of its possible extensions.
Finally, in \autoref{conclu} we conclude and point to further research issues.

\section{Preliminaries: Methods of Apportionment}\label{sec:apportionment}
For each integer $i \in \naturals$, by $[i]$ we denote the set of the first $i$ natural numbers, $[i] = \{1, \ldots, i\}$.

Consider a sequence of $t$ political parties, denoted as $P_1, \ldots, P_t$. For each $i \in [t]$, let $v_i$ denote the number of votes given to party $P_i$.
An apportionment rule is a method that given a distribution of votes among parties, $v = (v_1, \ldots, v_t)$ where $v_i$ denotes the number of votes cast for party $P_i$, and the number of seats $h$ (the size of the house), returns a distribution of the $h$ seats among the $t$ parties.
We denote the number of seats allocated to party $P_i$ by $r_i$.

As is often the case in social choice, ties may occur and we have to choose between resoluteness and neutrality between parties: a resolute apportionment rule returns a single solution by sacrificing neutrality in case a tie occurs, and an irresolute apportionment rule returns all tied apportionments. In the rest of the paper we focus on resolute rules, and assume that ties are broken by an exogenous priority relation between parties. All our results are easily adaptable to irresolute rules.  

Formally, an apportionment rule is a function $\calA\colon \naturals^t \times \naturals \to \naturals^t$ that for each $v \in \naturals^t$ and each $h \in \naturals$ 
returns a vector $\calA(v, h) = (r_1, \ldots, r_t)$ satisfying the following two conditions:
\begin{inparaenum}[(i)]
\item $\sum_{i \in [t]} r_i = h$,
\item $r_i \in \naturals \cup \{0\}$ for each $i \in [t]$.
\end{inparaenum}
We will use the symbol $v_{+}$ to denote the sum of all votes, $v_{+} = \sum_{i=1}^t v_i$. 

There are numerous apportionment rules considered in the literature. The two most commonly-used classes of apportionment rules are the {\em largest remainder} and the {\em divisor} methods \cite{Balinski01}, which we briefly describe below.  

\subsection{Largest Remainder Methods} 

The following definition describes one of the most prominent classes of apportionment methods.

\begin{definition}[The largest remainder methods.]
Let $q \in \rationals$ be a rational number. The largest remainder method with quota $q$ works in two steps. In the first step, each party $P_i$ is allocated $\lfloor \nicefrac{v_i}{q} \rfloor$ seats (the quota value must be chosen in such a way that the number of seats allocated in the first step is guaranteed to be between $h-t$ and 
$h$). In the second step, the remaining seats are allocated to the parties so that each party is allocated either one or zero additional seats. The parties which are allocated an additional seat are the ones with the largest values of the remainders $\nicefrac{v_i}{q} - \lfloor \nicefrac{v_i}{q} \rfloor$ (using the tie-breaking priority relation if necessary).
\end{definition}

The most common choice of a quota is the {\em Hare quota}, defined as $q_{\mathrm{Hare}} = \nicefrac{v_{+}}{h}$; the method based on the Hare quota is called the {\em Hamilton method} (also known as the \emph{largest remainder method} or \emph{Hare-Niemeyer method}).\footnote{Other common choices are the {\em Droop quota} $1+\frac{v_{+}}{1+h}$, the {\em  Hagenbach-Bischoff quota} $\frac{v_{+}}{1+h}$ and the {\em Imperiali quota} $\frac{v_{+}}{2+h}$.} The Hamilton method was one of the first methods used in the contemporary democracies. Its definition dates back to the 18th century and it was first used to select the members of the U.S. House of Representatives between 1852 and 1900. Currently, with slight modifications, it is used in parliamentary elections in Russia, Ukraine, Tunisia, Namibia, and Hong Kong. Below we provide an example illustrating the Hamilton method.

\begin{example} \label{ex1}
Consider the instance with four parties and $100$ voters. Assume that 4, 12, 33, and 51 votes were cast for parties $P_1$, $P_2$, $P_3$, and $P_4$, respectively. Let us set $h = 10$, thus $q_{\mathrm{Hare}} = \nicefrac{v_{+}}{h} = 10$. In the first step the parties $P_1$, $P_2$, $P_3$, and $P_4$ are allocated 0, 1, 3, and 5 seats, respectively. The remainders for the four parties equal to $\nicefrac{4}{10}$, $\nicefrac{2}{10}$, $\nicefrac{3}{10}$, and $\nicefrac{1}{10}$, respectively. In the second step, the single remaining seat goes to the party with the highest reminder, i.e., to $P_1$. Consequently, the allocation of the seats returned by the Hamilton method is given by the vector $(1, 1, 3, 5)$. \qed
\end{example}

\subsection{Divisor Methods} 

Divisor methods (also known as \emph{highest average methods}) constitute another class of common and important apportionment methods.

\begin{definition}[Divisor methods.]
Let $d = (d_1, d_2, \ldots)$ be a nondecreasing sequence of positive values. The divisor method defined by sequence $d$ starts with an empty allocation $(0, \ldots, 0)$, and in each of the $h$ consecutive steps assigns one additional seat to some party. Let $s_i(j)$ denote the number of seats allocated to party $P_j$ just before step $i$. In the $i$-th step the party $P_j$ with the highest ratio $\nicefrac{v_j}{d_{s_i(j)+1}}$ is allocated an additional seat (using the tie-breaking priority relation if necessary). We denote this party as 
$A(v,h,i)$.
\end{definition}

The most commonly used sequences of divisors are $d_{\mathrm{DHondt}} = (1, 2, 3, \ldots)$ and $d_{\mathrm{SL}} = (1, 3, 5, \ldots)$. The divisor method based on the sequence $d_{\mathrm{DHondt}}$ is called the \emph{d'Hondt method} (it is also known as the \emph{Jefferson method} or the \emph{Hagenbach--Bischoff method}). The definition of the d'Hondt method dates back to the 18th century as well, and it is currently used for apportionment in more than 40 countries. The divisor method based on the sequence $d_{\mathrm{SL}}$ is known as the Sainte-Lagu\"e method (sometimes referred to as the \emph{Webster method}, \emph{Schepers method}, or the \emph{method of major fractions}) and is currently used in several countries.

\begin{example} \label{ex2}
Consider the instance from \autoref{ex1}. The below table shows the computation of the d'Hondt method. In the $i$-th iteration the $i$-th highest value from the table is selected and a seat is allocated to the party that corresponds to this value. For instance, the first seat will be allocated to party $P_4$, which corresponds to the highest value of $51$. The highest $10$ values are shown in bold font: these are the values that correspond to the $10$ seats allocated to parties. 
\begin{align*}
\begin{array}{c|cccc}
  & v_1  & v_2  & v_3  & v_4  \\
  \hline 
\nicefrac{v_i}{1} & 4 & \mathbf{12} & \mathbf{33} & \mathbf{51} \\
\nicefrac{v_i}{2} & 2 & 6 & \mathbf{16.5} & \mathbf{25.5} \\
\nicefrac{v_i}{3} & 1.33 & 4 & \mathbf{11} & \mathbf{17} \\
\nicefrac{v_i}{4} & 1 & 3 & 8.25 & \mathbf{12.75} \\
\nicefrac{v_i}{5} & 0.8 & 2.4 & 6.6 & \mathbf{10.2} \\
\nicefrac{v_i}{6} & 0.66 & 2 & 5.5 & \mathbf{8.5} \\
\nicefrac{v_i}{7} & 0.57 & 1.71 & 4.71 & 7.28
\end{array}
\end{align*}
According to the d'Hondt method the following parties will be allocated consecutive seats: we start by giving a seat to $P_4$ (that is, $A(v,10,1) = P_4)$, because the largest value in the table is $\nicefrac{v_4}{1} = 51$; then a seat to $P_3$, because the second largest value is $\nicefrac{v_3}{1} = 33$; then a second seat to $P_4$, because  the third largest value is $\nicefrac{v_4}{2} = 25.5$; then a third seat to $P_4$, and then $P_3$, $P_4$, $P_2$, $P_3$, $P_4$, $P_4$. In the end, parties $P_1$, $P_2$, $P_3$, and $P_4$ will get 0, 1, 3, and 6 seats, respectively. \qed
\end{example}

\subsection{Properties of Methods of Apportionment}\label{sec:apportionmentProperties}

Several properties of apportionment methods have been studied, starting with Balinski and Young~\cite{balinski1979criteria}. Below, we recall the definitions of the several of them, which will be useful in our further discussion. Recall that $v$ denotes the vector of votes, $v_{+}$ denotes the total number of all votes, $h$ is the number of available seats and that $(r_1, \ldots, r_t) = \calA(v, h)$.
\begin{description}
\item[Non-reversal.] The rule $\calA$ is said to satisfy \emph{non-reversal} if for each parties $P_i$, $P_j$, $r_i \geq r_j$ holds whenever $v_i > v_j$.

\item[Respect of quota.] The rule $\calA$ is said to \emph{respect quota} if for each party $P_i$ it holds that $\lfloor \nicefrac{v_ih}{v_+} \rfloor \leq r_i \leq \lceil \nicefrac{v_ih}{v_+} \rceil$.

\item[Party population monotonicity.] Consider two vectors of votes $v = (v_1, \ldots, v_t)$ and $v' = (v_1', \ldots, v_t')$ and a party $P_i$ such that:
\begin{inparaenum}[(i)]
\item $\nicefrac{v_i}{v_+} > \nicefrac{v_i'}{v_+'}$, and
\item $\nicefrac{v_j}{v_{\ell}} = \nicefrac{v_j'}{v_{\ell}'}$ for each $j, \ell \neq i$.
\end{inparaenum}
The rule $\calA$ satisfies \emph{party population monotonicity} if for each such vectors of votes $v$ and $v'$, it holds that $r_i \geq r_i'$, where $(r_1, \ldots, r_t) = \calA(v, h)$ and $(r_1', \ldots, r_t') = \calA(v', h)$. In other words, if the relative number of votes of a party increases {\em ceteris paribus}, then this party cannot receive less seats. Conditions (i) and (ii) are satisfied in particular if $v$ is obtained from $v'$ by adding more votes for $P_i$.

\item[House monotonicity.] The rule $\calA$ satisfies house monotonicity if for each two numbers of available seats, $h$ and $h'$, with $h' > h$, and for each party $P_i$ it holds that $r_i' \geq r_i$, where $(r_1, \ldots, r_t) = \calA(v, h)$ and $(r_1', \ldots, r_t') = \calA(v, h')$. 
\end{description}

The property that we call party population monotonicity is sometimes called population monotonicity (for instance, this is often the case in the literature on fair allocation, and sometimes in the literature on apportionment~\cite{RePEc:hal:pseose:halshs-00879779}). However, most commonly in the context of apportionment, the term ``population monotonicity'' is used to refer to a stronger property, which covers cases when voters migrate between parties~\cite{balinski1979criteria} (intuitively, party population monotonicity describes only cases when the population of one party grows while the populations of others remain unchanged). In particular, it is known that only divisor methods satisfy population monotonicity~\cite{Balinski01}.
Party population monotonicity is more interesting for our study, since we will show that it is satisfied by the Hamilton method, and so this property will be useful in understanding the relation between the Hamilton method and its multi-attribute counterpart that we introduce in this paper.  

One assumption that is often implicitly made in the analysis of the apportionment methods is that each party has at least $h$ members, i.e., that there will always be enough candidates in each party to be given the allocated seats. This assumption will be very relevant in our further discussion. We will refer to it as to the \emph{full supply property}.  
It is commonly known that under full supply property the Hamilton method satisfies non-reversal and respect of quota, and that it fails house monotonicity (this failure of house monotonicity is better known under the name {\em Alabama paradox}). It is also known that the Hamilton method fails population monotonicity, and we will show that it satisfies its weaker variant---the party population monotonicity. On the other hand, the d'Hondt method satisfies all four properties except the respect of quota.

\begin{proposition}\label{prop:hamilton_party_population}
Under full supply property the Hamilton method satisfies party population monotonicity.
\end{proposition}
\noindent The proof of \autoref{prop:hamilton_party_population}, as all proofs omitted from the main text, is relegated to the appendix.\medskip

We note that there are also other properties of the apportionment methods considered in the literature, such as consistency, or the properties that deal with strategyproofness issues, such as resistance to party merging or to party splitting. We selected the above four properties for our analysis as the most basic ones, and perhaps the most often referred to in the literature. Moreover, as we shall soon see, they are relevant for our multi-attribute generalisation of apportionment, which does not seem to be the case for other properties listed above.

\section{The Multi-Attribute Model}\label{model}

In this section we give a formal description of our model and discuss its specific elements. We explain that our model can be viewed as a generalisation of the apportionment setting to the case of multiple attributes and we discuss how the properties of the apportionment methods from \autoref{sec:apportionmentProperties} can be formulated in such a generalised model.

\subsection{The Formal Setting}

Let $X = \{X_1, \ldots, X_p\}$ be a set of $p$ {\em attributes}, each with a finite domain $D_i = \{x_i^1, \ldots, x_i^{q_i}\}$. We say that $X_i$ is binary if $|D_i| = 2$. We set $D = D_1 \times \ldots \times D_p$, and let $C = \{c_1, \ldots, c_m\}$ be a set of \emph{candidates} (also referred to as {\em items}); $C$ is also referred to as the {\em candidate database}. Each candidate $c_i$ is represented as a vector of attribute values $(X_1(c_i) , \ldots, X_p(c_i)) \in D$.\footnote{By writing $X_j(c_i)$, we slightly abuse notation, that is, we consider $X_j$ both as an attribute name and as a function that maps a candidate to an attribute value, yet this will not lead to any ambiguity.}

For each $i \in [p]$, by $\pi_i$ we denote a \emph{target distribution} $\pi_i = (\pi_i^1, \ldots, \pi_i^{q_i})$ with $\sum_{j = 1}^{q_i} \pi_i^j = 1$. We set $\pi = (\pi_1, \ldots, \pi_p)$.
Typically, $n$ voters have cast a ballot expressing their preferred value on every attribute $X_i$, and $\pi_i^j$ is the fraction of voters who have $x_i^j$ as their preferred value for $X_i$, but the results presented in the paper are independent from where the values $\pi_i^j$ come from (see the discussion in the introduction). 

A \emph{multi-attribute committee selection rule}\footnote{We will stick to the terminology ``committee'' although the meaning of subsets of candidates has sometimes nothing to do with electing a committee.} is a function $\calR$ that for each database of candidates $C$, each vector of target distributions $\pi$ and each committee size $k \in [m]$, select a set of $k$ candidates $\calR(C,\pi,k)$ from $C$.\footnote{Observe that the outputs of the committee selection rules for the single-attribute and the multi-attribute cases are different. In the single-attribute model the rule returns a vector while in the multi-attribute one, it returns a set of $k$ candidates. This is because in the multi-attribute case it is often not reasonable to assume the full supply property, hence the exact structure of the candidate database is important---in particular, for our problem, it will be important which candidates exist in the database (and then which ones will be selected from it).} 
Again, we focus on the {\em resolute} version of such rules, using a tie-breaking mechanism whenever necessary.

Intuitively, a good multi-attribute committee selection rule should select such candidates that the distribution of attribute values in the selected set is as close as possible to $\pi$.
Let $S_k(C)$ denote the set of all subsets of $C$ of cardinality $k$. Given $A \in S_k(C)$, the {\em representation vector} for $A$ is defined as
$r(A) = \left( r_1(A), \ldots, r_p(A) \right)$, where $r_i(A) = (r_i^j(A) \mid j \in [q_i])$ for each $i \in [p]$, and $r_i^j(A) = \frac{|\{c \in A \mid X_i(c) = x_i^j\}|}{k}$.

\begin{example}
Consider the example from the introduction. In this example there are $p = 4$ four attributes, $X = \{\text{Gender}, \text{Group}, \text{Age}, \text{Affiliation}\}$. There are two possible values for the attribute ``Gender'', thus $D_1 = \{F, M\}$ ($x_1^1 = F$ and $x_1^2 = M$). Similarly, $D_2 = \{A, B, C\}$, with $x_2^1 = A$, $x_2^2 = B$, and $x_2^3 = C$, etc. For example, Ann can be represented in our model as a tuple $(F, A, J, L)$. The target distributions used in the introduction can be formulated in our model as:
\begin{align*}
\pi_1^1 = 0.5 & & \pi_2^1 = 0.55  & & \pi_3^1 = 0.3 & & \pi_4^1 = 0.3 \\
\pi_1^2 = 0.5 & & \pi_2^2 = 0.25  & & \pi_3^2 = 0.7 & & \pi_4^2 = 0.7 \\
              & & \pi_2^3 = 0.2   & &               & & 
\end{align*}
Consider a three-element committee $A = \{\text{Ann}, \text{Donna}, \text{George}\}$. For this committee, the values in appropriate representation vectors are the following:
\begin{align*}
r_1^1(A) = \nicefrac{2}{3} & & r_2^1(A) = \nicefrac{2}{3}  & & r_3^1(A) = \nicefrac{1}{3} & & r_4^1(A) = \nicefrac{1}{3} \\
r_1^2(A) = \nicefrac{1}{3} & & r_2^2(A) = \nicefrac{1}{3}  & & r_3^2(A) = \nicefrac{2}{3} & & r_4^2(A) = \nicefrac{2}{3} \\
                           & & r_2^3(A) = 0                & &                            & & 
\end{align*} \qed
\end{example}

\noindent
The following definition formalises our intuition regarding which committees are considered ideal. 

\begin{definition}\label{def:perfect_committee}
A committee 
$A \in S_k(C)$ is {\em perfect} for $\pi$ if $r_i(A) = \pi_i$ for all $i$.
\end{definition}

Thus, a perfect committee matches exactly the target distributions. Clearly, there is no perfect committee if for some $i, j$, $\pi_i^j$ is not an integer multiple of  $\nicefrac{1}{k}$. In some of our results we will focus on target distributions such that for each $i, j$ the value $k\pi_i^j$ is an integer.
We will refer to such target distributions as to {\em natural} distributions. Further, we see that the number of possible combinations of the attribute values grows exponentially with the number of nontrivial attributes (attributes which have at least two values). Consequently, even for natural distributions, finding a perfect committee cannot always be possible, simply because there are not sufficiently many appropriate candidates in the database. This observation suggests that the multi-attribute analog of the full supply property might be harder to satisfy in the multi-attribute setting (especially when the number of attributes is large).
Below, we generalise the definition of the full supply property to the case of multiple attributes.  

\begin{definition}
A candidate database $C$ satisfies the {\em full supply property} with respect to $k$ if for any $\vec x \in D$ there are at least $k$ candidates in $C$ who have the value of each attribute $X_i$ equal to $\vec x[i]$. 
\end{definition}

An alternative interpretation of the full supply property is that it is always possible to {\em create} any number of candidates corresponding to a specific vector of attribute values.
 
While in some cases, specifically when the number of attributes is very low and the number of candidates is very large, it is reasonable to expect that the database will satisfy the full supply property, it is a much less realistic assumption when the number of attributes is large and/or when the database is small. In this paper we show how to deal with such cases, and in particular, how to extend two standard methods of apportionment, the Hamilton method and the d'Hondt method, to the case of multiple attributes and to the case where the full supply property is violated. Nevertheless, the analysis of the full supply property for multiple attributes can also give us some interesting insights into the nature of the analysed multi-attribute committee selection rules, and in particular, it can allow us to view some of these rules as extensions of the classic methods of apportionment.

We can observe that there exists a straightforward polynomial-time algorithm for checking if the candidate database satisfies the full supply property. Indeed, first we need to check if the size of the database is at least equal to $k|D|$, and if this is the case, we should additionally check if for each $\vec x \in D$ there exists at least $k$ appropriate candidates in the database.

\subsection{Properties of Multi-Attribute Committee Selection Rules}

We now generalise the properties discussed in \autoref{sec:apportionmentProperties} to multi-attribute committee selection rules (which we simply refer to as ``rules'').

\begin{description}
\item[Non-reversal.] Intuitively, non-reversal says that if one value $x$ of a certain attribute has a target value higher than that of another value $y$, then $x$ should be represented in the resulting committee at least as well as $y$ (in short, values that deserve more get more). Formally, a rule $\calR$ satisfies non-reversal if for each triple $(C, \pi, k)$, if $\calR(C, \pi, k) = A$, then for all $i \in [p]$ and $j,j' \in [q_i]$,
$\pi_i^j > \pi_i^{j'}$ implies $r_i^j(A) \geq r_i^{j'}(A)$.

\item[Respect of quota.] A rule respects quota if its results match target distributions ``almost''  exactly, i.e., if they are allowed not to match them exactly, only because of the rounding issues. Formally, a rule $\calR$ respects quota if for each triple $(C, \pi, k)$, if $\calR(C,\pi,k) = A$, then for all $i \in [p]$ and $j \in [q_i]$, either $r_i^j(A) = \lfloor k \pi_i^j \rfloor$ or $r_i^j(A) = \lceil k \pi_i^j \rceil$.

\item[Value monotonicity (with respect to attribute ${\bf X_i}$).] Informally, value monotonicity says that if we increase the demand for a certain value $x$ of a certain attribute $X_i$ while not changing other demands, then in the new committee $x$ should be at least as well represented as in the old one. Formally, consider a candidate database $C$, an integer $k$, and two vectors of target distributions $\pi$ and $\rho$, such that there exist $i, j$ with:
\begin{inparaenum}[(i)]
\item $\pi_i^j > \rho_i^j$, 
\item $\nicefrac{\pi_i^{j''}}{\pi_i^{j'}} = \nicefrac{\rho_i^{j''}}{\rho_i^{j'}}$ for all $j', j'' \neq j$, and 
\item $\rho_{i'}^j = \pi_{i'}^j$ for all $i' \neq i$ and all $j \in [q_{i'}]$. 
\end{inparaenum}
$\calR$ satisfies value monotonicity with respect to attribute $X_i$ if for each such $C$, $k$, $\pi$ and $\rho$, if $\calR(C,k,\pi) = A$ and $\calR(C,k,\rho) = B$, then
$r_i^j (A)\geq r_i^j(B)$.

\item[House monotonicity.] Informally, house monotonicity says that if we increase the number of available seats, then in the new committee each value of each attribute will be at least as well represented as in the old one. Formally, a rule $\calR$ satisfies house monotonicity if for any candidate database $C$, target distribution $\pi$, and two integers $k$ and $k'$ with $k' > k$, if $\calR(C,\pi,k) = A$ and $\calR(C,\pi,k') = B$ then $r_i^j (B) \geq r_i^j(A)$ holds for all $i, j$.
\end{description}

\noindent
Clearly, these four definitions generalise the classical definitions of the properties when there is a single attribute (in particular, value monotonicity generalises party population monotonicity). 
Even though there could be other generalisations, those are arguably natural ones. 
In what follows, we will use these properties to argue that a certain class of multi-attribute committee selection rules can be viewed as extensions of the appropriate methods of apportionment.

\section{Multi-Attribute Extensions of Methods of Apportionment}\label{sec:multiAttributeRules}

As we argued in the previous section, finding perfect committees in many cases might not be feasible, either because the target distributions are not natural, or because the candidate database does not satisfy the full supply property.  These two observations lead us to define two metrics measuring how well a committee fits a target distribution. These two metrics induce two different methods of finding committees, which can be viewed as extensions of the Hamilton rule and of the d'Hondt rule to the multi-attribute domains. Other metrics will be briefly considered in \autoref{discussion-metric}.

\subsection{Multi-Attribute Hamilton Rule}\label{sec:multiAttributeHamiltonRules}

We start by defining what be believe to be the most natural metric measuring how close a given committee is to a target distribution and by arguing that such metric induces a rule which can be viewed as an extension of the Hamilton method: the $L^1$ metric. 

\begin{definition}[Multi-Attribute Hamilton Rule]\label{def:multiAttrHamilton}
The multi-attribute Hamilton rule is the function $\calR_{\mathrm{H}}$ that given a candidate database $C$, a vector of target distributions $\pi$, and an integer $k$, finds a committee $A \in S_k(C)$ minimising $\sum_{i, j} |r_i^j(A) -  \pi_i^j|$.
\end{definition}

\noindent
In other words, Multi-Attribute Hamilton Rule minimises the total variation distance between $\nicefrac{\pi}{p}$ and $\nicefrac{r}{p}$ viewed as probability distributions (we normalise $\pi$ and $r$, multiplying them by $\nicefrac{1}{p}$, so that their sums are equal to $1$, and so that they could be viewed as probability distributions).
The above definition is illustrated by the following example.

\begin{example}\label{running}
For the example from the introduction, we have $X$ = \{Gender, Group, Age, Affiliation\}, $D = \{F,M\} \times \{A,B,C\} \times \{J,S\} \times \{L,E\}$, and $X_1(\mathrm{Ann}) = F$, $X_1(\mathrm{Bob}) = M$ etc. Further, we have $\pi_1 = (0.5, 0.5)$, $\pi_2 = (0.55, 0.25, 0.2)$, $\pi_3 = (0.3, 0.7)$, and $\pi_4 = (0.3, 0.7)$.
For $k=4$, there are eight different committees which minimise our expression---let us show the calculation for one of them: $A = \{\mathrm{Ernest, George, Helena, Laura}\}$. We have $\sum_{i, j} |r_i^j(A) -  \pi_i^j| = 0 + 0 + 0.05 + 0 + 0.05 + 0.05 + 0.05 + 0.2 + 0.2 = 0.6$. \qed
\end{example}

Now, let us argue that \autoref{def:multiAttrHamilton} can be viewed as an extension of the Hamilton method of apportionment. We start by considering the case of a single-attribute ($p = 1$). Without loss of generality, let us assume that the single attribute is a party affiliation. Moreover, let us for a moment assume the full supply property, i.e., that for each value $x_1^j$ there are at least $k$ candidates with value $x_1^j$ (this is typically the case in party-list elections; in fact we need to use this assumption only to ensure that the Hamilton method is well defined). After defining $\pi_1^i = \frac{v_i}{v_{+}}$ for all $i$, we obtain the following result:

\begin{proposition}\label{prop:singleAttributeProperties}
When $p = 1$ and assuming there are at least $k$ candidates for each value of the unique attribute, then $\calR_{\mathrm{H}}$ coincides with the Hamilton apportionment rule.
\end{proposition}

\autoref{def:multiAttrHamilton} is inspired by the idea of {\em distance rationalisation of voting rules}~ (see \cite{elk-fal-sli:j:dr} for one of the most recent works on the topic). In distance rationalisation, one first defines the outcome of a voting rule for elections where there exist an obvious, non-controversial winning candidate; such elections are called \emph{consensus elections}. Second, in order to define an outcome of a voting rule for an arbitrary election $E$ we determine the closest consensus election $E'$ according to some distance (for instance, one can use the swap distance to measure the closeness between preferences of voters expressed as rankings over candidates), and we set the outcome of the rule for $E$ to the same set of winners as the outcome for $E'$. When viewed from this perspective, \autoref{prop:singleAttributeProperties} can be viewed as giving a distance rationalisation of the Hamilton rule---the consensus apportionments are those where there exist perfect committees, and the distance is the Manhattan (or $L^1$) distance.

Under the full supply assumption, a very similar result to \autoref{prop:singleAttributeProperties} holds for multiple attributes.

\begin{proposition}\label{prop:fs}
Consider a candidate database that satisfies the full supply property. For any attribute $X_i$, any committee $A$ that minimises $\sum_{i, j} |r_i^j(A) -  \pi_i^j|$ is a Hamilton committee for the single-attribute problem $(\{X_i\},D^{\downarrow X_i}, \pi_i, k)$, where  $D^{\downarrow X_i}$ is the projection of $D$ on $\{X_i\}$.
\end{proposition}

\begin{proof}
Let us fix an attribute $X_i$ and towards a contradiction, let us assume that there exists a committee $A$ that minimises $\sum_{i, j} |r_i^j(A) -  \pi_i^j|$ and that is not a Hamilton committee for the single-attribute problem $(\{X_i\},D^{\downarrow X_i}, \pi_i, k)$. By \autoref{prop:singleAttributeProperties} we infer that there exists another committee $B$ such that \mbox{$\sum_{j} |r_i^j(B) -  \pi_i^j| < \sum_{j} |r_i^j(A) -  \pi_i^j|$}. We will show that it is possible to construct a committee $D$ from $A$ and $B$ such that $\sum_{i, j} |r_i^j(D) -  \pi_i^j| < \sum_{i, j} |r_i^j(A) -  \pi_i^j|$. This will contradict the assumption that $A$ that minimises $\sum_{i, j} |r_i^j(A) -  \pi_i^j|$.

We construct $D$ as follows. We sort $A$ and $B$ in some fixed arbitrary way; let $A = \{a_1, \ldots, a_k\}$ and $B = \{b_1, \ldots, b_k\}$. For each $i\in [k]$ we take $a_i \in A$ and $b_i \in B$, and define $d_i$ as a candidate which has the value of the $i$-th attribute the same as $b_i$ and the value of all other attributes the same as $a_i$. We add $d_i$ to $D$; {\em the full supply assumption guarantees that it will always be possible to find such a candidate}. We have that:
\begin{align*}
\sum_{j} |r_i^j(D) -  \pi_i^j| = \sum_{j} |r_i^j(B) -  \pi_i^j| < \sum_{j} |r_i^j(A) -  \pi_i^j|
\end{align*}
and that for each $\ell \neq i$ it holds that:
\begin{align*}
\sum_{j} |r_{\ell}^j(D) -  \pi_{\ell}^j| = \sum_{j} |r_{\ell}^j(A) -  \pi_{\ell}^j| \text{.}
\end{align*}
Consequently, $\sum_{i, j} |r_i^j(D) -  \pi_i^j| < \sum_{i, j} |r_i^j(A) -  \pi_i^j|$, which gives a contradiction and completes the proof.
\end{proof}

Note that the construction given in the proof gives us a practical way of constructing an optimal committee under the full supply assumption.

Further, below we also show that under the full supply assumption, the multi-attribute Hamilton rule satisfies the same from the four (multi-attribute variants) of the properties considered in \autoref{sec:apportionmentProperties} as the Hamilton method of apportionment. We start by noticing that if a property fails to be satisfied in the single-attribute case, {\em a fortiori} it is not satisfied in the multi-attribute case.
As a consequence, house monotonicity is not satisfied, even under the full supply assumption. 

\begin{proposition}\label{prop:multiAttributeProperties}
Under the full supply assumption, non-reversal, respect of quota, and value monotonicity with respect to every attribute are all satisfied by the multi-attribute Hamilton rule.
In the general case, non-reversal, and respect of quota are not satisfied. If $X_i$ is a binary variable, then value monotonicity with respect to $X_i$ is satisfied; however it is not satisfied in the general case.
\end{proposition}

Importantly, if a perfect committee $A$ exists for $C$, $\pi$ and $k$, then $\calR_{\mathrm{H}} (C,\pi,k) = A$.

We close this subsection by a short discussion on the metric that is minimised in the definition of the multi-attribute Hamilton rule. It is perfectly reasonable to consider other metrics such as $\max_{i, j} |r_i^j(A) -  \pi_i^j|$ (max-max) or $\sum_{i}\max_{j} |r_i^j(A) -  \pi_i^j|$ (sum-max). In this paper we focus on $\sum_{i, j} |r_i^j(A) -  \pi_i^j|$ 
because we believe this is the most natural choice. For a discussion on other metrics we refer the reader to the conference version of this paper~\cite{conf/aaai/LangS16}. (Note that \autoref{prop:fs} does not hold with the max-max metric.) 

Finally, note that similar extensions to other largest remainder methods can be defined in the same way, after changing the value of the quota.  

\subsection{Multi-Attribute d'Hondt Rule}\label{sec:multiAttributeDHondtRules}

In this subsection we extend the idea implemented in the d'Hondt apportionment method to the multi-attribute setting. This generalisation would work for any other divisor method; for the sake of brevity, and also because the d'Hondt apportionment method is the most often used divisor method, throughout the rest of the paper we focus on this specific method. We first observe that the result of the d'Hondt apportionment can be formulated equivalently, as the solution to an optimisation problem. 

\begin{lemma}\label{lem:dHondtAlternativeFormulation}
Consider the apportionment problem and an allocation $r = (r_1, \ldots, r_t)$. If for each $i,j \in [t]$ it holds that $\nicefrac{v_i}{r_i+1} \leq \nicefrac{v_j}{r_j}$, then $r$ is a d'Hondt apportionment.
\end{lemma}
\begin{proof}
For the sake of contradiction let us assume that $r$ is not a d'Hondt apportionment. Let us run the d'Hondt method on $v$ (breaking ties arbitrarily), outputting $r^*$. Let $u$ be the last step such that $s_u(i) \leq r_i$ for all $i$: for some $i$ we have $s_{u+1}(i) = r_i + 1$ (such a step exists because $r^* \neq r$). 
By the pigeonhole principle, for some $j$ we have  $s_{u+1}(j) = s_{u}(j) < r_j$. 
By definition of the d'Hondt procedure at step $u$,  $ \frac{v_j}{s_u(j) + 1} \leq  \frac{v_i}{s_u(i)+1} = \frac{v_i}{r_i+1}$.
Since $s_u(j) < r_j$ we have $\frac{v_i}{r_i+1} \geq \frac{v_j}{s_u(j) + 1} > \frac{v_j}{r_j}$, which contradicts the condition in the statement of the lemma. 
\end{proof}

This leads us to the following equivalent formulation of the d'Hondt rule. The following proposition states an analogous result for the d'Hondt method as \autoref{prop:singleAttributeProperties} for the Hamilton rule. 

\begin{proposition}\label{prop:dhondt_alternative_definition}
In the classical apportionment setting, an allocation $(r_1, \ldots, r_t)$ maximises the value of the expression $\sum_{i \in [t]} \sum_{j \in [r_i]} \nicefrac{v_i}{j}$ if and only if it is one of the outcomes of the irresolute version of the d'Hondt apportionment rule.
\end{proposition}

\begin{proof}
Define $S(r,v) = \sum_{i \in [t]} \sum_{j \in [r_i]} \nicefrac{v_i}{j}$. Also, given an apportionment $r$ and two attribute values (parties) $i,j$ such that $r_i \neq k$ and $r_j \neq 0$, let $r[i^+ j^-]$ be the apportionment obtained from $i$ by giving one more seat to $i$ and one less to $j$ from $r$. 

First, we show that if allocation $r = (r_1, \ldots, r_t)$ maximises $S(r,v)$, then it is a d'Hondt apportionment for $v$. 
For the sake of contradiction let us assume that $r$ is not a d'Hondt apportionment. By \autoref{lem:dHondtAlternativeFormulation} we know that in such a case there exist two parties $i,j$ such that 
$\nicefrac{v_i}{r_i+1} > \nicefrac{v_j}{r_j}$. Let $r' = r[i^+ j^-]$. We have $S(r',v) =S(r,v) + \nicefrac{v_i}{r_i+1} - \nicefrac{v_j}{r_j} > S(r,v)$, therefore $r$ does not maximise $S(r,v)$.

Next, we prove that if $r$ is a d'Hondt apportionment for $v$, then it maximises $S(r,v)$. For the sake of contradiction let us assume that there exists $r'$ such that $S(r',v) > S(r,v)$. Let $r^{(0)} = r$. We define the following sequences of apportionments: for an integer $s \geq 0$, if $r^{(s)} \neq r'$ then, since $\sum_{i \in [t]} r^{(s)}_i = \sum_{i \in [t]} r_i' = k$,  there must exist two indices $i$ and $j$ such that $r_i^{(s)}  > r_i'$ and $r_j^{(s)}  < r_j'$. Let $r^{(s+1)} = r^{(s)}[j^+ i^-]$. We have $S(r^{(s+1)},v) - S(r^{(s)},v) = \nicefrac{v_j}{r_j^{(s)}+1} - \nicefrac{v_j}{r_i^{(s)}}$. Consider the step in the execution of the d'Hondt method when the $r_i$-th seat has been allocated to party $P_i$. At this step, party $P_j$ is allocated $x$ seats with $x \leq r_j \leq r_j'-1$. Since the d'Hondt method allocated the seat to party $P_i$ instead of $P_j$ it must hold that $\nicefrac{v_i}{r_i^{(s)}} \geq \nicefrac{v_i}{r_i} \geq \nicefrac{v_j}{x+1} \geq \nicefrac{v_j}{r_j+1}\geq \nicefrac{v_j}{r_j^{(s)}+1}$. Thus, each transformation does not increase the value of the expression. Yet, after a number of such transformations we reach $(r_1', \ldots, r_t')$ which has a higher value of the expression than $(r_1, \ldots, r_t)$. This gives a contradiction and completes the proof.

\end{proof}

Observe that $\sum_{j \in [r_i]} \nicefrac{v_i}{j} = v_i \harmonic(r_i)$, where $\harmonic(n) = \sum_{i=1}^n \nicefrac{1}{i}$ denotes the $n$-th harmonic number. 
\autoref{prop:dhondt_alternative_definition} leads us to the following extension of the d'Hondt method to multi-attribute scenarios.

\begin{definition}[Multi-Attribute d'Hondt Rule]\label{def:multiAttrDHondt}
The multi-attribute d'Hondt rule is the function $\calR_{\mathrm{dHondt}}$ that given a candidate database $C$, a vector of target distributions $\pi$, and an integer $k$, finds a committee $A \in S_k(C)$ maximising $\sum_{i, j} \pi_i^j\harmonic(r_i^j(A) \cdot k)$.
\end{definition}

Since for each $x \in \naturals$ we have that $\log(x + 1) \leq \harmonic(x) \leq \log(x + 1) + 1$, the maximisation of $\sum_{i, j} \pi_i^j\harmonic(r_i^j(A) \cdot k)$ is intuitively a very close objective to the maximisation of \mbox{$\sum_{i, j} \pi_i^j\log(r_i^j(A) \cdot k)$}, which is equivalent to the maximisation of \mbox{$\sum_{i, j} \pi_i^j\log\left(\nicefrac{r_i^j(A)}{\pi_i^j}\right)$}, and so, to the minimisation of \mbox{$\sum_{i, j} \pi_i^j\log\left(\nicefrac{\pi_i^j}{r_i^j(A)}\right)$}, which is the Kullback--Leibler divergence from $\nicefrac{r}{p}$ to $\nicefrac{\pi}{p}$ viewed as probability distributions.\footnote{Thanks to one of the anonymous reviewers for this observation.} 

\begin{example}
Consider again our running example. For $k = 4$ there are two optimal committees $\{\mathrm{Bob, Donna, Ernest, Helena}\}$ and $\{\mathrm{Bob, Charlie, Donna, Helena}\}$. The value of the optimised function for the first committee can be computed as $0.5 \cdot \harmonic(2) + 0.5 \cdot \harmonic(2) + 0.55 \cdot \harmonic(2) + 0.25 \cdot \harmonic(2) + 0.3 \cdot \harmonic(1) + 0.7 \cdot \harmonic(3) + 0.3 \cdot \harmonic(1) + 0.7 \cdot \harmonic(3) = 0.6 \cdot \harmonic(1) + 1.8 \cdot \harmonic(2) + 1.4 \cdot \harmonic(3) \approx 5.866$. \qed
\end{example}

We can formulate a result for the multi-attribute d'Hondt rule that is analogous to \autoref{prop:fs} for the multi-attribute Hamilton rule.

\begin{proposition}\label{prop:fsdhondt}
Consider a candidate database that satisfies the full supply property. For any attribute $X_i$, any committee $A$ that maximises $\sum_{i, j} \pi_i^j\harmonic(r_i^j(A) \cdot k)$ is a d'Hondt committee for the single-attribute problem $(\{X_i\},D^{\downarrow X_i}, \pi_i, k)$, where  $D^{\downarrow X_i}$ is the projection of $D$ on $\{X_i\}$.
\end{proposition}

Let us now examine properties of the multi-attribute d'Hondt method. It is known that for a single-attribute case the d'Hondt method satisfies non-reversal, house monotonicity, and party population monotonicity, yet it does not respect quota\footnote{It is known that the d'Hondt method satisfies a weaker form of respect of quota---it respects lower quota, i.e., for each party $P_i$ it holds that $r_i \geq \lfloor \nicefrac{v_ih}{v_+} \rfloor$.}. Consequently, respect of quota is not satisfied by the multi-attribute d'Hondt method even under the full supply assumption. 

\begin{proposition}\label{prop:multiAttributePropertiesdHondt}
Under the full supply assumption, non-reversal, house monotonicity, and value monotonicity with respect to every attribute are all satisfied by the multi-attribute d'Hondt method.
In the general case, non-reversal and house monotonicity are not satisfied. If $X_i$ is a binary variable, then value monotonicity with respect to $X_i$ is satisfied;
however it is not satisfied in the general case.
\end{proposition}

Finally, let us observe that if a perfect committee $A$ exists for $C$, $\pi$ and $k$, then $\calR_{\mathrm{dHondt}}(C,\pi,k) = A$. This follows from \autoref{prop:dhondt_alternative_definition} and from the fact that in the single-attribute case a committee that exactly matches the target distributions is always selected by the d'Hondt method. This property, which, as we have seen, also holds for $\calR_{\mathrm{H}}$, will be useful in our further discussion on computational properties of our multi-attribute rules.

\section{Computing Multi-Attribute Rules}\label{computation}

Now, we are ready to formally define the main computational problems that we address in this paper.
\begin{problem}
We are given $X$,  $C$, $\pi$, and $k$. In the \textsc{OptimalHamiltonRepresentation} we look for a committee $A \in S_k(C)$ that minimises the expression $\sum_{i, j} |r_i^j(A) -  \pi_i^j|$. In the \textsc{OptimalDHondtRepresentation} problem our goal is to find a committee $A \in S_k(C)$ maximising $\sum_{i, j} \pi_i^j\harmonic(r_i^j(A) \cdot k)$.
\end{problem}

In this section we investigate the computational complexity of the problem of finding optimal committees.
We start with observing that the problem of deciding whether there is a perfect committee for a given instance is {\sf NP}-complete.

\begin{proposition}\label{prop:np_hard}
Given set of attributes $X$, a set of candidates $C$, a vector of target distributions $\pi$, an integer $k$, deciding whether there exists a perfect committee is {\sf NP}-complete. 
\end{proposition}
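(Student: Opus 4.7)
\begin{proofs}
Membership in \textsf{NP} is immediate: given a candidate subset $A$ of size $k$, we can check in polynomial time that $|A|=k$ and that $r_i^j(A) = \pi_i^j$ for every attribute $i$ and value $j$.

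For hardness, I would reduce from \textsc{3-Dimensional Matching} (3DM). An instance of 3DM consists of three pairwise disjoint sets $U_1, U_2, U_3$ of size $n$ and a collection $T \subseteq U_1 \times U_2 \times U_3$ of triples; the question is whether there is a subcollection $M \subseteq T$ of size $n$ such that every element of $U_1 \cup U_2 \cup U_3$ appears in exactly one triple of $M$.

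Given such an instance, I build an \textsc{OptimalRepresentation} instance with $p = 3$ attributes, where $D_i = U_i$ for $i \in \{1,2,3\}$. For every triple $t = (u_1, u_2, u_3) \in T$ I create a candidate $c_t$ with attribute vector $(u_1, u_2, u_3)$. I set $k = n$ and define the target distribution by $\pi_i^j = 1/n$ for every attribute $i$ and every value $j \in U_i$. Note that $k\pi_i^j = 1$ is an integer, so a perfect committee is potentially achievable. The reduction is clearly polynomial.

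The correctness argument is direct: a committee $A$ of size $n$ is perfect iff, for every attribute $i$ and every value $u \in U_i$, exactly one selected candidate has attribute value $u$ on $X_i$. By construction of the candidates from the triples of $T$, such an $A$ corresponds exactly to a subcollection of $n$ triples of $T$ that covers each element of $U_1 \cup U_2 \cup U_3$ exactly once, i.e., to a 3D matching. Since 3DM is \textsf{NP}-complete, \textsc{PerfectCommittee} is \textsf{NP}-hard. The only subtle point is to confirm that the instance constructed indeed falls into the framework of the paper and that the distributions are natural, which is immediate from the choice $\pi_i^j = 1/n$ and $k = n$.
\end{proofs}
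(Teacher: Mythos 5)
Your proof is correct, but it takes a genuinely different route from the paper's. The paper reduces from \textsc{Exact Cover by 3-Sets}: it introduces one \emph{binary} attribute per ground element (so $3k$ attributes in total), one candidate per 3-set, and targets $\pi_i(1)=\frac{1}{k}$, so that a size-$k$ perfect committee is exactly an exact cover. You instead reduce from \textsc{3-Dimensional Matching} using only $p=3$ attributes whose domains are the ground sets $U_1,U_2,U_3$, with one candidate per triple. Both reductions are ``exact cover in disguise'' and both are valid, but they make the instance hard along different axes: the paper's version shows that hardness holds already for binary domains --- the regime in which most of the paper's other results (the $\wone$-hardness for parameter $k$, the $\fpt$ algorithm for parameter $p$, the approximation guarantees) are stated --- at the price of an unbounded number of attributes; your version shows that hardness holds already for a \emph{constant} number of attributes, at the price of domains of unbounded size. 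Your variant is strictly more informative in that second direction, and it is worth pointing out that, assuming ${\sf P}\neq{\sf NP}$, it refutes the paper's subsequent proposition claiming polynomial-time solvability of the perfect-committee problem for constant $p$: that proposition's proof rests on an integer program with $q^p$ variables solved by Lenstra's method, which is polynomial only when the number of variables is bounded, i.e., only under an additional bound on the domain sizes. So your argument is not just an acceptable alternative; it exposes a genuine issue elsewhere in the paper.
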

\begin{proof}
Membership is straightforward. Hardness follows by reduction from the {\sf NP}-complete problem {\sc exact cover with 3-sets}, or {\sc x3c} \cite{GJ79}. Let $I = \langle X, {\cal S}\rangle$ with $X = \{x_1, \ldots, x_{3k}\}$ and ${\cal S} = \{S_1, \ldots, S_n\}$ with $|S_i| = 3$ for each $i$. $I$ is a positive instance of {\sc x3c} iff there is a collection ${\cal S}' \subseteq {\cal S}$ with $|{\cal S}'| = k$ and $\cup \{ S | S \in {\cal S}' \} = X$. Define the following instance of {\sc perfect committee}: let $X_1, \ldots, X_{3k}$ be $3k$ binary attributes, and let $C$ consist of $m$ candidates $c_1, \ldots, c_m$ with $X_i(c_j) = 1$ if $x_i \in S_j$ and  $X_i(c_j) = 0$ if $x_i \notin S_j$. Finally, for each $i$, $\pi_i^0 = \frac{k-1}{k}$ and  $\pi_i^1 = \frac{1}{k}$. We want a committee of size $k$. $A = \{c_{i_1}, \ldots, c_{i_k} \}$ is perfect for $\pi$ if for each $X_i$,  there is exactly one $j \in \{1, \ldots, k\}$ such that $X_i(c_{i_j}) = 1$, which is equivalent to saying that for each $x_i$,  there is exactly one $S_j \in \{S_{i_1}, \ldots, S_{i_k}\}$ such that $x_i \in S_j$. Thus, there is a perfect committee for $\pi$ and $C$ if and only if $I$ is a positive instance.
\end{proof}

Since the multi-attribute Hamilton and d'Hondt methods always find a perfect committee if there exists one, this simple result implies that the decision problem associated with finding an optimal committee is {\sf NP}-hard. In the next subsections we will explore two natural approaches to alleviate the {\sf NP}-hardness of the problem: we will ask if the problem can be computed efficiently when certain natural parameters are small, and we will ask whether it can be well approximated. 

In this paper we mostly present computational results for binary domains. However, this assumption is not as restrictive as it may seem---every instance of the \textsc{OptimalHamiltonRepresentation} problem can be transformed to a new instance with binary domains in the following way:


\begin{itemize}
\item $X_{\mathrm{new}} = \{X_{i,j} \ | \ i \in [p], j \in [|D_i|] \}$; for each $i, j$ we set $D_{i, j} = \{0, 1\}$.
\item $C_{\mathrm{new}} = \{c'_l \ | \ l = 1, \ldots, m \}$; for each $\ell, i, j$ we have $X_{i,j}(c'_l) = 0$ iff $X_{i}(c_l) = x_{i}^j$.
\item $\pi_{\mathrm{new}} = (\pi_{i,j} \ | \  i \in [p], j \in [|D_i|])$, where for all $i = [p]$ and $j = [|D_i|]$, $\pi_{i,j}^0 = \pi_i^j$ and $\pi_{i,j}^1 = 1 - \pi_i^j$. 
\end{itemize}

The following proposition establishes the relation between the optimal committees for the original problem, and for the problem transformed to binary domains.

\begin{proposition}\label{prop:transformToBinary}
For a given committee $A$ and target distribution $\pi$, let $A_{\mathrm{new}}$ and $\pi_{\mathrm{new}}$ denote the committee and target distributions obtained as above.
The following holds:
\begin{align*}
\sum_{i, j} |r_i^j(A_{\mathrm{new}}) - \pi_i^j| = 2\sum_{i, j} |r_i^j(A) -  \pi_i^j|\text{.}
\end{align*}
\end{proposition}

\begin{proof}
\begin{align*}
\sum_{i, j} |r_i^j(A) - \pi_i^j| &= \sum_{i, j}\left|\frac{|\{c \in A: X_i(c) = x_i^j\}|}{k} - \pi_i^j\right| = \sum_{i, j}\left|\frac{|\{c \in A_{\mathrm{new}}: X_{i, j}(c) = 0\}|}{k} - \pi_{i,j}^0\right| \\
                &= \frac{1}{2}\sum_{i, j}\left(\left|\frac{|\{c \in A_{\mathrm{new}}: X_{i, j}(c) = 0\}|}{k} - \pi_{i,j}^0\right| + \left|\frac{|\{c \in A_{\mathrm{new}}: X_{i, j}(c) = 1\}|}{k} - \pi_{i,j}^1\right|\right) \\
                &= \frac{1}{2}\sum_{i, j} \sum_{\ell \in \{0, 1\}}|r_{i, j}^{\ell}(A_{\mathrm{new}}) - \pi_{i,j}^{\ell}| = \frac{1}{2} \sum_{i, j} |r_i^j(A_{\mathrm{new}}) - \pi_i^j|.
\end{align*}
\end{proof}

\autoref{prop:transformToBinary} has interesting implications---first, it shows that the transformed instance
has the same perfect committees, and the same optimal Hamilton committees as the original instance; then it shows how to obtain approximation guarantees for \textsc{OptimalHamiltonRepresentation} for arbitrary domains having guarantees for the problem restricted to binary domains, which will be useful in \autoref{ComputingHamilton}.

\subsection{Parameterised Complexity}\label{fpt}

In this section, we study the parameterised complexity of the problem of finding optimal Hamilton and d'Hondt committees. We are specifically interested whether for some natural parameters there exist fixed parameter tractable (FPT) algorithms. We recall that the problem is FPT for a parameter $P$ if its each instance $I$ can be solved in time $O(f(P)\cdot\mathrm{poly}(|I|))$ for some computable function $f$.
From the point of view of parameterised complexity, FPT is seen as the
class of easy problems.
There is also a whole hierarchy of
hardness classes, $\text{FPT} \subseteq W[1] \subseteq W[2] \subseteq \cdots$ (for details, we point the reader to
appropriate overviews~\cite{CyganFKLMPPS15,dow-fel:b:parameterized,flu-gro:b:parameterized-complexity,nie:b:invitation-fpt}.

Obviously, the problem admits an FPT algorithm for the parameter $m$ (we can enumerate all $k$-element subsets of the set of candidates and select the best one). Now, we present a negative result for parameter $k$ (committee size) and a positive result for the parameter $p$ (number of attributes). 

\begin{theorem}
The problem of deciding whether there exists a perfect committee is $\wone$-hard for the parameter $k$, even for binary domains.
\end{theorem}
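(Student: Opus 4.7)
The plan is to give a parameterized reduction from \textsc{Perfect Code} parameterized by the code size $k$, which is a classical $\wone$-hard (in fact $\wone$-complete) problem: given a graph $G = (V,E)$ and an integer $k$, decide whether there is a subset $S \subseteq V$ with $|S| = k$ such that for every vertex $v \in V$, the closed neighborhood $N[v]$ contains exactly one vertex of $S$. The reduction I have in mind sends $(G,k)$ to an instance of \textsc{PerfectCommittee} in which both the attribute set and the candidate set are indexed by $V$, and the committee size is again $k$, so the parameter is preserved verbatim.

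Concretely, the construction I would use is as follows. For every vertex $v \in V$ introduce a binary attribute $X_v$ with domain $\{0,1\}$. For every vertex $u \in V$ introduce a candidate $c_u$, and define
\[
X_v(c_u) \;=\; \begin{cases} 1 & \text{if } u \in N[v], \\ 0 & \text{otherwise}. \end{cases}
\]
Set the target distribution on each attribute to $\pi_v^1 = 1/k$ and $\pi_v^0 = (k-1)/k$, and ask for a committee of size $k$. Note $k \pi_v^1 = 1$ is an integer, so the existence of a perfect committee is a sensible question.

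The correctness follows from a direct unfolding of definitions. A committee $A = \{c_{u_1},\dots,c_{u_k}\}$ of size $k$ is perfect iff for every attribute $X_v$, $r_v^1(A) = 1/k$, i.e. $k \cdot r_v^1(A) = |\{i : u_i \in N[v]\}| = 1$. That is exactly the statement that $S = \{u_1,\dots,u_k\}$ is a perfect code of $G$. Hence perfect committees of size $k$ are in bijection with perfect codes of size $k$ in $G$. The construction is clearly polynomial in $|V|$, all attributes are binary, and the parameter $k$ is unchanged, so this is an $\fpt$-reduction establishing $\wone$-hardness of \textsc{PerfectCommittee} parameterized by $k$ on binary domains.

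The only real subtlety, and hence the main place to be careful, is the choice of the source problem: one must pick a $\wone$-hard problem whose $k$ maps naturally to the committee size rather than to the size of the domain or the number of attributes. \textsc{Perfect Code} fits perfectly because its solution is itself a $k$-subset of a ground set, and its ``exactly one neighbor in $S$'' constraint is precisely the type of cardinality-matching constraint that a perfect committee enforces on each attribute. Reductions from, say, \textsc{Clique} would require additional gadgetry (edge-selection candidates, consistency attributes) and a committee size of order $k + \binom{k}{2}$, which works but is unnecessarily heavy; the \textsc{Perfect Code} reduction keeps both $k$ and the proof transparent.
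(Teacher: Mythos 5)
Your reduction is correct and is essentially identical to the paper's own proof: the same source problem (\textsc{PerfectCode} parameterized by $k$), the same construction with one binary attribute $X_v$ and one candidate $c_v$ per vertex, $X_v(c_u)=1$ iff $u\in N[v]$, the same targets $\pi_v^1=\frac{1}{k}$, and the same observation that perfect committees of size $k$ correspond exactly to perfect codes of size $k$. Nothing to change.
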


\begin{proof}
By reduction from the $\wone$-complete \textsc{PerfectCode} problem \cite{Cesati2002163}. Let $I$ be an instance of \textsc{PerfectCode} that consists of a graph $G = (V, E)$ and a positive integer $k$. We ask whether there exists 
$V' \subseteq V$ with $|V'| = k$ such that each vertex in 
$V$ is adjacent to exactly one vertex from $V'$ (by convention, a vertex is adjacent to itself).
From $I$ we construct the following instance $I'$ of the problem of deciding whether there exists a perfect committee. For each $v \in V$ there is a binary attribute $X_v$ and a candidate $c_v$. For each $u, v \in V$, $X_v(c_u) = 1$ if and only if $u$ and $v$ are adjacent in $G$. 
We look for a committee of size $k$. For each $v$, $\pi_v^1 = 1-\pi_v^0 = \frac{1}{k}$. It is easy to see that perfect codes in $I$ correspond to perfect committees in $I'$.
\end{proof}


\begin{figure}[!t!b]
  \centering
  \includegraphics[width=0.5\linewidth]{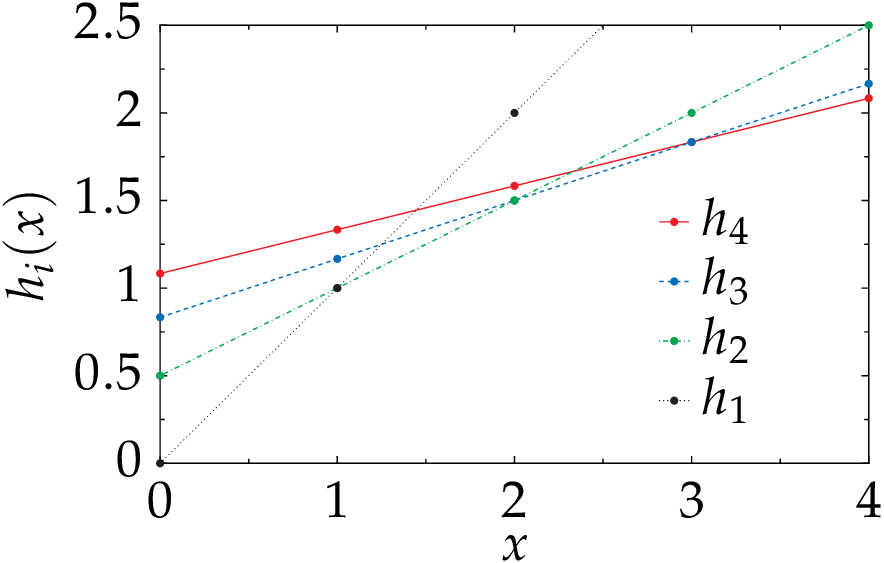}
\caption{The functions $h_1, \ldots, h_k$ used in the proof of \autoref{thm:fptParameterP} for $k = 4$. Observe that for each $x \in [k]$ the highest value $v$ such that $v \leq h_{i}(x)$ for each $i \in [k]$ is equal to $v = \harmonic(x)$.}\label{fig:decomposition_of_harmonic}
\end{figure}

\begin{theorem}\label{thm:fptParameterP}
For binary domains, there exists an $\fpt$ algorithm for \textsc{OptimalHamiltonRepresentation} and for \textsc{OptimalDHondtRepresentation} for parameter $p$. 
\end{theorem}
\begin{proof}
We will show a linear integer program for each of the two problems, \textsc{OptimalHamiltonRepresentation} and for \textsc{OptimalDHondtRepresentation}, with the number of integer variables bounded by a function of $p$. Such a linear program, by the result of Lenstra~\cite[Section~5]{Len83}, can be solved in $\fpt$ time for parameter $p$. Currently, the best known running time of algorithms solving integer linear programs is $O(n^{2.5n +o(n)}\cdot \mathrm{poly}|I|)$, where $n$ is the number of integer variables and $|I|$ is the length of encoding of the input instance~\cite{FT87ilp,Kan87}. We will start from describing the set of variables and constraints which are common for the two problems that we consider. 

Each candidate can be viewed as a vector of values indexed with the attributes; there are $2^p$ such possible vectors: $v_1, \dots, v_{2^p}$. For each $v_i$, let $a_i$ denote the number of candidates that correspond to $v_i$. For each $i \in [2^p]$ we introduce an integer variable $b_i$, which intuitively denotes the number of candidates corresponding to $v_i$ in an optimal committee. Further, for each $i \in [p]$ and each $j \in [q_i]$ we introduce a variable $r_i^j$, which in the optimal solution will be equal to $k \cdot r_{i}^j(A)$, where $A$ is the optimal committee returned by our integer linear program.
Consider the following set of linear constraints:

\newcommand{\tikzmark}[1]{\tikz[overlay,remember picture] \node (#1) {};}
\begin{figure}[H]
\begin{alignat*}{2}
                         &  \text{(a)}\colon b_i \in \integers  &\tikzmark{brace1} \ & \\
                         &  \text{(b)}\colon b_i \geq 0  &\ & \\
                         &  \text{(c)}\colon b_i \leq a_i   & \tikzmark{brace2} \ & \\
                         &  \text{(d)}\colon \sum_{i=1}^{2^p} b_i = k \ \\
                         & \text{(e)}\colon r_{i}^j = \sum_{\ell: v_\ell[i] = x_{i}^j} b_\ell & \quad\quad & i \in [p], j \in [q_i]
\end{alignat*}
\begin{tikzpicture}[overlay, remember picture]
\draw [decoration={brace,amplitude=0.4em},decorate,black]
(brace1.north east) --  (brace2.east) node [black, midway, xshift = 1.65cm, yshift = 0.0cm] {$i \in [2^p]$};
\end{tikzpicture}
\end{figure}

These constraints ensure that for a committee $A$ described by the variables $(b_i)_{i \in [2^p]}$ it holds that $r_{i}^j = k \cdot r_{i}^j(A)$, for each $i \in [p]$ and $j \in [q_i]$. Now, for \textsc{OptimalHamiltonRepresentation} we additionally introduce two real variables, $x_{i}^j$ and $y_{i}^j$, for each $i \in [p]$ and $j \in [q_i]$, and the following constraints:

\begin{figure}[H]
\begin{alignat*}{2}
                         &  \text{(f)}\colon x_i^j \geq 0  \quad\quad &\tikzmark{brace3} \ & \\
                         &  \text{(g)}\colon x_i^j \geq r_{i}^j - k \cdot \pi_{i}^j  \quad\quad  &\ & \\
                         &  \text{(h)}\colon y_i^j \geq 0   \quad\quad  & \ & \\
                         &  \text{(i)}\colon y_i^j \geq k \cdot \pi_{i}^j -r_{i}^j  \quad\quad  & \tikzmark{brace4} \ & 
\end{alignat*}
\begin{tikzpicture}[overlay, remember picture]
\draw [decoration={brace,amplitude=0.4em},decorate,black]
(brace3.north east) --  (brace4.east) node [black, midway, xshift = 1.65cm, yshift = 0.0cm] {$i \in [2^p], j\in [q_i]$};
\end{tikzpicture}
\end{figure}
These constraints ensure that for each $i \in [p]$ and $j \in [q_i]$ it holds that $k |r_{i}^j(A) - \pi_i^j| \leq x_i^j + y_i^j$. Thus, we can find an optimal Hamilton committee by minimising the objective function $\sum_{i \in [p]} \sum_{j \in [q_i]} (x_i^j + y_i^j)$ subject to constraints (a)--(i).

Finding an optimal d'Hondt committee requires an additional construction. The idea of this construction is similar to the one used by Caragiannis~et~al.~\cite{Caragiannis:2016:UFM:2940716.2940726}. 
Let us construct $k$ linear functions: $h_1, \ldots, h_k$, as follows. For each $i \in [k]$ we define $h_i$ as a linear function such that $h_i(i-1) = \harmonic(i-1)$ and $h_i(i) = \harmonic(i)$, where $\harmonic(i)$ denotes the $i$-th harmonic number. The functions $(h_i)_{i \in [k]}$ for $k = 4$ are depicted in \autoref{fig:decomposition_of_harmonic}. Now, for each $i \in [p]$ and $j \in [q_i]$ we introduce one real variable $z_{i}^j$ and the following constraints:
\begin{alignat*}{2}
                         &  \text{(j)}\colon z_i^j \leq h_{\ell}(r_{i}^j)  \quad\quad  &\ & \ell \in [k]\\
\end{alignat*}
The highest value of $z_i^j$ which satisfies constraints $(j)$ is equal to $\harmonic(r_{i}^j)$. Thus, to find an optimal d'Hondt committee we need to maximise the expression $\sum_{i \in [p]} \sum_{j \in [q_i]} z_i^j$ subject to constraints (a)--(e) and (j).

Of course, when there is no candidate corresponding to a given vector of values of the attributes $v_i$, then we can skip the respective variable $b_i$. This can make our ILPs more practical when the size of the candidate database is small.
\end{proof}

\begin{example}
Let $p = 2$, $k = 5$, and let the candidate database $C$ consists of 4 candidates with value vector $v_1 = (0,0)$,   2 with value vector $v_2 = (1,0)$, 2 candidates with value vector $v_3 = (0,1)$ and 2 candidates with value vector $v_4 = (1,1)$.  Let $\pi = ((0.2, 0.8), (0.6, 0.4))$.   The constraints (a)--(e) of the integer linear program are: 
{\upshape
  \begin{align*}
    & \text{(a)}\colon b_i \in \integers                                                                        \ & 1 \leq i \leq 4 \\
    & \text{(b)}\colon b_i \geq 0                                                                        \ & 1 \leq i \leq 4 \\
    & \text{(c)}\colon b_1 \leq 4; \ b_2 \leq 2; \ b_3 \leq 2; \ b_4 \leq 2  \\
    & \text{(d)}\colon b_1 + b_2 + b_3 + b_4 = 5                                                          \ \\
    & \text{(e)}\colon r_{1}^0 = b_1 + b_3; \ r_{1}^1 = b_2 + b_4; \ r_{2}^0 = b_1 + b_2; \  r_{2}^1 = b_3 + b_4;
  \end{align*}
}
and a solution is $(b_1 = 1, \ b_2 = 2, \ b_3 = 0, \ b_4 = 2)$: a perfect committee is obtained by taking one candidate with value vector $(0,0)$, two candidates with value vector $(1,0)$, and two with value vector $(1,1)$. Thus, this is an optimal d'Hondt and Hamilton committee. \qed
\end{example}

As a corollary of \autoref{thm:fptParameterP} we get that the problem of checking whether there exists a perfect committee is in $\fpt$ for the parameter $p$. We can see that the proof of \autoref{thm:fptParameterP} easily extends to the case where the size of each domain $D_i$ is bounded by a constant.

We conclude this section by a short discussion. Finding an optimal committee is likely to be difficult if the candidate database $C$ is large, and the number of attributes not small. Assume $|C|$ is large compared to the size of the domain $\prod_{i=1}^p |D_i|$, that each attribute value appears often enough in $C$ and that there is no strong correlation between attributes in $C$: then, the larger $|C|$, the more likely $C$ satisfies the full supply property, in which case finding an optimal committee is easy. The really difficult cases are when $|C|$ is not significantly larger than the domain, or when $C$ shows a high correlation between attributes.

We have run a set of simple experiments to better understand the limitations of the ILP-based approach presented in the proof of \autoref{thm:fptParameterP}. For several different values of the parameters $m$ (the number of candidates), $p$ (the number of binary attributes) and $k$ (the size of the committee) we run the following simulations. We selected the value of each attribute for each candidate independently, following a uniform i.i.d. distribution. For each $i \in [p]$ we set the target distribution for the $i$-th attribute to $\pi_i^0 = \pi_i^1 = \nicefrac{1}{2}$. For each combination of the parameter values $(m, p, k)$ we run 10 experiments and computed the average and the worst-case time that the appropriate ILP required to solve the respective instance. For our experiments we used the Gurobi ILP solver and a desktop machine with 4 processors Intel Core i5-4200U, 1.62GHz, 3072KB of cache. The running times of our ILPs are summarised in \autoref{tab:running_times}.    

\begin{table}[t!]
\centering
\begin{minipage}{.45\linewidth}
      \centering
      \begin{tabular}{ c | c | c | c | c }
         $m$ & $p$ & $k$ & average & maximal \\
         \hline
         $50$ & $20$ & $5$ & $0.06$s & $0.07$s \\
         \hline
         $50$ & $20$ & $10$ & $0.33$s & $1.17$s \\
         \hline
         $100$ & $50$ & $10$ & $55$s & $98$s \\
         \hline
         $100$ & $50$ & $25$ & $2.7$h & $9.5$h \\
         \hline
         $300$ & $80$ & $40$ & $>10$h & $>10$h \\
       \end{tabular}\\
       \bigskip
        a. Finding optimal Hamilton committees
\end{minipage}%
\begin{minipage}{.45\linewidth}
      \centering
      \begin{tabular}{ c | c | c | c | c }
         $m$ & $p$ & $k$ & average & maximal \\
         \hline
         $50$ & $20$ & $10$ & $0.07$s & $0.09$s \\
         \hline
         $50$ & $20$ & $10$ & $1.05$s & $1.73$s \\
         \hline
         $100$ & $50$ & $10$ & $59$s & $89$s \\
         \hline
         $100$ & $50$ & $25$ & $1.9$h & $4.3$h \\
         \hline
         $300$ & $80$ & $40$ & $>10$h & $>10$h \\
       \end{tabular}\\
       \bigskip
        b. Finding optimal d'Hondt committees
\end{minipage} 
\caption{Running times (average and maximal) of the ILP for the problem of finding optimal Hamilton and d'Hondt committees. The entry ``$>10$h'' means that none of the ten experiments finished before the deadline of 10 hours.}\label{tab:running_times}
\end{table}

We observe that for a small number of attributes, we can efficiently apply our FPT algorithms. 
Nevertheless, for large instances, with more than a hundred attributes and than a few hundreds candidates, the ILP-based approach is no longer feasible. 
Following this observation, in the next two subsection we will discuss an alternative approach, which uses the concept of approximation. This approach is suitable when the number of attributes is too large for the  ILP-based algorithms. 

One more reason for studying approximation algorithms for finding a committee is that an approximation algorithm can be viewed as a new rule , which might or might not inherit some good properties of the original rule that we aim to approximate (this view of approximation algorithms was taken first in  \cite{car-kak-kar-pro:j:dodgson-acceptable}). This new rule is not only simpler to compute but also may be easier to understand by humans. Moreover, for low-stake domains where optimality is not crucial,  it often does not matter whether we apply the initial rule of one of its approximations.

\subsection{Approximating the Multi-Attribute d'Hondt Method}\label{ComputingHondt}

Let us first consider the problem of approximating the multi-attribute d'Hondt method. We will use the following standard definition of approximation.

\begin{definition}
An algorithm $\mathscr{A}$ is an $\alpha$-approximation algorithm for \textsc{OptimalDHondtRepresentation} if for each instance $I$ of \textsc{OptimalDHondtRepresentation} it holds that 
\begin{align*}
\sum_{i, j} \pi_i^j\harmonic(r_i^j(A) \cdot k) \geq \alpha \sum_{i, j} \pi_i^j\harmonic(r_i^j(A^*) \cdot k) \text{,}
\end{align*}
where $A$ is the committee returned by $\mathscr{A}$ for $I$, and is $A^*$ an optimal committee.
\end{definition}

\SetKwInput{KwParameters}{Parameters}
\begin{figure}[thb]
\begin{algorithm}[H]
   \small
   $A \leftarrow \emptyset$\;
   \For{$i\leftarrow 1$ \KwTo $k$}{
      $c \leftarrow \argmax_{c' \in C\setminus A} \sum_{i, j} \pi_i^j\harmonic(r_i^j(A\cup\{c'\}) \cdot k)$ \;
      $A \leftarrow A \cup \{c\}$\;
   }
   \Return{$A$}\;
\end{algorithm}
\caption{Greedy approximation algorithm for the \textsc{OptimalDHondtRepresentation} problem.}\label{alg:greedy}
\end{figure}

For the \textsc{OptimalDHondtRepresentation} problem we show that a simple greedy algorithm from \autoref{alg:greedy} achieves an approximation ratio of $1 - \nicefrac{1}{e}$.

\begin{theorem}
The greedy algorithm from \autoref{alg:greedy} is a $(1 - \nicefrac{1}{e})$-approximation algorithm for \textsc{OptimalHamiltonRepresentation}.
\end{theorem}

\begin{proof}
Consider the set function that we optimise $f(A) = \sum_{i, j} \pi_i^j\harmonic(r_i^j(A) \cdot k)$. It can be expressed as a linear combination of functions $f(A) = \sum_{i, j} f_{i, j}(A)$, where $f_{i, j}(A) = \harmonic(r_i^j(A))$. We will now show that for each $i$ and $j$ the function $f_{i, j}$ is submodular. For that we need to show that for each $A, B$ with $A \subset B \subset C$, and for each $c \in C\setminus B$ it holds that:
\begin{align*}
f_{i, j}(A \cup \{c\}) - f_{i, j}(A) \geq f_{i, j}(B \cup \{c\}) - f_{i, j}(B) \textrm{.}
\end{align*} 
Now, if $X_{i}(c) \neq x_i^j$ then $r_i^j(A \cup \{c\}) = r_i^j(A)$ and $r_i^j(B \cup \{c\}) = r_i^j(B)$, thus:
\begin{align*}
f_{i, j}(A \cup \{c\}) - f_{i, j}(A) = 0 = f_{i, j}(B \cup \{c\}) - f_{i, j}(B) \textrm{.}
\end{align*} 
On the other hand, if $X_{i}(c) = x_i^j$ then $r_i^j(A \cup \{c\}) = r_i^j(A) + 1$ and $r_i^j(B \cup \{c\}) = r_i^j(B) + 1$. Since $A \subset B$, we have $r_i^j(B) \geq r_i^j(A)$ and so:
\begin{align*}
f_{i, j}(A \cup \{c\}) - f_{i, j}(A) = \frac{1}{r_i^j(A) + 1} \geq \frac{1}{r_i^j(B) + 1} = f_{i, j}(B \cup \{c\}) - f_{i, j}(B) \textrm{.}
\end{align*}
Thus, each function $f_{i, j}$ is submodular. Also, $f$ as a linear combination of submodular functions is submodular. The thesis follows from the famous result of Nemhauser~et~al.~\cite{submodular} which established the $(1 - \nicefrac{1}{e})$-approximation bound for the greedy algorithm for the problem of maximising a submodular function.
\end{proof}

\subsection{Approximating the Multi-Attribute Hamilton Rule}\label{ComputingHamilton}

Now, we move to the problem of approximating the multi-attribute Hamilton method. Before proceeding to presentation of our approximation algorithms for this problem, 
we define the notion of approximability used in our analysis. First, we observe that there is no hope for a polynomial time approximation algorithm according to the notion of multiplicative approximation, perhaps the most commonly used definition of approximation.

\begin{proposition}
Unless ${\sf P = NP}$, for each computable function $\alpha\colon \naturals \to \reals$ there exists no polynomial-time algorithm that or each instance $I$ of \textsc{OptimalHamiltonRepresentation} returns a committee $A$ such that:
\begin{align*}
\sum_{i, j} |r_i^j(A) -  \pi_i^j| \leq \alpha(|I|) \cdot \sum_{i, j} |r_i^j(A^*) -  \pi_i^j| \text{,}
\end{align*}
where $A^*$ is an optimal committee for $I$.
\end{proposition} 
\begin{proof}
For the sake of contradiction, let us assume that such a polynomial-time algorithm exists. Then for each instance $I$ for which there exists a perfect committee $A^*$, we have $\sum_{i, j} |r_i^j(A^*) -  \pi_i^j| = 0$, and thus our algorithm would need to find a committee $A$ with $\sum_{i, j} |r_i^j(A) -  \pi_i^j| = 0$. This means that we could use our algorithm to find a perfect committee, whenever such exists. Yet, by \autoref{prop:np_hard} deciding whether there exists a perfect committee is {\sf NP}-hard. 
\end{proof}

Given this strong negative result, we move to analysing the {\em additive} approximation of the problem. 

\begin{definition}\label{def:hamilton_approximation}
An algorithm $\mathscr{A}$ is an $\alpha$-additive-approximation algorithm for \textsc{OptimalHamiltonRepresentation} if for each instance $I$ of \textsc{OptimalHamiltonRepresentation} it holds that 
\begin{align*}
\big|\sum_{i, j} |r_i^j(A) -  \pi_i^j| - \sum_{i, j} |r_i^j(A^*) -  \pi_i^j|\big| \leq \alpha \text{,}
\end{align*}
where $A$ is the committee returned by 
$\mathscr{A}$ for $I$, and $A^*$ an optimal committee.
\end{definition}

Now, we are ready to show an approximation algorithm for the \textsc{OptimalHamiltonRepresentation} problem. The algorithm is given in \autoref{alg:local-search} and is parameterised by an integer value $\ell$. It starts with a random collection of $k$ samples and, in each step, it looks whether it is possible to replace some $\ell$ candidates from the current solution with some other $\ell$ candidates to obtain a better solution (if there exist many choices for replacing such $\ell$ candidates the algorithm can pick an arbitrary of them). The algorithm continues until it cannot find any pair of sets of $\ell$ candidates that would improve the current solution.

\SetKwInput{KwParameters}{Parameters}
\begin{figure}[thb]
\begin{algorithm}[H]
   \small
   \SetAlCapFnt{\small}
   \KwParameters{\\$\hspace{3pt}$ $\pi = (\pi_1, \ldots, \pi_p)$---input target distributions. \\
                 \\$\hspace{3pt}$ $\ell$---the parameter of the algorithm.}
   $A \leftarrow k$ \textrm{random candidates from} $C$\;
   \While{\textrm{there exist} $C_{\ell} \subset C$ \textrm{and} $A_{\ell} \subset A$ \textrm{such that} $|C_{\ell}| = |A_{\ell}| \leq \ell$, \textrm{and} \\
          \quad\quad\quad $\sum_{i, j} \big|r_i^j(A) -  \pi_i^j\big|  > \sum_{i, j} \big|r_i^j((A \setminus A_{\ell}) \cup C_{\ell}) -  \pi_i^j\big|$}{
      $A \leftarrow (A \setminus A_{\ell}) \cup C_{\ell}$\;
   }
   \Return{$A$}\;
\end{algorithm}
\caption{Local search approximation algorithm for the \textsc{OptimalHamiltonRepresentation} problem.}\label{alg:local-search}
\end{figure}

Let $q = \max_{i \in [p]}q_i$. For the sake of simplicity, assume that target distributions are natural (this is almost without loss of generality, as there we can always find target distributions that are close to the initial distributions; see the end of \autoref{computation}). We show that in that case,
the running time of the local search algorithm is $O(p^2 m^\ell k^{\ell+1}q\ell)$. For instance, the simplest variant of the algorithm (that is, the algorithm for $\ell = 1$) for binary domains works in time $O(m p^2 k^{2})$. Indeed, the algorithm starts with a random committee $A$; the worst case distance to the target distributions can be upper bounded by $\sum_{i, j} \big|r_i^j(A) -  \pi_i^j\big| \leq \sum_{i, j} 1 \leq p q$. In each iteration of the {\em while} loop the solution improves: since the distributions are natural, the distance must improve by at least $\nicefrac{1}{k}$. This is because for each $i \in [p], j \in [q_i]$ the values $r_i^j(A)$ and $\pi_i^j$ are integral multiples of $\nicefrac{1}{k}$, and so is the optimised value. Thus, there will be at most $pqk$ iterations of the {\em while} loop. In each iteration we check all $\ell$-element subsets of the set of candidates and compare each such a subset with all $\ell$-element subsets of the current best committee --- thus, there are at most $m^\ell k^\ell$ such comparisons. For a single comparison we need to check all the attributes of the selected candidates in order to verify if replacing the appropriate subsets gives an improvement, which results in $p \ell$ operations.  

As we show now, the approximation guarantees depend on the value of the parameter $\ell$.

\begin{theorem}\label{thm:localSearch1}
Recall that $p = |X|$. For binary domains and 
natural distributions the local search algorithm from \autoref{alg:local-search} with $\ell = 1$ is a $p$-additive-approximation algorithm for \textsc{OptimalHamiltonRepresentation}.
\end{theorem}

\begin{proof}
Let $A^{*}$ denote an optimal solution for a given instance $I$ of the \textsc{OptimalHamiltonRepresentation} problem. Let $A \in S_k(C)$ denote the set returned by the local search algorithm from \autoref{alg:local-search}. From the condition in the ``while'' loop, we know that there exist no $c \in C$ and $a \in A$ such that $\sum_{i, j} \big|r_i^j(A) -  \pi_i^j\big|  > \sum_{i, j} \big|r_i^j((A \setminus \{a\}) \cup \{c\}) -  \pi_i^j\big|$. Now, let $X_{\mathrm{ex}} \subseteq X$ denote the set of all attributes for which $A$ achieves exact match with $\pi$, that is, such that for each $X_i \in X_{\mathrm{ex}}$, we have that $r_i^1(A) = \pi_i^1$ and $r_i^2(A) = \pi_i^2$.

Let us consider the procedure consisting in taking the candidates from $A \setminus A^{*}$ and, one by one, replacing them with arbitrary candidates from $A^{*} \setminus A$. This procedure, in $|A \setminus A^{*}|$ steps, transforms $A$ into an optimal solution $A^{*}$. We now estimate the total gain $g$ induced by this procedure. For each candidate $a \in A \setminus A^{*}$, by $a' \in A^{*} \setminus A$ we denote the candidate which was taken to replace $a$ in the procedure. For each attribute $X_i \in X$ we define the gain $g_{i}(a, a')$ of replacing $a$ by $a'$ as:
{\small
\begin{align*}
g_{i}(a, a') = \sum_{j \in \{1, 2\}}\left(|r_i^j(A) - \pi_i^j| - |r_i^j(A \setminus \{a\} \cup \{a'\}) - \pi_i^j|\right) \textrm{.}
\end{align*}
}
We now extend this definition to sets of $k$ candidates:
{\small
\begin{align*}
g_{i}(B, B') = \sum_{j \in \{1, 2\}}\left(|r_i^j(A) - \pi_i^j| - |r_i^j((A \setminus B) \cup B') - \pi_i^j|\right) \textrm{.}
\end{align*}
}
If $X_i  \in X_{\mathrm{ex}}$, then $r_i(A) = \pi_i$, and so the replacement cannot improve the quality of the solution relatively to $X_i$, hence
\begin{align}\label{eq:noGainInEx}
\sum_{i \in X_{\mathrm{ex}}} g_i(A \setminus A^{*}, A^{*} \setminus A) \leq 0 \textrm{.}
\end{align}
Since the distribution is natural, we have that $g_{i}(a, a') \in \left\{-\frac{2}{k}, 0, \frac{2}{k}\right\}$. This is because replacing a single candidate in $A$ can change the value of $|\{c \in A: X_i(c) = x_i^j\}|$ by $-1$, $0$, or $1$, and so, it can change the value of $r_i^j(A)$ by $-\frac{1}{k}$, 0, or $\frac{1}{k}$. 
Moreover, for each attribute $X_i \notin X_{\mathrm{ex}}$ there are two possible cases:
\begin{enumerate}
\item $r_i^j(A) > \pi_i^j$ and each exchange 
of candidate that results in a negative gain increases $r_i^j(A)$.
\item $r_i^j(A) < \pi_i^j$ and  each exchange that results in a negative gain decreases $r_i^j(A)$.
\end{enumerate}
Intuitively, 1. and 2. mean that for attributes outside of $X_{\mathrm{ex}}$, the negative gains cumulate. Formally, for each $X \notin X_{\mathrm{ex}}$:
\begin{align}\label{eq:negativeGainsCummulate}
g_i(A \setminus A^{*}, A^{*} \setminus A) \leq  \sum_{a \in A \setminus A^{*}} g_{i}(a, a') \textrm{.}
\end{align}
From the condition in the ``while'' loop, we have that for each $a \in A \setminus A^{*}$: $\sum_i g_{i}(a, a') \leq 0$, and so:
\begin{align}\label{eq:whileLoopCondition}
\sum_{i} \sum_{a \in A \setminus A^{*}} g_{i}(a, a') \leq 0 \textrm{.}
\end{align}
We now give the following sequence of inequalities:
\begin{align*}
g &= \sum_i g_i(A \setminus A^{*}, A^{*} \setminus A) = \sum_{i \in X_{\mathrm{ex}}} g_i(A \setminus A^{*}, A^{*} \setminus A) + \sum_{i \notin X_{\mathrm{ex}}} g_i(A \setminus A^{*}, A^{*} \setminus A) \\
  &\leq \sum_{i \notin X_{\mathrm{ex}}} g_i(A \setminus A^{*}, A^{*} \setminus A) \leq \sum_{i \notin X_{\mathrm{ex}}} \sum_{a \in A \setminus A^{*}} g_{i}(a, a') \leq - \sum_{i \in X_{\mathrm{ex}}} \sum_{a \in A \setminus A^{*}} g_{i}(a, a')
\end{align*}
The last transition in the above sequence is due to \autoref{eq:whileLoopCondition}. Consequently, we get that:

\begin{align}
  g \leq \Big|\sum_{i \in X_{\mathrm{ex}}} \sum_{a \in A \setminus A^{*}} g_{i}(a, a')\Big| \leq |X_{\mathrm{ex}}| \cdot k \cdot \frac{2}{k} = 2|X_{\mathrm{ex}}|. 
  \label{eq:sequenceFromTheorem1}
\end{align}
Finally, for each attribute $X_i \notin X_{\mathrm{ex}}$ the loss relative to $X_i$, {\em i.e.}, $|r_i^0(A) - \pi^0| + |r_i^1(A) - \pi^1|$, is
at most 2. Thus, we get $g \leq 2(|X| - |X_{\mathrm{ex}}|)$, which leads to $g \leq |X|$.
\end{proof}


One way to interpret the bound $|X|$ is to observe that a solution that for half of the attributes gives exact match, and for other half is arbitrarily bad, is an $|X|$-approximate solution. We do not know whether the bound $|X|$ is reached, but below we show a lower bound of $\frac{2}{3}|X|$ on the error made by the algorithm with $\ell = 1$.

\begin{example}
Consider $3p$ binary attributes $X_1, \dots, X_{3p}$, $4\ell$ candidates $C = \{a_1, \dots, a_{2\ell}, b_1, \dots, b_{2\ell}\}$, and let $k = 2\ell$. For each $i \leq p$, we have: for $j \leq \ell, X_i(a_j) = 1$ and $X_i(b_j) = 1$; for $j > \ell, X_i(a_j) = 0$ and $X_i(b_j) = 0$. For each $i$ such that $p < i \leq 2p$ we have: for $j \leq \ell, X_i(a_j) = 1$ and $X_i(b_j) = 0$; for $j > \ell, X_i(a_j) = 0$ and $X_i(b_j) = 1$. For $i > 2p$ we have: for each $j, X_i(a_j) = 1$ and $X_i(b_j) = 0$. Finally, for $i \leq 2p$ let $\pi_{i}^0 = \pi_{i}^1 = \frac{1}{2}$, and for $i > 2p$ let $\pi_{i}^0 = 1 - \pi_{i}^1 = 1$. It can be easily checked that $B = \{b_1, \dots, b_{2\ell}\}$ is a perfect committee. Now, $A = \{a_1, \dots, a_{2\ell}\}$ is locally optimal. To check this, we consider two cases: in the first case, where ($r \leq \ell$ and $q \leq \ell$) or ($r > \ell$ and $q > \ell$), replacing $a_r$ with $b_q$ does not change the distance to the target distribution on each of the first $p$ attributes, increases the distance on each of the next $p$ attributes and decreases the distance on each of the last $p$ attributes. For the second case, where ($r \leq \ell$ and $q > \ell$) or ($r > \ell$; $q \leq \ell$), the line of reasoning is similar.
Finally, $\sum_{i, j} \big|r_i^j(A) -  \pi_i^j\big| = 2p = \frac{2}{3}|X|$. \qed
\end{example}\medskip

A better approximation bound can be obtained with $\ell = 2$, yet the analysis of this case is much more involved.

\begin{lemma}\label{lemma:localSearch2HelpingLemma}
Consider $n$ buckets $X_1, \dots, X_n$, such that in the $i$-th bucket $X_i$ there are $x_i$ white balls and $y_i$ black balls. Let $A$ denote the number of pairs of balls such that both balls in the pair belong to the same bucket and are of different color. Let us consider the procedure in which one iteratively selects a bucket and takes out two balls with different colors from the selected bucket. The procedure ends after $B$ steps, when no further steps are possible (in each bucket, either there are no balls anymore, or all balls have the same color). It holds that $A \geq \frac{B^2}{n}$.
\end{lemma}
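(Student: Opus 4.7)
The plan is to reduce the lemma to a clean inequality via two observations about how the procedure terminates, and then finish with Cauchy--Schwarz.

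First, I would characterize $B$ exactly. Because each step removes one white and one black ball from the same bucket, if bucket $i$ is touched $b_i$ times it ends with $x_i - b_i$ white and $y_i - b_i$ black balls. The termination condition forces each bucket to be monochromatic or empty, which for bucket $i$ means $b_i = \min(x_i, y_i)$ (the procedure can always make one more move in bucket $i$ as long as both colors are present, and the order in which buckets are visited is irrelevant). Hence
\begin{equation*}
B \;=\; \sum_{i=1}^{n} \min(x_i, y_i).
\end{equation*}

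Second, I would bound $A$ from below in terms of these $\min$'s. Since $x_i y_i \geq \min(x_i,y_i)^2$ for all nonnegative $x_i,y_i$,
\begin{equation*}
A \;=\; \sum_{i=1}^{n} x_i y_i \;\geq\; \sum_{i=1}^{n} \min(x_i,y_i)^2.
\end{equation*}

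The final step is Cauchy--Schwarz applied to the vectors $(\min(x_i,y_i))_i$ and $(1,\dots,1)$:
\begin{equation*}
\Bigl(\sum_{i=1}^{n} \min(x_i,y_i)\Bigr)^2 \;\leq\; n \cdot \sum_{i=1}^{n} \min(x_i,y_i)^2.
\end{equation*}
Combining the three displays yields $A \geq B^2/n$, as required.

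There is no real obstacle here; the only subtle point is justifying that $b_i = \min(x_i,y_i)$ regardless of the order of moves, which follows because moves in different buckets commute and each bucket is independently reduced until one color is exhausted. The inequality $x_i y_i \geq \min(x_i,y_i)^2$ is trivial, and Cauchy--Schwarz (equivalently, the QM--AM inequality) finishes the argument.
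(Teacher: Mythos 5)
Your proof is correct and follows essentially the same route as the paper's: normalize so that the smaller color count in each bucket is $\min(x_i,y_i)$ (the paper assumes w.l.o.g.\ $x_i \le y_i$), observe $B=\sum_i \min(x_i,y_i)$ and $A=\sum_i x_iy_i \ge \sum_i \min(x_i,y_i)^2$, and finish with Cauchy--Schwarz (the paper invokes Jensen's inequality for the quadratic, which is the same estimate). Your explicit justification that $b_i=\min(x_i,y_i)$ independently of the order of moves is a small but welcome addition that the paper leaves implicit.
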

\begin{proof}
Without loss of generality let us assume that for each $i$: $x_i \leq y_i$. Thus, $B = \sum_i x_i$ and $A = \sum_i x_iy_i \leq \sum_i x_i^2$. The inequality $\sum_i x_i^2 \geq \frac{\left(\sum_i x_i\right)^2}{n}$ follows from Jensen's inequality~\cite{Cvetkovski2012} applied to the quadratic function.
\end{proof}

\begin{lemma}\label{lemma:constraintsEstimation}
Let $x_i, y_i, A_i$, $1 \leq i \leq n$, be real values satisfying the following constraints:
\begin{enumerate}
\item $x_{i} \geq \frac{A_i}{2n-2(i-1)}$, for each $1 \leq i \leq n$,
\item $A_i \geq A_{i-1} - 2x_{i-1}$, for each $2 \leq i \leq n$,
\item $y_i \geq \frac{x_i}{2n-2(i-1)-1}$, for each $1 \leq i \leq n$.
\end{enumerate}
Then:
\begin{align*}
\sum_{i=1}^n y_i \geq \frac{|A_1|\ln n }{4n} \textrm{.}
\end{align*}
\end{lemma}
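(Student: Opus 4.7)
The plan is to unroll the three constraints into an explicit pointwise lower bound on each $y_i$ and then sum, comparing the resulting shifted-harmonic series to an integral.

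First, I would establish the recursive estimate $A_i \geq A_1(n-i+1)/n$ and hence $x_i \geq A_1/(2n)$ for every $i$, under the tight configuration of constraints (1) and (2). The base case $i=1$ follows directly from (1), which gives $x_1 \geq A_1/(2n)$. For the inductive step, constraint (2) together with the tight form of (1) at index $i-1$ yields
\[
A_i \;\geq\; A_{i-1} - 2x_{i-1} \;=\; A_{i-1}\Bigl(1 - \frac{1}{n-i+2}\Bigr) \;=\; A_{i-1}\cdot\frac{n-i+1}{n-i+2},
\]
which telescopes to $A_i \geq A_1(n-i+1)/n$. Applying (1) at index $i$ then gives $x_i \geq A_i/(2(n-i+1)) \geq A_1/(2n)$.

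Next I would feed this into constraint (3): $y_i \geq x_i/(2n-2i+1) \geq A_1/\bigl(2n(2n-2i+1)\bigr)$. Summing over $i$ and reindexing with $k = n-i+1$ turns the bound into a shifted harmonic sum, which in turn is bounded below by an integral:
\[
\sum_{i=1}^n y_i \;\geq\; \frac{A_1}{2n}\sum_{k=1}^n \frac{1}{2k-1} \;\geq\; \frac{A_1}{2n}\int_1^{n+1}\frac{dt}{2t-1} \;=\; \frac{A_1\ln(2n+1)}{4n} \;\geq\; \frac{A_1\ln n}{4n},
\]
where the middle inequality uses that $t\mapsto 1/(2t-1)$ is decreasing on $[1,\infty)$, and the last inequality uses $2n+1 \geq n$.

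The main obstacle lies in the first step. Constraints (1) and (2) are genuine inequalities, so in principle $x_{i-1}$ could exceed the tight value $A_{i-1}/(2(n-i+2))$, causing $A_i$ and subsequent $x_j$'s to shrink below the claimed bounds; thus the telescoping above does not hold pointwise, only in the extremal regime. To close this gap rigorously I would use a variational/exchange argument: show that any deviation from the tight configuration at a single step can be reshuffled without increasing the objective $\sum y_i$, so that the minimum is attained when constraints (1) and (2) are simultaneously tight at every index, which is exactly the regime handled by the recursion above.
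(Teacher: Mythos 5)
Your proposal follows the same skeleton as the paper's proof: reduce to the configuration where constraints (1) and (2) are tight, telescope to obtain $A_i \geq A_1(n-i+1)/n$ and $x_i \geq A_1/(2n)$, and then lower-bound the resulting shifted-harmonic sum. Your final estimate via $\int_1^{n+1} dt/(2t-1) = \tfrac{1}{2}\ln(2n+1)$ is correct and only cosmetically different from the paper's (which uses $\frac{1}{2n-2(i-1)-1}\geq\frac{1}{2n-2(i-1)}$ and $H_n\geq\ln(n+1)$).

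The genuine gap is precisely the step you flag and then defer. As you yourself note, the pointwise bounds $A_i \geq A_1(n-i+1)/n$ and $x_i \geq A_1/(2n)$ are false for general feasible points (take $x_1$ huge; then the lower bound on $A_2$ collapses and all later $x_j$ may be $0$), so the whole conclusion hangs on the claim that the all-tight configuration minimizes $\sum_i y_i$ --- and ``I would use a variational/exchange argument'' is a plan, not a proof. The exchange is not routine: decreasing a slack $x_i$ toward its bound saves $c_i\varepsilon$ in the objective (where $c_i=\frac{1}{2n-2(i-1)-1}$), but through constraint (2) it raises the lower bound on $A_{i+1}$ by $2\varepsilon$, which through constraint (1) forces $x_{i+1}$ up by $\varepsilon/(n-i)$, and this perturbation propagates to $A_{i+2},x_{i+2},\dots$; whether the net effect is nonnegative depends on the specific coefficients and must actually be computed (or certified by an explicit dual-feasible multiplier vector). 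The paper closes this step differently, by observing that the system is a linear program with $3n-1$ variables and $3n-1$ constraints, so the minimizing vertex is the point where every constraint is an equality --- itself a somewhat informal argument (it tacitly uses boundedness of the LP, which requires the implicit nonnegativity of the quantities to which the lemma is applied), but it is an argument, whereas your proposal stops exactly where the real work begins.
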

\begin{proof}
We can view the set of above inequalities 1, 2, 3 as a linear program with $(3n-1)$ variables (all $x_i$ and $y_i$ for $1 \leq i \leq n$ and  $A_i$ for $2 \leq i \leq q$; we treat $A_1$ as a constant) and $(3n-1)$ constraints. Thus, we know that $\sum_i y_i$ achieves the minimum when each from the above constraints is satisfied with equality.

We show by induction that the values $x_i = \frac{A_1}{2n}$ and $A_i = \frac{2n - 2(i-1)}{2n} A_1$ constitute the solution to the set of equalities that is derived by taking constraints 1, and 2, and treating them as equalities. We can show that by induction: First, consider the base step, i.e., the case when $i = 1$. Since constraint 2 is defined only for $i \geq 2$, we need to check only constraint 1. This constraint written in the form of equality gives us:
\begin{align*}
&x_{1} = \frac{A_1}{2n-2(i-1)} = \frac{|A_1|}{2n} \textrm{,}
\end{align*}
which proves that our hypothesis holds for $i=1$.
Next, let us assume that from the equalities 1 and 2 taken for $i < j$, it follows that $x_i = \frac{A_1}{2n}$ and $A_i = \frac{2n - 2(i-1)}{2n} A_1$, for $i < j$. We will show that from equalities 1 and 2 for $i = j$ it follows that $x_j = \frac{A_1}{2n}$ and $A_j = \frac{2n - 2(j-1)}{2n} A_1$:
\begin{align*}
&x_{j} = \frac{A_j}{2n-2(j-1)} = \frac{1}{2n-2(j-1)} \cdot \frac{2n - 2(j-1)}{2n} A_1 = \frac{|A_1|}{2n} \textrm{,}\\
&A_j = A_{j-1} - 2x_{j-1} = \frac{2n - 2((j-1)-1)}{2n} A_1 -  2\frac{|A_1|}{2n} = \frac{2n - 2(j-1)}{2n} A_1\textrm{.}
\end{align*}
From constraint 3, treated as equality, we get:
\begin{align*}
y_i = \frac{x_i}{2n-2(i-1)-1} = \frac{|A_1|}{2n(2n-2(i-1)-1)} \textrm{.}
\end{align*}
Thus, we infer that $\sum_{i=1}^n y_i$ is minimised when $y_i = \frac{|A_1|}{2n(2n-2(i-1)-1)}$. We recall that $H_n$ denotes the $n$-th harmonic number ($H_n = \sum_{i=1}^n \frac{1}{i}$), and that $\ln(n+1) < H_n \leq 1 + \ln (n)$. As a result we get:
\begin{align}
\sum_{i=1}^n y_i &\geq \frac{A_1}{2n} \sum_{i=1}^{n}\frac{1}{(2n-2(i-1)-1)} \geq \frac{A_1}{2n} \sum_{i=1}^{n}\frac{1}{2n-2(i-1)} \\
            &= \frac{A_1}{4n} \sum_{i=1}^{n}\frac{1}{(n-i+1))} = \frac{A_1}{4n} H_n \geq A_1 \frac{\ln n}{4n} \textrm{.}
\end{align}
\end{proof}

\begin{theorem}\label{thm:localSearch2}
For binary domains and natural distributions the local search algorithm from \autoref{alg:local-search} with $\ell = 2$ is a $\frac{\ln (k/2)}{2\ln (k/2) - 1}\left(|X| + \frac{6|X|}{k} \right)$-additive-approximation algorithm for \textsc{OptimalHamiltonRepresentation}.
\end{theorem}
\begin{proof}
In this proof we use similar idea to the proof of \autoref{thm:localSearch1}, but the proof is technically more involved. As before, by $A^{*}$ and $A$ we denote the optimal solution and the solution returned by the local search algorithm, respectively. Similarly to the previous proof, by $X_{\mathrm{ex}} \subset X$ we denote the set of all attributes for which $A$ achieves exact match with $\pi$, i.e.,
\begin{align*}
X_{\mathrm{ex}} = \left\{X_i \in X: r_i^1(A) = \pi_i^1 \right\} \textrm{.}
\end{align*}
We also define the set $X_{\mathrm{aex}} \subset X$ of all attributes for which $A$ achieves almost exact match with $\pi$, i.e.,
\begin{align*}
X_{\mathrm{aex}} = \left\{X_i \in X: |r_i^1(A) - \pi_i^1| \leq \frac{1}{k} \right\} \textrm{.}
\end{align*}
Let $q_f = \frac{|A \setminus A^{*}|}{2}$ and $q = \lfloor q_f \rfloor$. Let us rename the candidates from $A \setminus A^{*}$ so that $A \setminus A^{*} = \{a_1, a_2, \dots, a_{2q_f}\}$, and the candidates from $A^{*} \setminus A$, so that $A^{*} \setminus A = \{a_1', a_2', \dots, a_{2q_f}'\}$. Hereinafter, we follow a convention in which the elements from $A^{*} \setminus A$ are marked with primes. Renaming of the candidates that we described above, allows us to the define the following sequence of pairs $(a_1, a_1'), \dots, (a_{2q_f}, a_{2q_f}')$ in which each element from $A \setminus A^{*}$ is paired with (assigned to) exactly one element from $A^{*} \setminus A$.

For each pair $(a_j, a_j')$ and for each attribute $X_i$ we consider what happens if we replace $a_i$ in $A \setminus A^{*}$ with $a_i'$. One of three scenarios can happen, after such a replacement:
\begin{enumerate}
\item The value $r_i^0(A)$ can increase by $\frac{1}{k}$ (in this case $r_i^1(A)$ decreases by $\frac{1}{k}$), which we denote by \mbox{$X_{i}(a_j \leftrightarrow a_j') = 1$},
\item The value $r_i^0(A)$ can decrease by $\frac{1}{k}$ (in this case $r_i^1(A)$ increases by $\frac{1}{k}$), which we denote by \mbox{$X_{i}(a_j \leftrightarrow a_j') = -1$}, or
\item The value $r_i^0(A)$ can remain unchanged (in this case $r_i^1(A)$ also remains unchanged), which we denote by \mbox{$X_{i}(a_j \leftrightarrow a_j') = 0$}.
\end{enumerate}

We follow a procedure which, in $q$ consecutive steps, replaces pairs of candidates from $A \setminus A^{*}$, with the pairs of candidates from $A^{*} \setminus A$. A pair $(a_i, a_j)$ is always replaced with $(a_i', a_j')$. In other words, when looking for a pair from $A^{*} \setminus A$ to replace $(a_i, a_j)$ we follow the assignment rule induced by renaming, as described above. The way in which we create pairs within $A \setminus A^{*}$ for replacement (the way how $(a_i, a_j)$ is selected in each of $q$ consecutive steps) will be described later.
After this whole procedure $A$ can differ from $A^{*}$ with at most one element, hence, having distance to the optimal distribution at most equal to $|X|\frac{2}{k}$. Let us define the sequence of sets $\bar{A}_1, \bar{A}_2, \dots, \bar{A}_{q}$ in the following way: we define $\bar{A}_1 = A \setminus A^{*}$, and we define $\bar{A}_{j+1}$ as $\bar{A}_{j}$ after removing the pair from $A \setminus A^{*}$ that was used in replacement in the $j$-th step of our procedure.

As before, for each $B \subseteq A \setminus A^{*}$ and $B' \subseteq A^{*} \setminus A$, and for each attribute $X_i \in X$ we define the gain $g_{i}(B, B')$:
\begin{align*}
g_{i}(B, B') = \sum_{j \in \{1, 2\}}\left(|r_i^j(A) - \pi_i^j| - |r_i^j((A \setminus B) \cup B') - \pi_i^j|\right) \textrm{.}
\end{align*}
Similarly as in the proof of \autoref{thm:localSearch1}, we observe that for $X_i \notin X_{\mathrm{aex}}$ the negative gains cumulate: i.e., that for each sequences of disjoint sets $B_1$, $B_2, \dots$, $B_s$ and $B_1'$, $B_2', \dots$, $B_s'$ such that for every $1 \leq j \leq s$, $B_j \subseteq A \setminus A^{*}$, $B_j' \subseteq A^{*} \setminus A$, and $|B_j| = |B_j'| \leq 2$ we have that:
\begin{align}\label{eq:negativeCummulate2}
g_{i}(\bigcup_jB_j, \bigcup_jB_j') \leq \sum_j g_{i}(B_j, B_j') \textrm{.}
\end{align}
Why is this the case? If $X_i \notin X_{\mathrm{aex}}$, then the distance between $A$ and the target distribution on attribute $X_i$ is at least equal to $2\cdot\frac{2}{k}$. In other words: $|r_i^0(A) - \pi_i^0| \geq \frac{2}{k}$ and $|r_i^1(A) - \pi_i^1| \geq \frac{2}{k}$. Without loss of generality let us assume that $r_i^0(A) - \pi_i^0 \geq \frac{2}{k}$. Since each set $B_j$ and each set $B_j'$ has at most two elements, replacing $B_j$ with $B_j'$ can change the distance between $A$ and the target distribution, for each attribute, by at most $\frac{2}{k}$. Consequently, if $g_{i}(B_j, B_j')$ is negative, then it means that replacing $B_j$ with $B_j'$ makes the difference $r_i^0(A) - \pi_i^0$ even greater. Thus, each such replacement with the negative gain $g$ causes $A$ to move further from the target distribution by the value $g$. Naturally, each replacement with the positive gain $g$ causes $A$ to move closer to the target distribution by at most $g$. Consequently, after the sequence of replacement $\cup_j B_j \leftrightarrow B_j'$ the distance on the attribute $X_i$ cannot improve by more than $\sum_j g_{i}(B_j, B_j')$.

In contrast to the proof of \autoref{thm:localSearch1}, we note that here we require that $X_i \notin X_{\mathrm{aex}}$ instead of $X_i \notin X_{\mathrm{ex}}$---the above observation is not valid if $X_i \in X_{\mathrm{aex}}$ even if $X_i \notin X_{\mathrm{ex}}$.\footnote{
Consider an example in which $\pi_i^1 = \frac{1}{k}$ and $r_i^1(A) = \frac{2}{k}$. Let us consider sets $B=\{b_1, b_2\}, B'=\{b_1', b_2'\}, C=\{c_1, c_2\}, C'=\{c_1', c_2'\}$ such that: $X_i(c_1) = X_i(c_2) = X_i(b_1') = X_i(b_2') = d_i^1$, and $X_i(c_1') = X_i(c_2') = X_i(b_1) = X_i(b_2) = d_i^2$, Thus, we have that:
\begin{itemize}
\item Replacing $B$ with $B'$ results with $r_i^1(A) = \frac{4}{k}$.
\item Replacing $C$ with $C'$ results with $r_i^1(A) = 0$.
\item Replacing $B \cup C$ with $B' \cup C'$ results with $r_i^1(A) = \frac{2}{k}$. 
\end{itemize}
We can repeat this reasoning for $r_i^2(A)$, thus having, $g_{i}(B, B') = -\frac{4}{k}$, $g_{i}(C, C') = 0$ and $g_{i}(B \cup C, B' \cup C') = 0$.
}

\begin{table}[t!]
\centering
\begin{tabular}{ l | c | c | c | c | c | c | c }
  & $X_{i} = X_1$ & $X_{i} = X_2$ & $X_{i} = X_3$ & $X_{i} = X_4$ & $X_{i} = X_5$ & $X_{i} = X_6$ & $X_{i} = X_7$ \\
  \hline
  $X_{i}(a_1 \leftrightarrow a_1')$ & 1 & 1 & 1 & 1 & 0 & 0 & -1 \\
  \hline
  $X_{i}(a_2 \leftrightarrow a_2')$ & -1 & -1 & 1 & 0 & 0 & 1 & 0 \\
  \hline
  $X_{i}(a_3 \leftrightarrow a_3')$ & 0 & -1 & -1 & 0 & 1 & 0 & 1 \\
  \hline
  $X_{i}(a_4 \leftrightarrow a_4')$ & -1 & 1 & -1 & -1 & 1 & 0 & -1
\end{tabular}
\caption{An example illustrating the concept of annihilating pairs. In this example we have $X_{\mathrm{ex}} = \{X_1, X_2, X_3, X_4, X_5, X_6, X_7\}$ and $\bar{A}_{1} = \{a_1, a_2, a_3, a_4\}$. The cell in row ``$X_{i}(a_j \leftrightarrow a_j')$" for $j \in [4]$ and in column ``$X_{i} = X_\ell$'' for $\ell \in [7]$ denotes the value of $X_{\ell}(a_j \leftrightarrow a_j')$.
We recall that $X_{i}(a_i \leftrightarrow a_i') = 1$ if replacing $a_i$ with $a_i'$ moves $A$ further from the target distribution in one direction and $X_{i}(a_i \leftrightarrow a_i') = -1$ if replacing $a_i$ with $a_i'$ moves $A$ further from the target distribution in the other direction. Here, we have $W_1(X_1) = \{\big((a_1, X_1), (a_2, X_1)\big), \big((a_1, X_1), (a_4, X_1)\big)\}$, $W_1(X_2) = \{\big((a_1, X_2), (a_2, X_2)\big), \big((a_1, X_2), (a_3, X_2)\big)\}$, $W_1(X_3) = \{\big((a_1, X_3), (a_3, X_3)\big), \big((a_1, X_3), (a_4, X_3)\big), \big((a_2, X_3), (a_3, X_3)\big), \big((a_2, X_3), (a_4, X_3)\big)\}$, etc. Further, $W_1 = W_1(X_1) \cup W_1(X_2) \cup W_1(X_3) \cup W_1(X_4) \cup W_1(X_5) \cup W_1(X_6) \cup W_1(X_7)$. There are many choices for the set $W$, but it must hold that $P = |W| = 6$; we give the following example: $W =$ $\{\big((a_1, X_1), (a_2, X_1)\big)$, $\big((a_1, X_2), (a_2, X_2)\big)$, $\big((a_1, X_3), (a_3, X_3)\big)$, $\big((a_2, X_3), (a_4, X_3)\big)$, $\big((a_1, X_4), (a_4, X_4)\big)$, $\big((a_1, X_7), (a_3, X_7)\big)\}$. } 
\label{table:annihilatingPairsExample}
\end{table}

Next, for each $\bar{A}_{j}$, and each attribute $X_i \in X_{\mathrm{ex}}$, we define a set $W_j$ of annihilating pairs as:
\begin{align*}
W_j(X_i) = \left\{((a_x, X_i), (a_y, X_i)): a_x \in \bar{A}_{j}; a_y \in \bar{A}_{j}; x < y; X_{i}(a_x \leftrightarrow a_x') = -X_{i}(a_y \leftrightarrow a_y') \right\} \textrm{.}
\end{align*}
Intuitively, if $((a_x, X_i), (a_y, X_i)) \in W_j$, then both replacing $a_x$ with $a_x'$ and replacing $a_y$ with $a_y'$ move the original set $A$ (i.e., the set before any of the replacements) further from the target distribution for the attribute $X_i$, but replacing $\{a_x, a_y\}$ with $\{a_x', a_y'\}$ does not change the distance of $A$ from the target distribution for the attribute $X_i$.

For each $j$, we set $W_j = \cup_{i \in X_{\mathrm{ex}}}W_j(X_i)$.
Let us denote by $P$ the number of annihilated pairs of candidates considered in the process of replacing candidates from $A \setminus A^{*}$ with candidates from $A^{*} \setminus A$.
Formally, $P$ is the size of the maximal subset $W \subseteq W_1$ composed of disjoint annihilating pairs,
i.e., for each $i \leq p$, for each $a_x$, and for each $a_y$, if $((a_x, X_i), (a_y, X_i)) \in W$ then there exists no $b \neq a_y$ such that $((a_x, X_i), (b, X_i)) \in W$ or $((b, X_i), (a_x, X_i)) \in W$.
From \autoref{lemma:localSearch2HelpingLemma}, after defining each bucket $X_i$ as containing $x_i$ white balls and $y_i$ black balls, where $x_i$ (respectively, $y_i$) is the number of candidates $a_j \in \bar{A_1}$ with the value $X_{i}(a_j \leftrightarrow a_j')$ equal to 1 (respectively, -1), it follows that $W_1 \geq \frac{P^2}{|X_{\mathrm{ex}}|}$. The concept of annihilating pairs is explained on example in \autoref{table:annihilatingPairsExample}.

We are now ready to describe the way in which we select pairs from $A \setminus A^{*}$ in our procedure. In each step $j$, the pair $(a_{j, 1}, a_{j, 2})$ from $A \setminus A^{*}$ is selected in the following way. For each candidate $a$ let $s_{j, 1}(a)$ be the number of pairs $p$ in $W_j$ such that $p = ((a, \cdot), (\cdot, \cdot))$  or $p = ((\cdot, \cdot), (a, \cdot))$, let $a_{j, 1}$ be such that $s_{j,1}(a_j) = \max_{a \in \bar{A_j}} s_{j,1}(a)$, and let $s_{j,1} = s_{j,1}(a_j)$.
Next, for each candidate $b$ let $s_{j,2}(b)$ be the number of pairs $p$ in $W_j$ such that $p = ((a_{j, 1}, \cdot), (b, \cdot))$ or $p = ((b, \cdot), (a_{j, 1}, \cdot))$, let $a_{j, 2}$ be such that $s_{j,2}(b) = \max_{b \in \bar{A_j}} s_{j,2}(b)$, and let $s_{j,2} = s_{j,2}(a_{j, 2})$.

Let us consider the procedure described above on the example from \autoref{table:annihilatingPairsExample}. The candidate $a_1$ belongs to 8 pairs in $W_1$ ($a_1$ belongs to 2 pairs for attribute $X_1$, $X_2$, and $X_3$, and to one pair for attributes $X_4$ and $X_7$), thus: $s_{1,1}(a_1) = 8$. Moreover, $s_{1,1}(a_2) = 5$, $s_{1,1}(a_3) = 6$, and $s_{1,1}(a_4) = 7$. Consequently, $a_1$ will be the candidate that will replaced with $a_1'$ in the first step: $a_{j, 1} = a_1$ and $s_{j,1} = 8$. Further, $s_{1,2}(a_2) = 2$ (there are two annihilating pairs including $a_1$ and $a_2$, i.e.,: $\big((a_1, X_1), (a_2, X_1)\big)$ and $\big((a_1, X_2), (a_2, X_2)\big)$); similarly: $s_{1,2}(a_3) = 3$, and $s_{1,2}(a_4) = 3$. Thus, an arbitrary of the two candidates, $a_3$ and $a_4$, say $a_3$, will be the second candidate that will be replaced with $a_3'$ in the first step. In the second step only two candidates, $a_2$ and $a_4$, are left, so both will be replaced with $a_2'$ and $a_4'$ in the second step. Nevertheless, let us illustrate our definitions also in the second step of the replacement procedure. The set $\bar{A_2}$ consists of two remaining candidates: $a_2$ and $a_4$. We have $W_2 = \{\big((a_2, X_2), (a_4, X_2)\big), \big((a_2, X_3), (a_4, X_3)\big)\}$. Naturally, $s_{2,1}(a_2) = s_{2,1}(a_4) = s_{2,2}(a_2) = s_{2,2}(a_4) = 2$.
 
We want now to derive bounds on the values $s_{j, 1}$ and $s_{j, 2}$. The following inequalities hold:
\begin{enumerate}
\item $s_{j,1} \geq \frac{2|W_j|}{2q_f-2(j-1)}$ for each $1 \leq j \leq q$.

$W_j$ contains pairs of candidates belonging to $\bar{A}_{j}$. $\bar{A}_{1}$ has $2q_f$ candidates, and $\bar{A}_{j+1}$ is obtained from $\bar{A}_{j}$ by removing two candidates. Consequently, $\bar{A}_{j}$ has $2q_f-2(j-1)$ candidates, and thus, $W_j$ contains pairs of $2q_f-2(j-1)$ different candidates. From the pigeonhole principle it follows that there exists a candidate that belongs to at least $\frac{2|W_j|}{2q_f-2(j-1)}$ pairs. Naturally, we also get the weaker constraint: $s_{j,1} \geq \frac{|W_j|}{2q_f-2(j-1)}$.
\item $|W_j| \geq |W_{j-1}| - 2s_{j-1, 1}$ for each $2 \leq j \leq q$.

Each candidate in $W_{j-1}$ belongs to at most $s_{j-1, 1}$ pairs (this follows from the definition of $s_{j-1, 1}$). $W_j$ contains all pairs that $W_{j-1}$ contained, except for the pairs involving $a_{j-1, 1}$, $a_{j-2, 2}$ (to obtain $\bar{A}_{j}$, we removed these two candidates from $\bar{A}_{j-1}$). Consequently, $W_j$ is obtained from $W_{j-1}$ by removing at most $2s_{j-1, 1}$ pairs of candidates.

\item $s_{j,2} \geq \frac{s_{j,1}}{2q_f-2(j-1)-1}$ for each $1 \leq j \leq q$.

In $W_j$, there are $s_{j,1}$ pairs of candidates involving $a_{j, 1}$. As we noted before, $W_j$ contains pairs of $2q_f-2(j-1)$ different candidates. Thus, in $W_j$, $a_{j, 1}$ is paired with at most $2q_f-2(j-1)-1$ candidates. From the pigeonhole principle it follows that $a_{j, 1}$ must be paired with some candidate at least $\frac{s_{j,1}}{2q_f-2(j-1)-1}$ times.
\end{enumerate}
From \autoref{lemma:constraintsEstimation} we get that:
\begin{align}\label{eq:elementsNumEstimation}
\sum_{j=1}^q s_{j,2} \geq \frac{|W_1|\ln q}{4q} \textrm{.}
\end{align}

\begin{figure}[tb]
\hspace{-1cm}\includegraphics[scale=1.0]{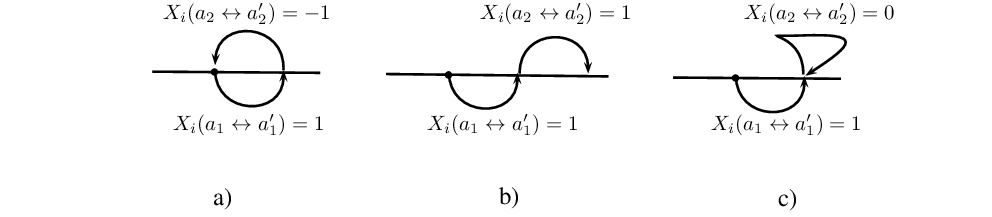}
\caption{Figure illustrating that for $X_i \in X_{\mathrm{ex}}$, $g_{i}(\{a_1, a_2\}, \{a_1', a_2'\})$ is greater than $(g_{i}(a_1, a_1') + g_{i}(a_2, a_2'))$ if and only if $((a_1, X_i), (a_2, X_i))$ is an annihilating pair. The figure presents 3 scenarios: a) $((a_1, X_i), (a_2, X_i))$ is an annihilating pair. Both replacing $a_1$ with $a_1'$ and replacing $a_2$ with $a_2'$ moves us further from the target distribution for attribute $X_i$ (the target distribution is marked as a black dot), thus $g_{i}(a_1, a_1') = -\frac{2}{k}$ and $g_{i}(a_2, a_2') = -\frac{2}{k}$. However these changes annihilate, and $g_{i}(\{a_1, a_2\}, \{a_1', a_2'\}) = 0$. b) $g_{i}(a_1, a_1') = -\frac{2}{k}$ and $g_{i}(a_2, a_2') = -\frac{2}{k}$, but these changes do not annihilate, and thus: $g_{i}(\{a_1, a_2\}, \{a_1', a_2'\}) = -\frac{4}{k}$. c) $g_{i}(a_1, a_1') = -\frac{2}{k}$ and $g_{i}(a_2, a_2') = 0$, if at least one change does not move the solution against the target distribution, the changes do not annihilate, and $g_{i}(\{a_1, a_2\}, \{a_1', a_2'\}) = g_{i}(a_1, a_1') + g_{i}(a_2, a_2')$.}
\label{fig:proof1}
\end{figure}

Before we proceed further let us make three observations regarding annihilating pairs. First, we note that for each $X_i \in X_{\mathrm{ex}}$, and each $a_x$ and $a_y$, if the value $g_{i}(\{a_x, a_y\}, \{a_x', a_y'\})$ is different from $(g_{i}(a_x, a_x') + g_{i}(a_y, a_y'))$ than it is greater from $(g_{i}(a_x, a_x') + g_{i}(a_y, a_y'))$ by $\frac{4}{k}$. We also note that $g_{i}(\{a_x, a_y\}, \{a_x', a_y'\})$ is greater than $(g_{i}(a_x, a_x') + g_{i}(a_y, a_y'))$ if and only if the changes $X_{i}(a_x \leftrightarrow a_x')$ and $X_{i}(a_y \leftrightarrow a_y')$ annihilate (this is illustrated in \autoref{fig:proof1}). Further, we recall that the value $s_{j,2}$ counts all attributes for which $a_{j,1}$ and $a_{j,2}$ constitute an annihilating pair. 
Thus, for each $1 \leq j \leq q$::
\begin{align}\label{eq:firstObservation}
\sum_{i \in X_{\mathrm{ex}}} g_{i}(\{a_{j,1}, a_{j,2}\}, \{a_{j,1}', a_{j,2}'\}) = \sum_{i \in X_{\mathrm{ex}}} \left(g_{i}(a_{j,1}, a_{j,1}') + g_{i}(a_{j,2}, a_{j,2}') \right) + s_{j,2}\frac{4}{k}
\end{align}

\begin{figure}[tb]
\hspace{-0.5cm}\includegraphics[scale=1.0]{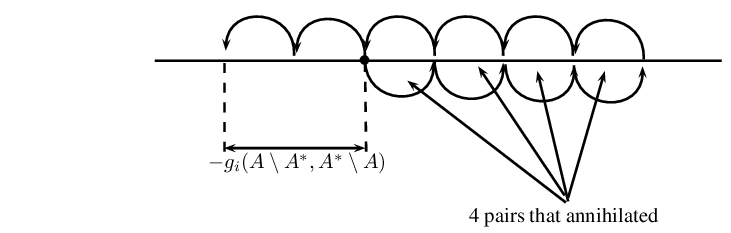}
\caption{Figure illustrating the effect of replacing 10 candidates for an attribute $X_i \in X_{\mathrm{ex}}$. Each replacement imposes a negative gain: $g_{i}(a_j, a_j') = -\frac{2}{k}$ for $1\leq j \leq 10$. Thus, $\sum_{a \in A \setminus A^{*}} g_i(a, a') = -\frac{20}{k}$. In this example four pairs annihilated, and, consequently, $g_i(A \setminus A^{*}, A^{*} \setminus A) = -\frac{4}{k}$.}
\label{fig:proof2}
\end{figure}

Our second observation is similar in spirit to the first one. We note that for each $X_i \in X_{\mathrm{ex}}$:
\begin{align*}
g_i(A \setminus A^{*}, A^{*} \setminus A) - \sum_{a \in A \setminus A^{*}} g_i(a, a') = \text{the number of pairs that annihilated for $X_i$ $\times$ $\frac{4}{k}$.}
\end{align*}
The above equality is illustrated in \autoref{fig:proof2}. As a consequence, we get that:
\begin{align*}
\sum_{X_i \in X_{\mathrm{ex}}}\Big(g_i(A \setminus A^{*}, A^{*} \setminus A) - \sum_{a \in A \setminus A^{*}} g_i(a, a')\Big) =  \text{the number of pairs that annihilated $\times$ $\frac{4}{k}$.}
\end{align*}
We recall that after the replacement procedure $A$ can differ from $A^{*}$ with at most one element, hence, having distance to the optimal distribution at most equal to $|X|\frac{2}{k}$. Thus:
\begin{align}\label{eq:allAnnihilatedPairs}
\sum_{X_i \in X_{\mathrm{ex}}}\Big(g_i(A \setminus A^{*}, A^{*} \setminus A) - \sum_{j=1}^q \left( g_{i}(a_{j,1}, a_{j,1}') + g_{i}(a_{j,2}, a_{j,2}') \right)\Big) \leq P \cdot \frac{4}{k} + |X|\frac{2}{k} \textrm{.}
\end{align}

Our third observation says that:
\begin{align}\label{eq:aexEstimation}
\sum_{X_i \in X_{\mathrm{aex}} \setminus X_{\mathrm{ex}}} g_i(A \setminus A^{*}, A^{*} \setminus A) - \sum_{X_i \in X_{\mathrm{aex}} \setminus X_{\mathrm{ex}}} \sum_{j=1}^q  g_{i}(\{a_{j,1}, a_{j,2}\}, \{a_{j,1}', a_{j,2}'\}) \leq \left|X_{\mathrm{aex}} \setminus X_{\mathrm{ex}}\right| \textrm{.}
\end{align}
Where does \autoref{eq:aexEstimation} come from? Let us use the geometric interpretation, like the one from \autoref{fig:proof2}. Let us consider an $X_i$, $X_i \in X_{\mathrm{aex}}$. For $X_i$, $A$ lies in a distance of $\frac{2}{k}$ on the left or on the right from the target distribution. Without loss of generality, let us assume it lies on the right. Now, if $g_{i}(\{a_{j,1}, a_{j,2}\}, \{a_{j,1}', a_{j,2}'\}) < 0$ then replacing $(a_{j,1}, a_{j,2})$ with $(a_{j,1}', a_{j,2}')$ moves the current solution right. If $g_{i}(\{a_{j,1}, a_{j,2}\}, \{a_{j,1}', a_{j,2}'\}) = \frac{2}{k}$, then replacing $(a_{j,1}, a_{j,2})$ with $(a_{j,1}', a_{j,2}')$ moves the current solution by $\frac{2}{k}$ on left. If $g_{i}(\{a_{j,1}, a_{j,2}\}, \{a_{j,1}', a_{j,2}'\}) = 0$, then replacing $(a_{j,1}, a_{j,2})$ with $(a_{j,1}', a_{j,2}')$ either does not move the solution or moves it by $\frac{4}{k}$ on left.

Let us define $y_i =  g_i(A \setminus A^{*}, A^{*} \setminus A) - \sum_{j=1}^q  g_{i}(\{a_{j,1}, a_{j,2}\}, \{a_{j,1}', a_{j,2}'\})$. If the solution moves $q$ times to the right, then the total gain $-\sum_{j=1}^q  g_{i}(\{a_{j,1}, a_{j,2}\}, \{a_{j,1}', a_{j,2}'\})$ will be maximised, achieving $q\frac{4}{k}$. In such a case however, the value $g_i(A \setminus A^{*}, A^{*} \setminus A)$ will be equal to $-q\frac{4}{k}$, and thus the value $y_i$ will be equal to 0. After some consideration, the reader will see that the value $y_i$ is maximised if the current solution moves $\frac{q}{2}$ times right and $\frac{q}{2}$ times left, each time by the value of $\frac{4}{k}$. This way, the moves to the right induce the total gain of $\frac{q}{2}\cdot\frac{4}{k}$, the moves to the left induce the zero gain, but as a consequence, the current solution for $X_i$ does not change ($g_i(A \setminus A^{*}, A^{*} \setminus A) = 0$). Thus, for each $X_i \in X_{\mathrm{aex}}$, $y_i$ is upper bounded by $\frac{q}{2}\cdot\frac{4}{k} \leq 1$, which proves \autoref{eq:aexEstimation}.

We can further proceed with the proof by observing that from the condition in the ``while'' loop we get that for each $1 \leq j \leq q$:
\begin{align*}
0 &\geq \sum_ig_{i}(\{a_{j,1}, a_{j,2}\}, \{a_{j,1}', a_{j,2}'\}) \\
  &\geq \sum_{i \in X_{\mathrm{ex}}} g_{i}(\{a_{j,1}, a_{j,2}\}, \{a_{j,1}', a_{j,2}'\}) + \sum_{i \notin X_{\mathrm{ex}}} g_{i}(\{a_{j,1}, a_{j,2}\}, \{a_{j,1}', a_{j,2}'\}) \\
  & \textrm{From~\eqref{eq:firstObservation}:} \\
  &\geq \sum_{i \in X_{\mathrm{ex}}} \left( g_{i}(a_{j,1}, a_{j,1}') + g_{i}(a_{j,2}, a_{j,2}') \right) + s_{j,2}\frac{4}{k} + \sum_{i \notin X_{\mathrm{ex}}} g_{i}(\{a_{j,1}, a_{j,2}\}, \{a_{j,1}', a_{j,2}'\}) \textrm{.}
\end{align*}
Thus, we get:
\begin{align}\label{eq:noGainInEx2}
-\sum_{i \in X_{\mathrm{ex}}} \left( g_{i}(a_{j,1}, a_{j,1}') + g_{i}(a_{j,2}, a_{j,2}') \right) - \frac{4}{k}s_{j,2} >  + \sum_{i \notin X_{\mathrm{ex}}} g_{i}(\{a_{j,1}, a_{j,2}\}, \{a_{j,1}', a_{j,2}'\}) \textrm{.}
\end{align}

Next, we give the following sequence of inequalities:
 \begin{align*}
g &= \sum_i g_i(A \setminus A^{*}, A^{*} \setminus A) \\
  &= \sum_{X_i \in X_{\mathrm{ex}}} g_i(A \setminus A^{*}, A^{*} \setminus A) + \sum_{X_i \in X_{\mathrm{aex}} \setminus X_{\mathrm{ex}}} g_i(A \setminus A^{*}, A^{*} \setminus A) + \sum_{X_i \notin X_{\mathrm{aex}}} g_i(A \setminus A^{*}, A^{*} \setminus A) 
  \end{align*}

From~\autoref{eq:negativeCummulate2}, for all $i \notin X_{aex}$, we have $g_i(A \setminus A^{*}, A^{*} \setminus A) \leq \sum_{a \in A \setminus A^{*}} g_i(a, a')$. Since the set $A \setminus A^{*}$ and $\bigcup_{j=1}^q \{a_{j,1}, a_{j,2}\}$ can differ by at most one candidate (which induces distance $\frac{2|X|}{k}$ to the optimal solution), we have that
\begin{align*}
\sum_{X_i \notin X_{\mathrm{aex}}} g_i(A \setminus A^{*}, A^{*} \setminus A) \leq \sum_{X_i \notin X_{\mathrm{aex}}} \sum_{j=1}^q  g_{i}(\{a_{j,1}, a_{j,2}\}, \{a_{j,1}', a_{j,2}'\}) + \frac{2|X|}{k} \textrm{.}
\end{align*}
And, as a consequence:
\begin{align*}
g  \leq\;\;& \sum_{X_i \in X_{\mathrm{ex}}} g_i(A \setminus A^{*}, A^{*} \setminus A) + \sum_{X_i \in X_{\mathrm{aex}} \setminus X_{\mathrm{ex}}} g_i(A \setminus A^{*}, A^{*} \setminus A) \\
           &  + \sum_{X_i \notin X_{\mathrm{aex}}} \sum_{j=1}^q  g_{i}(\{a_{j,1}, a_{j,2}\}, \{a_{j,1}', a_{j,2}'\}) + \frac{2|X|}{k} \\
   \leq\;\;& \sum_{X_i \in X_{\mathrm{ex}}} g_i(A \setminus A^{*}, A^{*} \setminus A) + \sum_{X_i \in X_{\mathrm{aex}} \setminus X_{\mathrm{ex}}} g_i(A \setminus A^{*}, A^{*} \setminus A) \\
           & + \sum_{X_i \notin X_{\mathrm{ex}}} \sum_{j=1}^q  g_{i}(\{a_{j,1}, a_{j,2}\}, \{a_{j,1}', a_{j,2}'\}) - \sum_{X_i \in X_{\mathrm{aex}} \setminus X_{\mathrm{ex}}} \sum_{j=1}^q  g_{i}(\{a_{j,1}, a_{j,2}\}, \{a_{j,1}', a_{j,2}'\}) + \frac{2|X|}{k} \textrm{.}
\end{align*}
From \autoref{eq:aexEstimation} we get:
\begin{align*}
g \leq \sum_{X_i \in X_{\mathrm{ex}}} g_i(A \setminus A^{*}, A^{*} \setminus A) + \sum_{X_i \notin X_{\mathrm{ex}}} \sum_{j=1}^q  g_{i}(\{a_{j,1}, a_{j,2}\}, \{a_{j,1}', a_{j,2}'\}) + \frac{2|X|}{k} + \left|X_{\mathrm{aex}} \setminus X_{\mathrm{ex}}\right| \textrm{.}
\end{align*}
From \autoref{eq:noGainInEx2}:
\begin{align*}
&g \leq \frac{2|X|}{k} + \left|X_{\mathrm{aex}} \setminus X_{\mathrm{ex}}\right|  + \sum_{X_i \in X_{\mathrm{ex}}} g_i(A \setminus A^{*}, A^{*} \setminus A) \\
& \hspace{2cm} - \sum_{X_i \in X_{\mathrm{ex}}} \sum_{j=1}^q \left( g_{i}(a_{j,1}, a_{j,1}') + g_{i}(a_{j,2}, a_{j,2}') \right) - \frac{4}{k} \sum_j s_{j,2} \textrm{.}
\end{align*}
From \autoref{eq:elementsNumEstimation}:
\begin{align*}
&g \leq \frac{2|X|}{k} + \left|X_{\mathrm{aex}} \setminus X_{\mathrm{ex}}\right| - \frac{|W_1|\ln q}{4 q}\cdot\frac{4}{k} \\
& \hspace{2cm} + \sum_{i \in X_{\mathrm{ex}}} \left( g_i(A \setminus A^{*}, A^{*} \setminus A) - \sum_{j=1}^q \left( g_{i}(a_{j,1}, a_{j,1}') + g_{i}(a_{j,2}, a_{j,2}') \right)\right) 
\end{align*}
From \autoref{eq:allAnnihilatedPairs}:
\begin{align*} 
g \leq \frac{4|X|}{k} + \left|X_{\mathrm{aex}} \setminus X_{\mathrm{ex}}\right| - \frac{|W_1|\ln q}{kq} + P\frac{4}{k} \textrm{.}
\end{align*}
As we noted before, from \autoref{lemma:localSearch2HelpingLemma}, we have that $W_1 \geq \frac{P^2}{|X_{\mathrm{ex}}|}$. Thus:
\begin{align*} 
g \leq \frac{4|X|}{k} + \left|X_{\mathrm{aex}} \setminus X_{\mathrm{ex}}\right| + \frac{4}{k}\left(P - \frac{P^2\ln q}{4|X_{\mathrm{ex}}|q}\right) \textrm{.}
\end{align*}
Since $q \leq \frac{k}{2}$, and since the function $\frac{\ln x}{x}$ is decreasing for $x \geq 1$:   
\begin{align*} 
g \leq \frac{4|X|}{k} + \left|X_{\mathrm{aex}} \setminus X_{\mathrm{ex}}\right| + \frac{4}{k}\left(P - \frac{P^2\ln (k/2)}{2|X_{\mathrm{ex}}|k}\right) 
\end{align*}
The function $f(P) = P - \frac{P^2\ln (k/2)}{2|X_{\mathrm{ex}}|k}$ takes its maximum for $P = \frac{|X_{\mathrm{ex}}|k}{\ln (k/2)}$. Thus:
\begin{align*} 
g \leq \frac{4|X|}{k} +\left|X_{\mathrm{aex}} \setminus X_{\mathrm{ex}}\right| + \frac{4}{k} \cdot \frac{|X_{\mathrm{ex}}|k}{2\ln (k/2)}
= \frac{4|X|}{k} + \left|X_{\mathrm{aex}} \setminus X_{\mathrm{ex}}\right| + \frac{2|X_{\mathrm{ex}}|}{\ln (k/2)} \textrm{.}
\end{align*}
Since our local-search algorithm for $\ell = 2$ also tries to perform local swaps on single candidates, we can repeat the analysis from the proof of \autoref{thm:localSearch1}. Thus, using \autoref{eq:sequenceFromTheorem1} from there, we get that $g \leq 2|X_{\mathrm{ex}}|$, and as a consequence: $\left(\frac{1}{2} - \frac{1}{\ln (k/2)}\right)g \leq |X_{\mathrm{ex}}| - \frac{2|X_{\mathrm{ex}}|}{\ln (k/2)}$.

For each attribute $X_i \in X \setminus X_{\mathrm{aex}}$ the distance from $A$ and the target distribution is bounded by 2. For $X_i \in X_{\mathrm{aex}}$ this distance is bounded by $\frac{2}{k}$. Thus, we get that $g \leq 2(|X| - |X_{\mathrm{ex}}| - |X_{\mathrm{aex}} \setminus X_{\mathrm{ex}}|) + |X|\frac{2}{k}$, and so:
\begin{align*}
g + \left(\frac{1}{2} - \frac{1}{\ln (k/2)}\right)g + \frac{1}{2}g \leq& \frac{4|X|}{k} + \left|X_{\mathrm{aex}} \setminus X_{\mathrm{ex}}\right| + \frac{2|X_{\mathrm{ex}}|}{\ln (k/2)} \\
&+ |X_{\mathrm{ex}}| - \frac{2X_{\mathrm{ex}}|}{\ln (k/2)} \\
&+  (|X| - |X_{\mathrm{ex}}| - |X_{\mathrm{aex}} \setminus X_{\mathrm{ex}}|) + |X|\frac{2}{k} \\
=&\;\; |X| + \frac{6|X|}{k}
\end{align*}
Finally, we get:
\begin{align*}
g \leq \frac{\ln (k/2)}{2\ln (k/2) - 1}\left(|X| + \frac{6|X|}{k} \right) \textrm{}
\end{align*}
which completes the proof.
\end{proof}

Since  a brute-force algorithm can be used to compute an optimal solution for small values of $k$, \autoref{thm:localSearch2} implies that for every $\epsilon > 0$ we can achieve an additive approximation of $\frac{1}{2}(|X| + \epsilon)$. That is, we can guarantee that the solution returned by our algorithm will be at least 4 times better than a solution that is arbitrarily bad on each attribute. A natural open question 
is whether the local search algorithm 
achieves even better approximation guarantees for larger values of $\ell$.

One may argue that the restriction to natural target distributions is quite strong. However, for a given vector of target distributions $\pi$, we can easily find a vector $\rho$ of target natural distributions such that \mbox{$\sum_{i, j}\big|\rho_i^j -  \pi_i^j\big|\leq \frac{2|X|}{k}$}. For instance for $k=5$ and $p = |X| = 3$ the distribution 
\begin{align*}
\pi = \left(\left(\frac{2}{5} + \frac{1}{10}, \frac{3}{5} - \frac{1}{10}\right), \left(\frac{1}{5} + \frac{1}{7}, \frac{4}{5} - \frac{1}{7}\right), \left(\frac{1}{6}, 1 - \frac{1}{6}\right)\right)
\end{align*}
is not natural, yet there exists a natural distribution
\begin{align*}
\rho = \left(\left(\frac{2}{5}, \frac{3}{5}\right), \left(\frac{1}{5}, \frac{4}{5}\right), \left(0, 1\right)\right)
\end{align*}
such that \mbox{$\sum_{i, j}\big|\rho_i^j -  \pi_i^j\big|\leq \frac{2|X|}{k}$}. Thus, the results from \autoref{thm:localSearch1} and \autoref{thm:localSearch2} can be modified by providing approximation ratios that are worse by an additive value of $\frac{2|X|}{k}$ but valid for arbitrary target distributions. Again, since an optimal solution can easily be computed for small values of $k$, we can get approximation guarantees arbitrarily close to the ones given by \autoref{thm:localSearch1} and \autoref{thm:localSearch2}, even for non-natural target distributions.

Below we show a lower bound of $\frac{2|X|}{7}$ for the approximation ratio of the local search algorithm from \autoref{alg:local-search} with $\ell = 2$.

\begin{example}
Consider $7$ binary attributes $X_1, \dots, X_{7}$, and the set of $12p$ candidates $C = \{a_1, \dots, a_{2p}, a_1', \dots, a_{2p}', b_1, \dots, b_{2p}, b_1', \dots, b_{2p}', c_1, \dots, c_{2p}, c_1', \dots, c_{2p}'\}$. For each $i \in [k]$, we have:

\begin{center}
\begin{tabular}{|l|c|c|c|c|c|c|c|}
  \hline
  & $X_1$ & $X_2$ & $X_3$ & $X_4$ & $X_5$ & $X_6$ & $X_7$ \\ \hline \hline
  $a_i$ & 1 & 0 & 1 & 1 & 0 & 0 & 1 \\ \hline
  $a_i'$ & 0 & 1 & 0 & 0 & 1 & 1 & 1 \\ \hline \hline
  $b_i$ & 0 & 0 & 0 & 0 & 0 & 0 & 0 \\ \hline
  $b_i'$ & 0 & 0 & 1 & 1 & 1 & 1 & 0 \\ \hline \hline
  $c_i$ & 1 & 1 & 1 & 1 & 0 & 0 & 0 \\ \hline
  $c_i'$ & 1 & 1 & 0 & 0 & 1 & 1 & 0 \\ \hline
\end{tabular}
\end{center}
We note that for each candidate the value of the attribute $X_3$ is the same as of $X_4$ and the value of the attribute $X_5$ is the same as of $X_6$.
For $i \in \{1, 2, 3, 4, 5, 6\}$ let $\pi_{i}^0 = \pi_{i}^1 = \frac{1}{2}$, and let $\pi_{7}^0 = 1 - \pi_{7}^1 = 1$.

Let us fix $k = 4p$. It can be easily checked that the set consisting of $p$ copies of candidates $b_i$, $b_i'$, $c_i$, $c_i'$ is a perfect committee. On the other hand, the set $A$ consisting of $2p$ copies of candidates $a_i$ and $a_i'$ is locally optimal. Indeed, replacing candidate $a_i$ or $a_i'$ with $b_i$ or $b_i'$ moves the solution closer to the target distribution on $X_7$, but  the further from the target distribution on $X_1$ or $X_2$. The same situation happens if we replace candidates $a_i$ or $a_i'$ with $c_i$ or $c_i'$. If we replace two $a$-candidates with the pair consisting of one $b$-candidate ($b_i$ or $b_i'$) and one $c$-candidate ($c_i$ or $c_i'$), then such a replacement will move the solution closer by $\nicefrac{4}{k}$ to the target distribution on $X_7$, but will move the solution further by $\nicefrac{2}{k}$ on two attributes from $\{X_3, X_4, X_5, X_6\}$.

Finally, $\sum_{i, j} \big|r_i^j(A) -  \pi_i^j\big| = 2p = \frac{2}{7}|X|$. \qed
\end{example}

\section{Related Work}\label{related}

Our model is related to the following research areas:\smallskip

\subsection{Apportionment for Party-List Representation Systems} 
As we already pointed out, classical apportionment methods correspond to the restriction of our model to a single attribute (albeit with a different motivation). See the work of Balinski and Young~\cite{Balinski01} for a survey. 
While voting on multi-attribute domains and multiwinner elections have led to significant research effort in computational social choice, this is less the case for party-list representation systems. Ding and Lin \cite{DingL14} studied a game-theoretic model for a party-list  proportional representation system under specific assumptions, and show that computing the Nash equilibria of the game is {\sf NP}-hard. 

\subsection{Biapportionment}

The biapportionment setting~\cite{balinski:halshs-00585327,CIS-88648} has some similarities with our multi-attribute proportional representation setting (MAPR). In biapportionment we are given two attributes, one corresponding to parties and the other one to voting districts. The input consists of (1) hard constraints expressing lower and upper bounds on the number of candidates to be elected in each district, and similarly, bounds on the number of candidates to elected from each party; (2) for each district $i$ and party $P_j$, a value $p_{ij}$ corresponding to the number of votes for party $P_j$ in district $i$. (2) induces a soft proportionality constraint: the number of elected candidates from party $P_j$ in district $i$ should be as much as possible proportional to $p_{ij}$. 

There are however substantial differences between biapportionment and MAPR. First, we do not have anything that corresponds to the values $p_{ij}$: while in biapportionment the target composition of the committee consists of a target number of seats for each combination of the two attributes, in MAPR, on the other hand, we have a smaller input consisting of a target number for each value of each attribute.\footnote{Yet, a target number of seats for each combination of two or more attributes could be incorporated to our model by providing an attribute that corresponds to the Cartesian product of the given attributes; but, for combinatorial reasons, this ceases to be realistic for more than two or three attributes.}
The second (and most important)  difference between biapportionment and MAPR is that in MAPR we have a limited supply of available candidates characterised each by a tuple of attribute values: in our words, we focus on the case when the full supply assumption is not satisfied, which not only corresponds to the practical cases we have in mind, but is required in practice when the number of attributes is large. On the other hand, 
in biapportionment, it is implicitly assumed that there are enough candidates so that there always exist a solution satisfying given (often restrictive) hard constraints. Note finally that the computation of biapportionment methods has been investigated in a few recent papers~\cite{LariRS14,PukelsheimRSSS12,SerafiniS12}.


\subsection{Constrained Approval Voting}\label{sec:constrained_av}

Constrained approval voting (CAP) \cite{brams1990constrained,potthoff90} is also close to MAPR.  In CAP there are also multiple attributes, candidates are represented by tuples of attribute values, there is a target composition of the committee and we try to find a committee close to this target. However, there are also substantial differences between MAPR and CAP. First, in CAP, like in biapportionment, the target composition of the committee, exogenously defined, consists of a target number of seats {\em for each combination of attributes} (called a cell), that is, for each $\vec{z} \in D_1 \times \ldots \times D_p$, we have a value $s(\vec{z})$; while in MAPR, as we said above, we have a smaller input consisting of a target number for each value of each attribute. Note that the input in CAP is exponentially large in the number of attributes, which makes it infeasible in practice as soon as this number exceeds a few units (probably CAP was designed for very small numbers of attributes). 
 Second, in CAP, the selection criterion of an optimal committee is made in two consecutive steps: first a set of {\em admissible committees} is defined, and the choice between these admissible committees is made by using approval ballots, and the chosen committee is the admissible committee maximising the sum, over all voters, of the number of candidates approved (there are no target fractions as in MAPR). 
A simple translation of CAP into an integer linear programming problem is given in \cite{potthoff90,straszak1993computer}.

\subsection{Voting on Multi-Attribute Domains and Judgment Aggregation} 

Another interesting degenerated case is when $k = 1$, {\em i.e.}, when we must select a single candidate from the database. The ideal case is when there exists a candidate in the database whose value on each attribute $i$ coincides with the attribute value $x_i^j$. In this case, this candidate should certainly be selected; otherwise, the most representative candidate should be selected, for some measure of representativity. \smallskip

This problem relates to voting in multi-attribute (or combinatorial) domains (cf. the recent survey chapter \cite{LangXia15}). There, the aim is to output a single winning combination of attributes 
given the preferences of voters over combinations of attribute values, generally expressed in some compact form. When $k = 1$, our model can be viewed as a voting problem in a {\em constrained} multi-attribute domain (constrained because not all combinations are feasible). Another important difference is that in voting in multi-attribute domains, the focus is generally on the way of dealing with nonseparable preferences; here, the issue is avoided, as throughout our paper preferences are assumed to be separable.\footnote{Extending our model to nonseparable preferences would consist in expressing preferences such as {\em if the gender ratio is 50-50 then the ideal group ratio is 40-30-30, otherwise 50-25-25}, or else {\em we want a gender ratio 50-50 or a seniority ratio 50-50}. We are not sure whether it is worth developing this generalisation.} \smallskip

Our model also relates to judgment aggregation (see \cite{Endriss16} for a recent survey). In judgment aggregation, there is a set of propositions $\{\varphi_1, \ldots, \varphi_p\}$; the set of consistent (and complete) judgment sets is a subset ${\cal J}$ of $\times_{i=1}^p \{\varphi_i, \neg \varphi_i \}$; a judgment aggregation profile $V = (V_1, \ldots, V_n)$  is a collection of judgment sets from ${\cal J}$; an irresolute judgment aggregation rule $F$ maps a judgment aggregation profile to a 
nonempty subset of ${\cal J}$; such a rule is said to be based on the weighted majoritarian judgment set if its output can be computed from the vector $\alpha_V = (\alpha_1, \ldots, \alpha_p)$, where $\alpha_j$ is the proportion of judgment sets in $V$ which contain $\varphi_i$. 

Now, consider a multi-attribute proportional representation (MAPR) setting where all attributes are binary; we can view each attribute $X_i$ as a proposition $\varphi_i$. Next, for each database candidate $c \in C$, the judgment set $J_c$ is defined by $J_c = \{\varphi_i\colon i \in [p],  X_i(c) = 1\} \cup \{\neg \varphi_i \colon i \in [p],  X_i(c) = 0\}$. The set of consistent judgment sets ${\cal J}_C$ is defined as ${\cal J}_C = \{J_c \ | \ c \in C\}$; in other words, $J \in {\cal J}_C$ is consistent if and only if there is a candidate $c$ in $C$ such that $(X_1(c), \ldots, X_p(c))$ corresponds to $J$.

Finally, let $(\alpha_1, \ldots, \alpha_p) = (\pi_1^1, \ldots, \pi_1^p)$.  Let ${\cal R}$ be an irresolute MAPR rule; ${\cal R}$ induces an irresolute judgment aggregation rule $F_{\cal R}$, based on the weighted majoritarian judgment set, defined by $F_{\cal R}(V) = \{J_c \ | \ \{c\} \in {\cal R}(C, \alpha_V, 1)\}$. Conversely, from a judgment aggregation rule $F$ based on the weighted majoritarian judgment set we can define a MAPR rule ${\cal R}_F$ restricted to $k = 1$, by ${\cal R}_F(C, \alpha, 1) = \{ \{c\}, J_c \in F(V_\alpha) \}$. 
It is interesting to see which judgment aggregation rules correspond to the two MAPR rules we have defined when $k = 1$. 

The {\em median} judgment aggregation rule\footnote{This rule has been introduced independently in several different papers under different names, and it is probably not relevant to cite them here. A recent paper on the median rule, together with an axiomatisation, is \cite{NehringPivato16}.} 
is defined as follows: given a weighted majoritarian judgment set $\alpha_V = (\alpha_1, \ldots, \alpha_p)$ and $J \in {\cal J}$, let $(J | \alpha) = \sum_{i, \varphi_i \in J}  \alpha_i + \sum_{i, \neg \varphi_i \in J}  (1-\alpha_i)$. Then $median(\alpha) = {\rm argmax}_{J \in {\cal J}} (J | \alpha)$.
Now, let $C$ be a candidate database over a domain of binary attributes, and $k = 1$. Given $\alpha_V$, we have $\{c\} \in {\cal R}_{H}(C,\alpha_V,1)$ if $\sum_{i=1}^p \sum_{j = 1,2} |\pi_i^j - X_i(c)|$ is minimum; now, 
\begin{align*}
\sum_{i=1}^p \sum_{j = 1,2} |\pi_i^j - X_i(c)| &= \sum_{i\in[p], X_i(c) = 1} (1-\alpha_i) +  \sum_{i\in[p], X_i(c) = 0} \alpha_i \\
&= p - \left(\sum_{i\in [p], X_i(c) = 1} \alpha_i +  \sum_{i\in[p], X_i(c) = 0} (1-\alpha_i)\right) = p -(J_c | \alpha) \text{,} 
\end{align*}
therefore ${\cal R}_{H}(\alpha_V)$ contains $\{c\}$ if $(J_c | \alpha)$ is maximum, that is, if $J_c \in median(\alpha)$.

The calculations for the multi-attribute d'Hondt rule are similar: 
\begin{align*}
\sum_{i\in[p]} \sum_{j = 1,2} \pi_i^j \mathrm{H}(r_i^j(\{c\}) &= \sum_{i\in[p], X_i(c) = 1} \pi_i^1 + \sum_{i\in[p], X_i(c) = 0} \pi_i^0 \\
                                                     &= \sum_{i\in[p], X_i(c) = 1} \alpha_i + \sum_{i\in[p], X_i(c) = 0} 1-\alpha_i = (J_c | \alpha) \text{.}
\end{align*}
Therefore, ${\cal R}_{dHondt}(\alpha_V)$ contains $\{c\}$ if $J_c \in median(\alpha)$.
In summary:

\begin{observation}
$F_{{\cal R}_{H}}$ and $F_{{\cal R}_{dHondt}}$ coincide with the {\em median} judgment aggregation rule.
\end{observation}

\subsection{Multiwinner (or Committee) Elections}
In multiwinner elections the voters vote directly for candidates and do not consider attributes that characterise them.
Thus, in this literature, the term ``proportional representation'' \cite{ccElection,Monroe95} has a different meaning: these methods are `representative' because each voter feels represented by some member of the elected committee. The computational aspects of full proportional  representation and its extensions have raised a lot of attention lately~\cite{ProcacciaRosenscheinZohar08,fullyProportionalRepr,CornazGalandSpanjaard12,sko-fal-sli:j:multiwinner,LuBoutilier13}. Our study of the properties of multi-attribute proportional representation is close in spirit to the work of Elkind~et~al.~\cite{elk-fal-sko-sli:c:multiwinner-rules}, who gives a normative study of multiwinner election rules.
{\em Budgeted social choice}~\cite{budgetSocialChoice, sko-fal-lan:j:collective} is technically close to committee elections, but it has a different motivation: the aim is to make a collective choice about a set of objects to be consumed by the group (perhaps, subject to some constraints) rather than about the set of candidates to represent voters.

There exists an interesting line of research on multiwinner voting~\cite{journals/geb/Casella05, StorableVotesBook, RePEc:ucp:jpolec:doi:10.1086/670380, paradoxMultipleElections, skow:c:multiwinner-models, LangXia15}, where it is assumed that the elected committee runs a sequence of independent ballots on various issues---for instance consider a parliament voting on issues such as monetary politics, changes to the national-health care system, or educational reforms. Each issue can be represented by an attribute; in this setting our MAPR methods can be used to find a representative committee with respect to its collective views on a certain set of issues.  

\section{Discussion of the Model and of its Possible Extensions}\label{discussion}

In this section we discuss several other approaches to the problem of achieving proportional representation with respect to multiple attributes, and we compare them with the model discussed so far.

\subsection{Lower and Upper Quotas for Attributes}\label{discussion-bounds}

In \autoref{model} we assumed that the input contains a vector of target distributions which describe desirable proportions of values for different attributes in an ideal committee---such an ideal committee might not exist, e.g., because there is not enough diversity within the candidate database (in particular, the candidate database might not satisfy the full supply property), or even if an ideal committee exists it might be computationally infeasible to find one. For this reason we formulated two optimisation metrics which, intuitively, allow one to assess how good are certain committees, and to find committees which are good enough, though not necessary ideal. Thus, it is natural to consider another approach: instead of getting a vector of ideal target distributions, we could assume that for each value of each attribute we are given a lower and an upper bound (also referred to as lower and upper quota, respectively) on the number of committee members with such a value of the respective attribute. For instance, instead of specifying that we would like to have 50\% of men and 50\% of women in a committee, we could ask for a committee with at least 40\% of women and at least 40\% of men.

Having lower and upper quotas for attributes gives more flexibility and makes it more likely that a committee satisfying the constraints exists. However, if the number of attributes is large (for instance, hundreds or thousands) and the size of the candidate database is moderate, it is still likely that a committee satisfying all the constraints does not exist. Further, coming up with the constraints which, on the one hand are restrictive enough to implement multi-attribute proportionality to the extent that would be satisfactory, and on the other hand are liberal enough to ensure that a committee satisfying the constraints exists, is much less straightforward and requires more cognitive effort than simply providing a vector of ideal distributions.         

Interestingly, our results from \autoref{fpt} can be extended to the model with lower and upper quotas. Indeed, for the hardness it suffices to observe that the problem of finding a perfect committee can be easily formulated in the model with lower and upper quotas----it suffices to set the upper and lower quotas to the same value, equal to the value of the respective target distribution. For the positive result from \autoref{thm:fptParameterP} it suffices to change constraints (g) and (i) in the proof of the theorem so that variables $x_{i}^j$ and $y_{i}^j$ are compared against specific quotas instead of $k\pi_{i}^j$. Also the analysis from the proofs of \autoref{thm:localSearch1} and \autoref{thm:localSearch2} carries over to the case with quotas: in the proofs of these theorems one needs to define $X_{\mathrm{ex}}$ as the set of all attributes for which the analysed committee $A$ does not exceed the lower and upper bounds. Specifically, our local search algorithm would treat lower and upper quotas as soft constraints and would approximate the total violation of the constraints: 
\begin{align*}
\sum_{r_i^j(A) < \underline{\pi}_i^j}\left(\underline{\pi}_i^j - r_i^j(A)\right) + \sum_{r_i^j(A) > \overline{\pi}_i^j}\left(r_i^j(A) - \overline{\pi}_i^j\right)\text{,}
\end{align*}
where $\underline{\pi}_i^j$ and $\overline{\pi}_i^j$ denote the lower and upper quotas, respectively (cf. \autoref{def:hamilton_approximation}).

\subsection{Dependent Attributes}\label{discussion-dependent}

In our model we assume that the attributes are independent, which sometimes may lead to undesirable outcomes. For example, consider an instance where the goal is to select a committee consisting of 50\% of men and 50\% of women and of 50\% of junior and 50\% of senior people. In this instance, for $k=10$, a committee $A$ that consists of 5 junior men and 5 senior women is a perfect committee. However, junior women and senior men are clearly underrepresented in $A$. Another example is when our goal is to select a set of $k$ movies and when half of the population likes drama movies with Meryl Streep starring the main role and the other half likes action movies with Dwayne Johnson. A set of $\nicefrac{k}{2}$ action movies with Meryl Streep and $\nicefrac{k}{2}$ drama movies with Dwayne Johnson would form a perfect committee, even though it is incompatible with the voters' preferences\footnote{We thank the anonymous AIJ reviewer for suggesting this example.}. 

This phenomenon is known as the separability dilemma: 
\begin{itemize}
\item either preferences are assumed to be separable: in that case, they are cheap to communicate (and computing the outcome is generally easy); but it is a strong domain restriction. In our example, if we assumed that the preferences of the society expressed by the target distributions were separable, then a set with $\nicefrac{k}{2}$ action movies with Meryl Streep and $\nicefrac{k}{2}$ drama movies with Dwayne Johnson would form an excellent solution. 
\item or we don't make such an assumption and allow preferential dependencies between attributes. This increases the cost of communication exponentially in the worst case, and makes computation harder.
\end{itemize} 

Both approaches are often seen as too extreme, and the usual trade-off consists in allowing a reasonable amount of preferential dependencies. We can for instance introduce an artificial attribute combining some dependent attributes. For instance, in our first example we could introduce a combined attribute (Gender, Age) and we could require that there are 25\% of committee members representing each of the four values: (male, junior), (male, senior), (female, junior), and (female, senior). Since combining the attributes leads to an exponential growth of the length of the representation of the target distributions, this approach is only possible when the number of dependent attributes is relatively small (see also the discussion below \autoref{def:perfect_committee} in \autoref{model}, and the discussion on Constrained Approval Voting in \autoref{sec:constrained_av}).

\subsection{Other Metrics Measuring the Distance to the Target Distributions}\label{discussion-metric}

In \autoref{sec:multiAttributeRules} we defined the multi-attribute d'Hondt rule and the multi-attribute Hamilton rule in terms of minimisation or maximisation of sums of expressions. Another possibility is to define the $L^{\infty}$-multi-attribute d'Hondt rule as the one which outputs a committee $A$ maximising $\min_i \sum_{j} \pi_i^j\harmonic(r_i^j(A) \cdot k)$ and the $L^{\infty}$-multi-attribute Hamilton rule which outputs a committee $A$ minimising $\max_i \sum_{j} |r^j_i(A)-\pi^j_i|$. Both approaches have their advantages and shortcomings. For instance, with the $L^1$ metric it may happen that there exists an optimal committee which is far from the target distributions for half of the attributes while there exists another committee which violates the target distribution for each attribute, but to a significantly lower extent. Such a committee seems more appropriate in the context of proportional representation. On the other hand, if we follow an $L^{\infty}$-optimisation approach, it may happen that among a large number of attributes there exists a single ``outlier'' attribute $X_i$ with the target distribution set to $\pi_{i}^{1} = 1$ in spite of the fact that all candidates in the database have the value of this attribute equal to $x_i^0$. In such case a rule would select any committee, in particular it could select a committee which is far from the target distributions for every attribute, even though there might exist a committee which would be perfect for all attributes except for $X_i$. Naturally, there exist intermediate approaches---for instance, one could aim at maximising/minimising the $L^p$ norms of the appropriate expressions. 

The results from \autoref{fpt} easily extend to the case of otpmising $L^{\infty}$-aggregate. For instance, the ILPs from the proof of \autoref{thm:fptParameterP} can be naturally extended to the $L^{\infty}$-optimisation case, by using the standard constructions for implementing the ``max'' operator in the objective function. A natural question which remains open is whether the $L^{\infty}$-variants of our problems can be well approximated.    

\section{Conclusion}\label{conclu}

In this paper we have defined and studied multi-attribute generalisations of a well-known class of apportionment methods, in particular of the Hamilton and the d'Hondt methods of apportionment, albeit with motivations that go far beyond party-list elections (such as the selection of a collective set of candidates). We have formulated several axioms, commonly considered in the political science literature in the context of apportionment, for multi-attribute committee selection rules. Motivated with this axiomatic approach we have identified two multi-attribute committee selection rules that can be considered as extensions of the Hamilton and d'Hondt methods to multi-attribute scenarios.

We have studied the computational complexity of the problem of finding committees that, in some sense, best fit some given distribution of attribute values. We have found out that the problem is in general {\sf NP}-hard, but that it can be handled efficiently if the number of attributes is small. We have shown that the multi-attribute extensions of the Hamilton and d'Hondt methods can be well approximated. In particular, we have provided an interesting involved analysis of the local-search algorithm in the context of our multi-attribute setting. 

\subsection*{Acknowledgments}

We thank Eunjung Kim for giving us the initial idea of the paper, and for fruitful discussions. We further thank 
Katar\'{i}na Cechl\'{a}rov\'{a} for her comments on the relation between population and party population monotonicity.
J\'{e}r\^{o}me Lang was supported by the ANR project CoCoRICo-CoDec, project number ANR-14-CE24-0007.
Piotr Skowron was supported by the European Research Council grant ERC-StG 639945 (ACCORD) and by the Foundation for Polish Science within the Homing programme (Project title: "Normative Comparison of Multiwinner Election Rules").

\bibliographystyle{plain}
\bibliography{mpr-2}

\bigskip
\bigskip
\bigskip
\appendix
\section{Proofs Omitted from the Main Text}

\begin{repproposition}{prop:hamilton_party_population}
Under full supply property the Hamilton method satisfies party population monotonicity.
\end{repproposition}
\begin{proof}
Consider an instance $I$ of the apportionment problem, and let $I'$ be an instance obtained from $I$ by increasing the quota $\nicefrac{v_i}{v_+}$ for one party $P_i$, but leaving the ratios of quotas between the other parties unchanged. 
Let us consider the Hamilton method as the process that in steps allocates seats to parties (the first step is rounding down the quotas, and the next steps correspond to allocating seats to the parties in the descending order of their remainders). We want to prove that if this is the case that the Hamilton method assigns an $x$-th seat to $P_i$ before assigning a $y$-th seat to $P_j \neq P_i$ in $I$, then it is also the case in $I'$. This will show that the number of seats assigned to $P_i$ in $I'$ is at least as large as in $I$.

We know that the quota of $P_i$ in $I'$ is higher than in $I$. Also, for any other party $P_j \neq P_i$, we know that the quota of $P_j$ in $I'$ is lower than in $I$ (this is because the ratios of the quotas of the other parties remain unchanged; note that this argument would not work if we used population monotonicity instead of party population monotonicity). Thus, in the phase of rounding quotas down $P_i$ will get at least the same number of seats in $I'$ as in $I$. Also, if $P_j$ got the same number of seats after rounding in $I'$ as it got in $I$, then the remainder of $P_i$ is higher than the remainder of $P_j$ in $I'$ whenever it is the case that it was higher in $I$. Thus, if the Hamilton method assigned a seat to $P_i$ before $P_j$ in $I$, then it must also happen in $I'$.
\end{proof}

\begin{repproposition}{prop:singleAttributeProperties}
When $p = 1$ and assuming there are at least $k$ candidates for each value of the unique attribute, then $\calR_{\mathrm{H}}$ coincides with the Hamilton apportionment rule.
\end{repproposition}

\begin{proof}
Let $s_j^*$ denote the ideal number of seats for party $P_j$, i.e., $s_j^* = \pi^jk$. Let $A$ be a committee of size $k$ and let $R^j(A) = k \, r^j(A)$ be the number of members of $A$ that belong to party $P_j$. Since $ |R^j(A)-s_j^*| = k |r^j(A)-\pi^j|$, we need to show that the following two assertions are equivalent:
\begin{enumerate}
\item $A$ minimises $\sum_j |R^j(A)-s_j^*|$.
\item $A$ is a Hamilton committee.
\end{enumerate}
We first show $1 \Rightarrow 2$.
Assume $A$ is not a Hamilton committee: then there exists an attribute value (party) that receives strictly more or strictly less seats than it would receive according to the Hamilton method. Naturally, there must also exist an attribute that receives strictly less or strictly more seats, respectively. Formally, this means that there are two attribute values (parties), say $1$ and $2$, such that the target number of seats for parties 1 and 2 are $s_1^* = p+\alpha_1$ and $s_2^* = q+\alpha_2$, with $p, q 
$ being integers and $1 > \alpha_2 > \alpha_1 \geq 0$, and such that $R^1(A) \geq p+1$ and $R^2(A) \leq q$. 
We have
\begin{align*}
\sum_j |R^j(A)-\pi^j| &= \sum_{j \neq 1,2} |R^j(A)-s_j^*| + |R^1(A)-s_1^*| + |R^2(A)-s_2^*| \\
                      &\geq  \sum_{j \neq 1,2} |R^j(A)-s_j^*| + (1-\alpha_1) + \alpha_2 \text{.}
\end{align*}
Consider the committee $A'$ obtained from $A$ by giving one less seat to $1$ and one more to $2$ and consider the following three cases
\begin{description}
\item[Case 1:] If $R^1(A) > p+1$ then:
\begin{align*}
&\sum_j |R^j(A)-s_j^*| - \sum_j |R^j(A')-s_j^*| \\
& \hspace{1.5cm} = |R^1(A)-s_1^*| -  |R^1(A')-s_1^*| + |R^2(A)-s_2^*| - |R^2(A')-s_2^*| \\
& \hspace{1.5cm} \geq 1 + (1-\alpha_2)-\alpha_2 > 0.
\end{align*}
\item[Case 2:]
If $R^2(A) < q$ then similarly,  $\sum_j |R^j(A)-s_j^*| - \sum_j |R^j(A')-s_j^*|> 0$.  
\item[Case 3:]
If $R^1(A) = p+1$ and $R^2(A) = q$ then we have:
\begin{align*}
\sum_j |R^j(A)-s_j^*| = \sum_{j \neq 1,2} |R^j(A)-s_j^*| + (1-\alpha_1) + \alpha_2
\end{align*}
and
\begin{align*}
\sum_j |R^j(A')-s_j^*| = \sum_{j \neq 1,2} |R^j(A')-s_j^*| + (1-\alpha_2) + \alpha_1
\end{align*}
Hence:
\begin{align*}
\sum_j |R^j(A)-s_j^*| - \sum_j |R^j(A')-s_j^*| = 2(\alpha_2-\alpha_1) > 0.
\end{align*}
\end{description}
\noindent In all three cases,  $A$ does not minimise $\sum_j |R^j(A)-s_j^*|$, which gives a contradiction.

It remains to be shown that $2 \Rightarrow 1$, i.e., that if $A$ is a Hamilton committee then it minimises $\sum_j |R^j(A)-s_j^*|$. If there is a unique Hamilton committee then this follows immediately from $1  \Rightarrow 2$. Assume there are several Hamilton committees $A_1, \ldots, A_q$. Then there are $q$ parties, w.l.o.g., let us call them $P_1, \ldots, P_q$, with equal remainders $\alpha \in [0,1)$, that is, $s_1^* = p_1 + \alpha$, \ldots, $s_q^* = p_q + \alpha$, and these committees differ only with respect to whether they get an extra seat or not. We easily check that for any two $A, A'$ of these committees we have $\sum_j |R^j(A)-s_j^*| = \sum_j |R^j(A')-s_j^*|$.
\end{proof}


\begin{repproposition}{prop:multiAttributeProperties}
Under the full supply assumption, non-reversal, respect of quota, and value monotonicity with respect to every attribute are all satisfied by the multi-attribute Hamilton rule.
In the general case, non-reversal, and respect of quota are not satisfied. If $X_i$ is a binary variable, then value monotonicity with respect to $X_i$ is satisfied; however it is not satisfied in the general case.
\end{repproposition}

\begin{proof}
Under the full supply assumption, the result easily comes from \autoref{prop:fs}
and the fact that the property holds in the single-attribute case.

In the general case, we give counterexamples. For respect of quota, we have two binary attributes, and two candidates $a$, $b$ with value vectors $(x_1^2, x_2^2)$ and $(x_1^1,x_2^1)$, $k = 1$, $\pi$ defined  as $\pi_1^1 = 0$, $\pi_1^2 = 1$, $\pi_2^1 = 1$, $\pi_2^2 = 0$. The committee minimising our metric is either $\{a\}$ or $\{b\}$, and does not respect quota even though all values $k \pi_i^j$ are integers.

For non-reversal we have two binary attributes and six candidates: $a,b,c$, each with vector $(x_1^1, x_2^1)$ and $d,e,f$, each with vector $(x_1^2, x_2^2)$. We have a target distribution $\pi$ defined as follows: $\pi_1^1 = 0.35$, $\pi_1^2 = 0.65$, $\pi_2^1 = 1$, $\pi_2^2 = 0$. We set $k = 3$. The committees minimising our metric are $\{a,b,c\}$ and all triples made up from two candidates out of $\{a,b,c\}$ and one out of $\{d,e,f\}$. In all cases, we have $r_1^1(A) > r_1^2(A)$ even though $\pi_1^1 < \pi_1^2$.

Now, we prove that value monotonicity holds for binary domains. In the following we will use notation $\| r(A) - \pi\| = \sum_{i, j} |r_i^j(A) -  \pi_i^j|$.
Consider a binary attribute $X_i$, with $D_i = \{x_i^0, x_i^1\}$.
Assume that $\rho_{i}^0 > \pi_{i}^0$ (and so $\rho_{i}^{1} < \pi_{i}^{1}$), and that for all $i' \neq i$ we have $\rho_{i'} =  \pi_{i'}$. Let $A$ be an committee minimising our metric for $\pi$ and, for the sake of contradiction, assume that for all committees $B$ minimising our metric for $\rho$ we have $r_i^0(B) < r_i^0(A)$. Let $B$ be such a committee.
The proof is a case by case study, with six cases to be considered: (C1) $r_i^0(B) \leq \pi_i^0  < \rho_i^0 \leq r_i^0(A)$; (C2) $\pi_i^0 \leq r_i^0(B) \leq \rho_i^0 \leq r_i^0(A)$; (C3) $\pi_i^0 < \rho_i^0 \leq r_i^0(B) < r_i^0(A)$; (C4) $r_i^0(B) \leq \pi_i^0 \leq r_i^0(A) \leq \rho_i^0$; (C5) $\pi_i^0 \leq r_i^0(B) < r_i^0(A) \leq \rho_i^0$; and (C6) $r_i^0(B) < r_i^0(A) \leq \pi_i^0 < \rho_i^0$. 

\begin{itemize}
\item Case 1: $r_i^0(B) \leq \pi_i^0  < \rho_i^0 \leq r_i^0(A)$. In this case we have $r_i^{1}(B) \geq \pi_i^{1} > \rho_i^{1} \geq r_i^{1}(A)$ and the following holds:
 
$\begin{array}{lll} 
 \| r(B) - \pi\|\ & = \sum_{i' \neq i}\sum_{j}|r_{i'}^{j}(B) - \pi_{i'}^{j}| + (\pi_{i}^{0} - r_i^0(B)) + (r_i^{1}(B) - \pi_{i}^{1}) & (1) \\
& = \sum_{i' \neq i}\sum_{j}|r_{i'}^{j}(B) - \rho_{i'}^{j}| + (\rho_{i}^{0} - r_i^0(B)) + (r_i^{1}(B) - \rho_{i}^{1}) & \\
& \;\;\;\;\;\;\;\; + \pi_{i}^{0} - \pi_{i}^{1} - \rho_{i}^{0} + \rho_{i}^{1} & (2) \\
& = \| r(B) - \rho \|\ + 2(\pi_i^0 - \rho_i^0) & (3)\\
& <  \| r(A) - \rho \|\ + 2(\pi_i^0 - \rho_i^0) & (4)\\
& = \sum_{i' \neq i}\sum_{j}|r_{i'}^{j}(A) - \rho_{i'}^{j}| + (r_i^0(A) - \rho_{i}^{0}) + (\rho_{i}^{1} - r_i^{1}(A)) + 2(\pi_i^0 - \rho_i^0) & (5) \\
& = \sum_{i' \neq i}\sum_{j}|r_{i'}^{j}(A) - \rho_{i'}^{j}| + (r_i^0(A) - \pi_{i}^{0}) + (\pi_{i}^{1} - r_i^{1}(A)) &\\
& \;\;\;\;\;\;\;\; + \pi_{i}^{0} - \pi_{i}^1 - \rho_{i}^{0} + \rho_{i}^1 + 2(\pi_i^0 - \rho_i^0) & (6) \\
& = \| r(A) - \pi \|\ + 4(\pi_i^0 - \rho_i^0) & (7)  \\
& \leq \| r(A) - \pi \|\ & (8) 
\end{array}$

\noindent (4) comes from the fact that $A$ does not minimise $f$ for $\rho$. Since, there is one strong inequality in the sequence, we imply that $A$ does not minimise $f$ for $\pi$, a contradiction. 

\item Case 2: $\pi_i^0 \leq r_i^0(B) \leq \rho_i^0 \leq r_i^0(A)$.

$\begin{array}{ll} 
 \| r(B) - \pi\|\ & = \sum_{i' \neq i}\sum_{j}|r_{i'}^{j}(B) - \pi_{i'}^{j}| + (r_i^0(B) - \pi_{i}^{0}) + (\pi_{i}^1 - r_i^1(B)) \\
& = \sum_{i' \neq i}\sum_{j}|r_{i'}^{j}(B) - \rho_{i'}^{j}| + (\rho_{i}^{0} - r_i^0(B)) + (r_i^1(B) - \rho_{i}^1) \\
& \;\;\;\;\;\;\;\; + 2r_i^0(B) - \pi_{i}^{0} - \rho_{i}^{0} - 2r_i^1(B) + \pi_{i}^1 + \rho_{i}^1 \\
& = \| r(B) - \rho \|\ + 4r_i^0(B)  - 2\pi_{i}^{0} - 2\rho_{i}^{0} \\
& < \| r(A) - \rho \|\ + 4r_i^0(B)  - 2\pi_{i}^{0} - 2\rho_{i}^{0} \\
& = \sum_{i' \neq i}\sum_{j}|r_{i'}^{j}(A) - \rho_{i'}^{j}| + (r_i^0(A) - \rho_{i}^{0}) + (\rho_{i}^1 - r_i^1(A)) + 4r_i^0(B)  - 2\pi_{i}^{0} - 2\rho_{i}^{0} \\
& = \sum_{i' \neq i}\sum_{j}|r_{i'}^{j}(A) - \rho_{i'}^{j}| + (r_i^0(A) - \pi_{i}^{0}) + (\pi_{i}^1 - r_i^1(A)) \\
& \;\;\;\;\;\;\;\; + \pi_{i}^{0} - \rho_{i}^{0} - \pi_{i}^1 + \rho_{i}^1 + 4r_i^0(B)  - 2\pi_{i}^{0} - 2\rho_{i}^{0} \\
& = \| r(A) - \pi \| + 4r_i^0(B) - 4\rho_{i}^{0} \\
& \leq \| r(A) - \pi \|
\end{array}$

Again we obtain a contradiction.

\item Case 3: $\pi_i^0 < \rho_i^0 \leq r_i^0(B) < r_i^0(A)$.

$\begin{array}{ll} 
 \| r(B) - \pi\|\ & = \sum_{i' \neq i}\sum_{j}|r_{i'}^{j}(B) - \pi_{i'}^{j}| + (r_i^0(B) - \pi_{i}^{0}) + (\pi_{i}^1 - r_i^1(B)) \\
& = \sum_{i' \neq i}\sum_{j}|r_{i'}^{j}(B) - \rho_{i'}^{j}| + (r_i^0(B) - \rho_{i}^{0}) + (\rho_{i}^1 - r_i^1(B)) \\
& \;\;\;\;\;\;\;\; - \pi_{i}^{0} + \rho_{i}^{0} + \pi_{i}^1 - \rho_{i}^1 \\
& = \| r(B) - \rho \|\ - 2\pi_{i}^{0} + 2\rho_{i}^{0} \\
& < \| r(A) - \rho \|\ - 2\pi_{i}^{0} + 2\rho_{i}^{0} \\
& = \sum_{i' \neq i}\sum_{j}|r_{i'}^{j}(A) - \rho_{i'}^{j}| + (r_i^0(A) - \rho_{i}^{0}) + (\rho_{i}^1 - r_i^1(A)) - 2\pi_{i}^{0} + 2\rho_{i}^{0} \\
& = \sum_{i' \neq i}\sum_{j}|r_{i'}^{j}(A) - \rho_{i'}^{j}| + (r_i^0(A) - \pi_{i}^{0}) + (\pi_{i}^1 - r_i^1(A)) \\
& \;\;\;\;\;\;\;\; + \pi_{i}^{0} - \rho_{i}^{0} - \pi_{i}^1 + \rho_{i}^1 - 2\pi_{i}^{0} + 2\rho_{i}^{0} \\
& = \| r(A) - \pi \|
\end{array}$

\item Case  4: $r_i^0(B) \leq \pi_i^0 \leq r_i^0(A) \leq \rho_i^0$.

$\begin{array}{ll} 
 \| r(B) - \pi\|\ & = \sum_{i' \neq i}\sum_{j}|r_{i'}^{j}(B) - \pi_{i'}^{j}| + (\pi_{i}^{0} - r_i^0(B)) + (r_i^1(B) - \pi_{i}^1) \\
& = \sum_{i' \neq i}\sum_{j}|r_{i'}^{j}(B) - \rho_{i'}^{j}| + (\rho_{i}^{0} - r_i^0(B)) + (r_i^1(B) - \rho_{i}^1) \\
& \;\;\;\;\;\;\;\; \pi_{i}^{0} - \rho_{i}^{0} - \pi_{i}^1 + \rho_{i}^1 \\
& = \| r(B) - \rho \|\ + 2\pi_{i}^{0} - 2\rho_{i}^{0} \\
& < \| r(A) - \rho \|\ + 2\pi_{i}^{0} - 2\rho_{i}^{0} \\
& = \sum_{i' \neq i}\sum_{j}|r_{i'}^{j}(A) - \rho_{i'}^{j}| + (\rho_{i}^{0} - r_i^0(A)) + (r_i^1(A) - \rho_{i}^1) + 2\pi_{i}^{0} - 2\rho_{i}^{0} \\
& = \sum_{i' \neq i}\sum_{j}|r_{i'}^{j}(A) - \rho_{i'}^{j}| + (r_i^0(A) - \pi_{i}^{0}) + (\pi_{i}^1 - r_i^1(A)) \\
& \;\;\;\;\;\;\;\; - 2r_i^0(A) + 2r_i^1(A) + \pi_{i}^{0} + \rho_{i}^{0} - \pi_{i}^1 - \rho_{i}^1 + 2\pi_{i}^{0} - 2\rho_{i}^{0} \\
& =  \| r(A) - \pi \| - 4r_i^0(A) + 4\pi_{i}^{0} \\
& \leq \| r(A) - \pi \|
\end{array}$

\item Case 5: $\pi_i^0 \leq r_i^0(B) < r_i^0(A) \leq \rho_i^0$.

$\begin{array}{ll} 
 \| r(B) - \pi\|\ & = \sum_{i' \neq i}\sum_{j}|r_{i'}^{j}(B) - \pi_{i'}^{j}| + (r_i^0(B) - \pi_{i}^{0}) + (\pi_{i}^1 - r_i^1(B)) \\
& = \sum_{i' \neq i}\sum_{j}|r_{i'}^{j}(B) - \rho_{i'}^{j}| + (\rho_{i}^{0} - r_i^0(B)) + (r_i^1(B) - \rho_{i}^1) \\
& \;\;\;\;\;\;\;\; + 2r_i^0(B) - 2r_i^1(B)  - \pi_{i}^{0} - \rho_{i}^{0} + \pi_{i}^1 + \rho_{i}^1 \\
& = \| r(B) - \rho \|\ + 4r_i^0(B) - 2\pi_{i}^{0} - 2\rho_{i}^{0} \\
& < \| r(A) - \rho \|\ + 4r_i^0(B) - 2\pi_{i}^{0} - 2\rho_{i}^{0} \\
& = \sum_{i' \neq i}\sum_{j}|r_{i'}^{j}(A) - \rho_{i'}^{j}| + (\rho_{i}^{0} - r_i^0(A)) + (r_i^1(A) - \rho_{i}^1) + 4r_i^0(B) - 2\pi_{i}^{0} - 2\rho_{i}^{0} \\
& = \sum_{i' \neq i}\sum_{j}|r_{i'}^{j}(A) - \rho_{i'}^{j}| + (r_i^0(A) - \pi_{i}^{0}) + (\pi_{i}^1 - r_i^1(A)) \\
& \;\;\;\;\;\;\;\; + 4r_i^0(B) - 2r_i^0(A) + 2r_i^1(A) + \pi_{i}^{0} + \rho_{i}^{0} - \pi_{i}^1 - \rho_{i}^1 - 2\pi_{i}^{0} - 2\rho_{i}^{0} \\
& =  \| r(A) - \pi \| + 4r_i^0(B) - 4r_i^0(A) \\
& \leq \| r(A) - \pi \|
\end{array}$

\item Case  6: $r_i^0(B) < r_i^0(A) \leq \pi_i^0 < \rho_i^0$.

$\begin{array}{ll} 
 \| r(B) - \pi\|\ & = \sum_{i' \neq i}\sum_{j}|r_{i'}^{j}(B) - \pi_{i'}^{j}| + (\pi_{i}^{0} - r_i^0(B)) + (r_i^1(B) - \pi_{i}^1) \\
& = \sum_{i' \neq i}\sum_{j}|r_{i'}^{j}(B) - \rho_{i'}^{j}| + (\rho_{i}^{0} - r_i^0(B)) + (r_i^1(B) - \rho_{i}^1) \\
& \;\;\;\;\;\;\;\;  + \pi_{i}^{0} - \rho_{i}^{0} - \pi_{i}^1 + \rho_{i}^1 \\
& = \| r(B) - \rho \|\ + 2\pi_{i}^{0} - 2\rho_{i}^{0} \\
& < \| r(A) - \rho \|\ + 2\pi_{i}^{0} - 2\rho_{i}^{0} \\
& = \sum_{i' \neq i}\sum_{j}|r_{i'}^{j}(A) - \rho_{i'}^{j}| + (\rho_{i}^{0} - r_i^0(A)) + (r_i^1(A) - \rho_{i}^1) + 2\pi_{i}^{0} - 2\rho_{i}^{0} \\
& = \sum_{i' \neq i}\sum_{j}|r_{i'}^{j}(A) - \rho_{i'}^{j}| + (\pi_{i}^{0} - r_i^0(A)) + (r_i^1(A) - \pi_{i}^1) \\
& \;\;\;\;\;\;\;\; - \pi_{i}^{0} + \rho_{i}^{0} + \pi_{i}^1 - \rho_{i}^1 + 2\pi_{i}^{0} - 2\rho_{i}^{0} \\
& =  \| r(A) - \pi \|
\end{array}$
\end{itemize}

\noindent
Finally, we give an example showing that value monotonicity does not hold in the general case. First, we describe the set of attributes. We have one distinguished attribute $X_1$ with 5 possible values $x_1^1$, $x_1^2$, $x_1^3$, $x_1^4$, and $x_1^5$ and 64 groups of binary attributes, indexed with the pairs of integers $i, j \in \{1, 2, 3, 4\}$. These groups of attributes are denoted as $X_{(1, 1)}, X_{(1, 2)}, \ldots X_{(1, 8)}, X_{(2, 1)}, \ldots X_{(8, 8)}$. Each group contains some large number $\lambda$ of indistinguishable attributes, each having the same set of possible values $\{x_2^1, x_2^2\}$.
We have 16 alternatives $A_1, A_2, \ldots, A_8$, and $B_1, B_2, \ldots B_8$, and our goal is to select a subset of $k=8$ of them.

We start with describing these alternatives on binary attributes: each alternative $A_i$ has the value $x_2^1$ on all attributes $X_{(i, \cdot)}$ and the value $x_2^2$ on all the remaining ones; each alternative $B_i$ has the value $x_2^1$ on all attributes $X_{(\cdot, i)}$ and the value $x_2^2$ on all the remaining ones. For the binary attributes we set the target probabilities to $\pi_2^1 = \nicefrac{1}{8}$ and $\pi_2^2 = \nicefrac{7}{8}$. Due to this construction, we see that the only two subsets that perfectly agree with target distributions on each of binary attributes are $A = \{A_1, A_2, \ldots, A_8\}$ and $B = \{B_1, B_2, \ldots, B_8\}$. Indeed, every subset $S$ including $A_i$ and $B_j$, would have $r(S) \geq \nicefrac{1}{4}$ at least for one group of attributes $X_{(i, j)}$. Since $\lambda$ is large, we infer that, independently of what happens on the distinguished attribute $X_1$, the only possible winning committee is either $A = \{A_1, A_2, \ldots, A_8\}$ or $B = \{B_1, B_2, \ldots, B_8\}$.

Next, let us describe what happens on the attribute $X_1$. The vector $\langle r_1^j(A) \rangle$ is equal to $\langle r_1^j(A) \rangle = (\nicefrac{1}{2}, 0, \nicefrac{1}{2}, 0, 0)$. For the committee $B$, we have $\langle r_1^j(B) \rangle = (\nicefrac{1}{4}, \nicefrac{1}{4}, \nicefrac{1}{4}, \nicefrac{1}{8}, \nicefrac{1}{8})$, and the vector of target distributions for $X_1$ is equal $\pi_1 = (0, 0, \nicefrac{3}{8} + \epsilon, \nicefrac{5}{8} -\epsilon, 0)$. We can see that $\| r(A) - \pi\|\ = \nicefrac{1}{2} + \nicefrac{1}{8} - \epsilon + \nicefrac{5}{8} -\epsilon = 1.25 - 2\epsilon$. Since, $\| r(B) - \pi\|\ = \nicefrac{1}{4} + \nicefrac{1}{4} + \nicefrac{1}{8} + \epsilon + \nicefrac{4}{8} -\epsilon + \nicefrac{1}{8} = 1.25$, we get that $A$ is a winning committee. However, if we modify the target fractions so that $\rho_1 = (\nicefrac{1}{4}, 0, \nicefrac{9}{32} + \epsilon_1, \nicefrac{15}{32} -\epsilon_2, 0)$, we will get $\| r(A) - \rho\|\ = \nicefrac{1}{4} + \nicefrac{7}{32} - \epsilon_1 + \nicefrac{15}{32} -\epsilon_2 = \nicefrac{30}{32} - \epsilon_1 - \epsilon_2$ and $\| r(B) - \rho\|\ = \nicefrac{1}{4} + \nicefrac{1}{32} + \epsilon_1 + \nicefrac{11}{32} - \epsilon_2 + \nicefrac{1}{8} = \nicefrac{24}{32} + \epsilon_1 - \epsilon_2$, thus, $B$ is winning according to $\rho$. However, $B$ has lower representation of $x_1^1$ than $A$, and $\rho$ was obtained from $\pi$, by increasing the fraction of $\pi_1^1$. This completes the proof. 
\end{proof}

\begin{repproposition}{prop:fsdhondt}
Consider a candidate database that satisfies the full supply property. For any attribute $X_i$, any committee $A$ that maximises $\sum_{i, j} \pi_i^j\harmonic(r_i^j(A) \cdot k)$ is a d'Hondt committee for the single-attribute problem $(\{X_i\},D^{\downarrow X_i}, \pi_i, k)$, where  $D^{\downarrow X_i}$ is the projection of $D$ on $\{X_i\}$.
\end{repproposition}
\begin{proof}
The idea from the proof of \autoref{prop:fs} works also for this proposition. If there exists a committee $A$ which maximises $\sum_{i}\sum_{j} \pi_i^j\harmonic(r_i^j(A) \cdot k)$ and which is not a d'Hondt committee for the single-attribute problem $(\{X_i\},D^{\downarrow X_i}, \pi_i, k)$, then by \autoref{prop:dhondt_alternative_definition} there exists a committee $B$ such that \mbox{$\sum_{j} \pi_i^j\harmonic(r_i^j(B) \cdot k) > \sum_{j} \pi_i^j\harmonic(r_i^j(A) \cdot k)$}. Similarly as in the proof of \autoref{prop:fs}, it is possible to build a committee $D$ from $A$ and $B$ such that $\sum_{i}\sum_{j} \pi_i^j\harmonic(r_i^j(D) \cdot k) > \sum_{i}\sum_{j} \pi_i^j\harmonic(r_i^j(A) \cdot k)$, which gives a contradiction and completes the proof.  
\end{proof}

\begin{repproposition}{prop:multiAttributePropertiesdHondt}
Under the full supply assumption, non-reversal, house monotonicity, and value monotonicity with respect to every attribute are all satisfied by the multi-attribute d'Hondt method.
In the general case, non-reversal and house monotonicity are not satisfied. If $X_i$ is a binary variable, then value monotonicity with respect to $X_i$ is satisfied;
however it is not satisfied in the general case.
\end{repproposition}

\begin{proof}
Similarly as in the proof of \autoref{prop:multiAttributeProperties} we infer that the result for full supply assumption follows from \autoref{prop:fsdhondt} and from the fact that the respective properties holds in the single-attribute case.

In the general case, we give counterexamples. For non-reversal, the same example as in the proof of \autoref{prop:multiAttributeProperties} works also for the case of the multi-attribute d'Hondt method.

For house monotonicity we have two binary attributes and three candidates: $a$ with vector $(x_1^1, x_2^2)$, $b$ with vector $(x_1^2, x_2^1)$, and $c$ with vector $(x_1^2, x_2^2)$. We have a target distribution $\pi$ defined as follows: $\pi_1^1 = \pi_2^1 = 0.5 - \epsilon$ and $\pi_1^2 = \pi_2^2 = 0.5 + \epsilon$, for some small positive $\epsilon$. For $k = 1$ candidate $c$ should be selected, while for $k = 2$ committee $\{a, b\}$ is optimal.

Now, we prove that value monotonicity holds for binary domains.
Consider a binary attribute $X_i$, with $D_i = \{x_i^0, x_i^1\}$.
Assume that $\rho_{1}^0 > \pi_{1}^0$, and that for all $i' \neq i$ we have $\rho_{i'} =  \pi_{i'}$. Let $A$ be an committee maximising our metric for $\pi$ and, for the sake of contradiction, assume that for all committees $B$ maximising our metric for $\rho$ we have $r_i^0(B) < r_i^0(A)$. Let $B$ be such a committee.

\begin{align*}
&\sum_{i}\sum_{j} \pi_i^j\harmonic(r_i^j(B) \cdot k) = \sum_{i \neq 1}\sum_{j}\pi_i^j\harmonic(r_i^j(B) \cdot k) + \pi_1^0 \harmonic(r_1^0(B) \cdot k) + (1 - \pi_1^0)\harmonic(k - r_1^0(B) \cdot k) \\
         & \hspace{1cm} = \sum_{i \neq 1}\sum_{j}\rho_i^j\harmonic(r_i^j(B) \cdot k) + \rho_1^0 \harmonic(r_1^0(B) \cdot k) + (1 - \rho_1^0)\harmonic(k - r_1^0(B) \cdot k) \\
         & \hspace{2cm} + (\rho_1^0 - \pi_1^0) \cdot (\harmonic(k - r_1^0(B) \cdot k) - \harmonic(r_1^0(B) \cdot k)) \\
         & \hspace{1cm} > \sum_{i}\sum_{j} \rho_i^j\harmonic(r_i^j(A) \cdot k) + (\rho_1^0 - \pi_1^0) \cdot (\harmonic(k - r_1^0(B) \cdot k) - \harmonic(r_1^0(B) \cdot k)) \\
         & \hspace{1cm} \geq \sum_{i}\sum_{j} \rho_i^j\harmonic(r_i^j(A) \cdot k) + (\rho_1^0 - \pi_1^0) \cdot (\harmonic(k - r_1^0(A) \cdot k) - \harmonic(r_1^0(A) \cdot k)) \\
         & \hspace{1cm} = \sum_{i \neq 1}\sum_{j}\rho_i^j\harmonic(r_i^j(A) \cdot k) + \rho_1^0 \harmonic(r_1^0(A) \cdot k) + (1 - \rho_1^0)\harmonic(k - r_1^0(A) \cdot k) \\
         & \hspace{2cm} + (\rho_1^0 - \pi_1^0) \cdot (\harmonic(k - r_1^0(A) \cdot k) - \harmonic(r_1^0(A) \cdot k)) \\
         & \hspace{1cm} = \sum_{i \neq 1}\sum_{j}\pi_i^j\harmonic(r_i^j(A) \cdot k) + \pi_1^0 \harmonic(r_1^0(A) \cdot k) + (1 - \pi_1^0)\harmonic(k - r_1^0(A) \cdot k) \\
         & \hspace{1cm} = \sum_{i}\sum_{j} \pi_i^j\harmonic(r_i^j(A) \cdot k) \text{.}
\end{align*} 
We get that $B$ is better with respect to our metric than $A$ for $\pi$, a contradiction.

Finally, from the proof of \autoref{prop:multiAttributeProperties} we can reuse parts of the construction showing that value monotonicity does not hold in the general case. Let us recall that the construction there ensures that one of the two committees, $A = \{A_1, A_2, \ldots, A_8\}$ or $B = \{B_1, B_2, \ldots, B_8\}$, needs to be selected. Additionally we can have two attributes, $X_1$ and $X_2$, each with three possible values. These two attributes determine whether $A$ or $B$ is going to be selected. We select $A$ and $B$ so that:
\begin{align*}
&\langle r_1^j(A) \rangle = (0, 1, 0) \quad & \langle r_1^j(B) \rangle = (0, 0, 1) \\
&\langle r_2^j(A) \rangle = (\nicefrac{1}{8}, \nicefrac{7}{8}, 0) \quad & \langle r_1^2(B) \rangle = (0, 0, 1)\textrm{.}
\end{align*}
We set $\pi_1 = (0, 0.1, 0.9)$. Now, consider $\pi_2 = (0, 1, 0)$. For the two attributes the values of committees $A$ and $B$ are equal to:
\begin{align*}
\text{committee $A$}\colon  0.1 \cdot \harmonic(8) + \harmonic(7) \approx 2.86 \text{,} \quad
\text{committee $B$}\colon  0.9 \cdot \harmonic(8) \approx 2.44 \text{.}
\end{align*}
Consequently, $A$ will be selected by the multi-attribute d'Hondt method. Now, consider what happens when we change $\pi_2$ to $\rho_2 = (1, 0, 0)$. For the two attributes the values of committees $A$ and $B$ are now equal to: 
\begin{align*}
\text{committee $A$}\colon  0.1 \cdot \harmonic(8) + \harmonic(1) \approx 1.27 \text{,} \quad
\text{committee $B$}\colon  0.9 \cdot \harmonic(8) \approx 2.44 \text{.} 
\end{align*}
Yet, $B$ has lower representation of $x_2^0$ than $A$, and $\rho$ was obtained from $\pi$, by increasing the fraction of $\pi_2^0$. This shows that value monotonicity is not satisfied in the general case and completes the proof. 
\end{proof}

\end{document}